\definecolor{_gray}{RGB}{200,200,200}
\newtheorem{theorem}{Theorem}
\newtheorem{definition}{Definition}
\def\eqref#1{equation~\ref{#1}}
\def\1{\bm{1}}
\DeclareMathAlphabet{\mathsfit}{\encodingdefault}{\sfdefault}{m}{sl}
\SetMathAlphabet{\mathsfit}{bold}{\encodingdefault}{\sfdefault}{bx}{n}
\DeclareMathOperator*{\argmax}{arg\,max}
\DeclareMathOperator*{\argmin}{arg\,min}
\definecolor{cvprblue}{rgb}{0.21,0.49,0.74}
\title{Quantifying Task Priority for Multi-Task Optimization}
\author{Wooseong Jeong\\
KAIST\\
{\tt\small stk14570@kaist.ac.kr}
\and
Kuk-Jin Yoon\\
KAIST\\
{\tt\small kjyoon@kaist.ac.kr}\\
}
\begin{document}
\maketitle
\begin{abstract}
The goal of multi-task learning is to learn diverse tasks within a single unified network. As each task has its own unique objective function, conflicts emerge during training, resulting in negative transfer among them. Earlier research identified these conflicting gradients in shared parameters between tasks and attempted to realign them in the same direction. However, we prove that such optimization strategies lead to sub-optimal Pareto solutions due to their inability to accurately determine the individual contributions of each parameter across various tasks. In this paper, we propose the concept of task priority to evaluate parameter contributions across different tasks. To learn task priority, we identify the type of connections related to links between parameters influenced by task-specific losses during backpropagation. The strength of connections is gauged by the magnitude of parameters to determine task priority. Based on these, we present a new method named connection strength-based optimization for multi-task learning which consists of two phases. The first phase learns the task priority within the network, while the second phase modifies the gradients while upholding this priority. This ultimately leads to finding new Pareto optimal solutions for multiple tasks. Through extensive experiments, we show that our approach greatly enhances multi-task performance in comparison to earlier gradient manipulation methods.
\end{abstract} 
\section{Introduction}
\label{sec:intro}
Multi-task learning (MTL) is a learning paradigm that handles multiple different tasks in a single model \cite{caruana1997multitask}. Compared to learning tasks individually, MTL can effectively reduce the number of parameters, leading to less memory usage and computation with a higher convergence rate. Furthermore, it leverages multiple tasks as an inductive bias, enabling the learning of generalized features while reducing overfitting. Complex systems such as robot vision and autonomous driving require the ability to perform multiple tasks within a single system. Thus, MTL can be a first step in finding general architecture for computer vision.

A primary goal of MTL is minimizing \emph{negative transfer} \citep{crawshaw2020multi} and finding \emph{Pareto-optimal solutions} \citep{RN36} for multiple tasks. Negative transfer is a phenomenon where the learning of one task adversely affects the performance of other tasks. Since each task has its own objective, this can potentially result in a trade-off among tasks. A condition in which enhancing one task is not possible without detriment to another is called \emph{Pareto optimality}. A commonly understood cause of this trade-off is \emph{conflicting gradients} \citep{RN20} that arise during the optimization process. When the gradients of two tasks move in opposing directions, the task with larger magnitudes dominates the other, disrupting the search for Pareto-optimal solutions. The situation becomes more complex due to imbalances in loss scales across tasks. The way we weigh task losses is crucial for multi-task performance. When there is a significant disparity in the magnitudes of losses, the task with a larger loss would dominate the entire network. Hence, the optimal strategy for MTL should efficiently handle conflicting gradients across different loss scales.

Previous studies address negative transfer by manipulating gradients or balancing tasks' losses. Solutions for handling conflicting gradients are explored in \citep{RN36, RN20, RN18, senushkin2023independent}. These approaches aim to align conflicting gradients towards a cohesive direction within a shared network space. However, these techniques are not effective at preventing negative transfer, as they don't pinpoint which shared parameters are crucial for the tasks. This results in sub-optimal Pareto solutions for MTL, leading to pool multi-task performance. Balancing task losses is a strategy that can be applied independently from gradient manipulation methods. It includes scaling the loss according to homoscedastic uncertainty \citep{RN23}, or dynamically finding loss weights by considering the rate at which the loss decreases \citep{RN26}.

In this paper, we propose the concept of \emph{task priority} to address negative transfer in MTL and suggest \emph{connection strength} as a quantifiable measure for this purpose. The task priority is defined over shared parameters by comparing the influence of each task's gradient on the overall multi-task loss. This reveals the relative importance of shared parameters to various tasks. To learn and conserve the task priority throughout the optimization process, we propose the concept of \emph{task-specific connections} and their \emph{strength} in the context of MTL. A \emph{task-specific connection} denotes the link between shared and task-specific parameters during the backpropagation of each task-specific loss. The strength of this connection can be quantified by measuring the scale of the parameters involved. Based on the types of connections and their respective strengths, we apply two distinct optimization phases. The goal of the first phase is to find new Pareto-optimal solutions for multiple tasks by learning task priorities through the use of specific connection types. The second phase aims to maintain the task priorities learned from varying loss scales by quantifying the strength of these connections. Our method outperforms previous optimization techniques that relied on gradient manipulation, consistently discovering new Pareto optimal solutions for various tasks, thereby improving multi-task performance.

Our contributions are summarized as follows:
\begin{itemize}
    \item We propose the concept of task priority within a shared network to assess the relative importance of parameters across different tasks and to uncover the limitation inherent in traditional multi-task optimization.
    \item We reinterpret connection strength within the context of MTL to quantify task priority. Based on this reinterpretation, we propose a new multi-task optimization approach called connection strength-based optimization to learn and preserve task priorities.
    \item To demonstrate the robustness of our method, we perform extensive experiments.
    Our results consistently reveal substantial enhancements in multi-task performance when compared to prior research.
\end{itemize}

\section{Related Work}
\label{sec:related}
\textbf{Optimization for MTL} aims to mitigate negative transfer between tasks. Some of them \citep{RN19, RN36, RN20, RN18, liu2021towards, navon2022multi, senushkin2023independent} directly modify gradients to address task conflicts. MGDA \citep{RN19, RN36} views MTL as a multi-objective problem and minimizes the norm point in the convex hull to find a Pareto optimal set. PCGrad \citep{RN20} introduces the concept of conflicting gradients and employs gradient projection to handle them. CAGrad \citep{RN18} minimizes the multiple loss functions and regularizes the trajectory by leveraging the worst local improvement of individual tasks. Aligned-MTL \citep{senushkin2023independent} stabilize optimization by aligning the principal components of the gradient matrix. Recon \citep{guangyuan2022recon} uses an approach similar to Neural Architecture Search (NAS) to address conflicting gradients. Some approaches use normalized gradients \citep{RN24} to prevent spillover of tasks or assign stochasticity on the network's parameter based on the level of consistency in the sign of gradients \citep{RN21}. RotoGrad \citep{RN22} rotates the feature space of the network to narrow the gap between tasks. Unlike earlier methods that guided gradients towards an intermediate direction (as illustrated in \cref{fig:overview}(a)), our approach identifies task priority in shared parameters to update gradients, leading to finding new Pareto-optimal solutions.

\textbf{Scaling task-specific loss} largely influences multi-task performance since the task with a significant loss would dominate the whole training process and cause severe task interference. To address the task unbalancing problem in the training, some approaches re-weight the multi-task loss by measuring homoscedastic uncertainty \citep{RN23}, prioritizing tasks based on task difficulty \citep{RN25}, or balancing multi-task loss dynamically by considering the descending rate of loss \citep{RN26}. We perform extensive experiments involving different loss-scaling methods to demonstrate the robustness of our approach across various loss-weighting scenarios.

\textbf{MTL architectures} can be classified depending on the extent of network sharing across tasks. The shared trunk consists of a shared encoder followed by an individual decoder for each task \citep{RN51, RN52, RN49, RN50}. Multi-modal distillation methods \citep{RN9, RN29, RN32, pap} have been proposed, which can be used at the end of the shared trunk for distillation to propagate task information effectively. On the other hand, cross-talk architecture uses separate networks for each task and allows parallel information flow between layers \citep{RN43}. Our optimization approach can be applied to any model to mitigate task conflicts and enhance multi-task performance.

\begin{figure*}[t]
\vspace{-8pt}
\begin{center}
\includegraphics[width=0.99\linewidth]{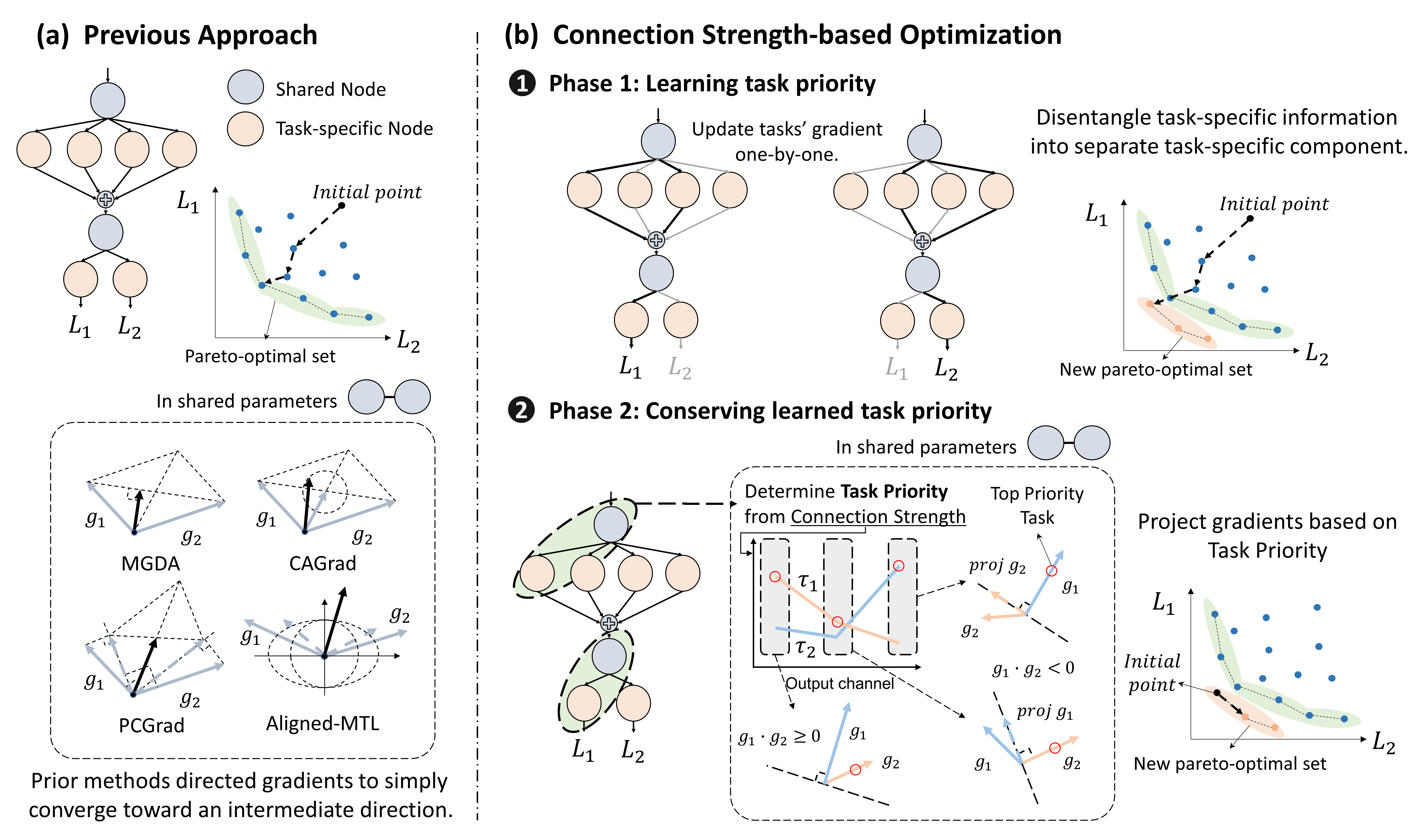}
\end{center}
\vspace{-10pt}
\caption{
    Overview of our connection strength-based optimization. (a) Previous methods \citep{RN36, RN20, RN18, senushkin2023independent} modify gradients in shared parameters to converge toward an intermediate direction without considering the task priority, which leads to sub-optimal Pareto solutions. (b) Our method divides the optimization process into two distinct phases. In Phase 1, task priority is learned through task-specific connections, leading to the identification of a new Pareto optimal solution. In Phase 2, task priority is gauged using the connection strength between shared and task-specific nodes. Subsequently, gradients in shared parameters are aligned with the direction of the highest-priority task's gradients. This phase ensures that priorities established in Phase 1 are maintained, thus reducing potential negative transfer.}
   \label{fig:overview}
\vspace{-5pt}
\end{figure*}
\section{Preliminaries}
\label{preliminaries}
\subsection{Problem Definition for Multi-task Learning}
\label{preliminaries:problem_definition}
In multi-task learning (MTL), the network learns a set of tasks $\mathcal{T}=\{\tau_1, \tau_2, ..., \tau_{\mathcal{K}}\}$ jointly, where $\mathcal{K}$ is the number of tasks. Each task $\tau_i$ has its own loss function $\mathcal{L}_{i}(\Theta)$ where $\Theta$ is the parameter of the network. The network parameter $\Theta$ can be classified into $\Theta = \{\Theta_{s}, \Theta_{1}, \Theta_{2},...,\Theta_{\mathcal{K}}\}$ where $\Theta_{s}$ is shared parameter across all tasks and $\Theta_i$ is task-specific parameters devoted to task $\tau_i$. Then, the objective function of multi-task learning is to minimize the weighted sum of all tasks' losses:
\begin{equation}
  \Theta^* = \argmin_{\Theta}\sum^{\mathcal{K}}_{i=1} w_i\mathcal{L}_i(\Theta_s, \Theta_i)
\end{equation}
The performance in multi-task scenarios is affected by the weighting $w_i$ of the task-specific loss $\mathcal{L}_i$.

\subsection{Prior Approach for Multi-Task Optimization}
From an optimization perspective, MTL seeks Pareto optimal solutions for multiple tasks.
\begin{definition}[Pareto optimality]
For a given network parameter $\Theta$, if we get $\Theta_{new}$ such that $\mathcal{L}_{i}(\Theta)>\mathcal{L}_{i}(\Theta_{new})$ holds for any task $\tau_i$, while ensuring that $\mathcal{L}_{j}(\Theta)\geq\mathcal{L}_{j}(\Theta_{new})$ is satisfied for all other tasks $\tau_j$ ($j\neq i$), then the situation is termed a Pareto improvement. In this context, $\Theta_{new}$ is said to dominate $\Theta$. A parameter $\Theta^{*}$ is Pareto-optimal if no further Pareto improvements are possible. A set of Pareto optimal solutions is called a Pareto frontier.
\end{definition}
Earlier research \citep{RN36, RN18, senushkin2023independent} interprets MTL in the context of multi-objective optimization, aiming for Pareto optimality. They present a theoretical analysis that demonstrates the convergence of optimization towards Pareto stationary points. Nevertheless, their analysis is constrained when applied to real-world scenarios due to its assumption of convex loss functions, which conflicts with the non-convex nature of neural networks. Also, their demonstration of optimization converging to Pareto stationary points doesn't necessarily guarantee reaching Pareto-optimal points, as the former are necessary but not sufficient conditions for Pareto optimality. We delineate their limitations theoretically by introducing the concept of task priority and empirically validate them by analyzing training loss and multi-task performance. On the other hand, Yu \etal. \citep{RN20} emphasize the conflicting gradients.
\begin{definition}[Conflicting gradients]
Conflicting gradients are defined in the shared space of the network. Denote the gradient of task $\tau_i$ with respect to the shared parameters $\Theta_s$ as $g_i=\nabla_{\Theta_s}\mathcal{L}_i(\Theta_s, \Theta_i)$. And $g_i$ and $g_j$ are gradients of a pair of tasks $\tau_i$ and $\tau_j$ where $i \neq j$. If $g_i \cdot g_j \leq 0$, then the two gradients are called conflicting gradients.
\end{definition}
Previous approaches \citep{RN36,RN20,RN18,senushkin2023independent} address the issue of conflicting gradients in shared parameters $\Theta_s$ by aligning the gradients in a consistent direction as shown in \cref{fig:overview}(a). Nonetheless, they face challenges in minimizing negative transfer, as they cannot discern which parameters in $\Theta_s$ are most important to tasks. We refer to the relative importance of a task in the shared parameter as task priority. Previous studies aligned gradients without taking into account task priority, inadvertently resulting in negative transfer and reduced multi-task performance. In contrast, we introduce the notion of connection strength to determine task priority in the shared space and propose new gradient update rules based on this priority.

\section{Method}
\label{method}
In this section, we introduce the concept of task priority to minimize negative transfer between tasks. To measure task priority, we establish connections in the network and assess their strength. Following that, we propose a novel optimization method for MTL termed connection strength-based optimization. Our approach breaks down the optimization process into two phases as shown in \cref{fig:overview}(b). In Phase 1, we focus on instructing the network to catch task-specific details by learning task priority. In Phase 2, task priority within the shared parameters is determined and project gradients to preserve the priority.

\subsection{Motivation: Task priority}
\label{method:motivation}
We propose a straightforward theoretical analysis of our approach, using the notation given in \cref{preliminaries}. Before diving deeper, we first introduce the definition of task priority.
\begin{definition}[Task priority]
\label{def:task_priority}
Assume that the task losses $\mathcal{L}_i$ for $i=1,2,...,\mathcal{K}$ are differentiable. Consider $\mathcal{X}^t$ as the input data at time $t$. We initiate with shared parameters $\Theta_s^{t}$ and task-specific parameters $\Theta_i^{t}$ with sufficiently small learning rate $\eta>0$. A subset of shared parameters at time $t$ is denoted as $\theta^t$, such that $\theta^t \subset \Theta_s^t$. For any task $\tau_i \in \mathcal{T}$, the task's gradient for $\theta^t$ is as follows:
\begin{equation}
\begin{split}
     \mathrm{g}_i = \nabla_{\theta^{t}} \mathcal{L}_{i}(\mathcal{X}^t,\tilde{\Theta}_{s}^t, \theta^t, \Theta_i^t)
    \end{split}
\end{equation}
where $\tilde{\Theta}_{s}^t$ represents the parameters that are part of $\Theta_s^t$ but not in $\theta^t$. For two distinct tasks $\tau_m, \tau_n \in \mathcal{T}$, if $\tau_m$ holds priority over $\tau_n$ in $\theta^t$, then the following inequality holds:
\begin{equation}
\begin{split}
     \sum_{i=1}^{\mathcal{K}}w_i\mathcal{L}_{i}(\tilde{\Theta}_{s}^t, \theta^t-\eta g_m, \Theta_i^t)
     \leq
     \sum_{i=1}^{\mathcal{K}}w_i\mathcal{L}_{i}(\tilde{\Theta}_{s}^t, \theta^t-\eta g_n, \Theta_i^t)
    \end{split}
\end{equation}
\end{definition}
Our motivation is to divide shared parameters $\Theta_s$ into subsets $\{\theta_{s,1}, \theta_{s,2},...,\theta_{s,\mathcal{K}}\}$ based on task priority. Specifically, $\theta_{s,i}$ represents a set of parameters that have a greater influence on task $\tau_i$ compared to other tasks. From the task priority, we can derive the following theorem.

\begin{restatable}[]{theorem}{theom}
\label{theorem1}
Updating gradients based on task priority for shared parameters $\Theta_s$ (update $g_i$ for each $\theta_{s,i}$) results in a smaller multi-task loss $\sum_{i=1}^{\mathcal{K}} w_i \mathcal{L}_i$ compared to updating the weighted summation of task-specific gradients $\sum_{i=1}^{\mathcal{K}} \nabla w_i \mathcal{L}_i$ without considering task priority.
\end{restatable}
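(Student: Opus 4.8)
The plan is to compare, for the same update step, the first-order decrease of the multi-task loss under two update rules: (i) the priority-based update, which applies $g_i = \nabla_{\theta_{s,i}}\mathcal{L}_i$ to each block $\theta_{s,i}$, and (ii) the vanilla update, which applies the full combined gradient $g = \sum_{i=1}^{\mathcal{K}} w_i \nabla_{\Theta_s}\mathcal{L}_i$ to all of $\Theta_s$. Since the learning rate $\eta$ is assumed sufficiently small, I would work to first order in $\eta$ via a Taylor expansion of $\sum_i w_i \mathcal{L}_i$ around the current parameters $\Theta_s^t$, so that the change in multi-task loss is $-\eta \langle \nabla_{\Theta_s}(\sum_i w_i\mathcal{L}_i),\ \Delta\Theta_s\rangle + O(\eta^2)$, and the theorem reduces to showing that the priority-based choice of $\Delta\Theta_s$ yields at least as large an inner product with the total gradient as the vanilla choice does. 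Concretely, writing $G := \nabla_{\Theta_s}\sum_i w_i\mathcal{L}_i$ and letting $G^{(j)}$ denote its restriction to block $\theta_{s,j}$, the vanilla step gives decrease $\eta\|G\|^2 = \eta\sum_j \|G^{(j)}\|^2$, while the priority step on block $j$ contributes $\eta\, w_{i(j)} \langle g_{i(j)},\, G^{(j)}\rangle$ where $i(j)$ is the priority task on that block.

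The second step is to invoke Definition~\ref{def:task_priority} to control each block's contribution. The definition says precisely that, restricted to $\theta_{s,j}$ (playing the role of $\theta^t$, with the remaining shared parameters held fixed as $\tilde\Theta_s^t$), stepping along $g_{i(j)}$ decreases $\sum_i w_i\mathcal{L}_i$ by at least as much as stepping along any other task's gradient $g_n$ on that block. I would apply this with $n$ ranging over all tasks and take a $w$-weighted combination, or more directly observe that the total-gradient restriction $G^{(j)}$ is itself the $w$-weighted sum of the individual task gradients on that block, $G^{(j)} = \sum_i w_i g_i^{(j)}$; hence the first-order decrease from moving block $j$ along $G^{(j)}$ is a convex-type combination of the decreases from moving along each $g_i^{(j)}$, each of which is dominated (to first order) by the decrease from moving along the priority direction $g_{i(j)}$. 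Summing over the blocks $j=1,\dots,\mathcal{K}$ then gives that the priority-based update achieves a first-order decrease at least as large as the vanilla update, which is the claim. A clean way to package this is: on each block, the priority task is by definition the argmin over tasks of the post-step multi-task loss, so replacing the "averaged" direction by the "best single" direction can only help, block by block.

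The main obstacle I expect is making the aggregation across blocks rigorous rather than heuristic. Definition~\ref{def:task_priority} is stated one block at a time with all other shared parameters frozen, whereas the two competing update rules move all blocks simultaneously; to first order in $\eta$ these decouple (the cross-terms between blocks are $O(\eta^2)$ by the bilinearity of the Hessian term), so I would need to state explicitly the "sufficiently small $\eta$" regime and argue that the $O(\eta^2)$ remainder is uniformly controlled, e.g. assuming each $\mathcal{L}_i$ is $C^2$ with locally bounded Hessian on the relevant neighborhood. A secondary subtlety is the precise comparison between "updating $g_i$ for each $\theta_{s,i}$" and "updating $\sum_i \nabla w_i\mathcal{L}_i$": one must be careful that the priority update uses the \emph{partial} gradient $g_i = \nabla_{\theta_{s,i}}\mathcal{L}_i$ on its own block, not the global gradient, and tie the definition of the partition $\{\theta_{s,i}\}$ (each $\theta_{s,i}$ consisting of parameters on which $\tau_i$ holds priority) back to the inequality in Definition~\ref{def:task_priority}. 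Once the first-order bookkeeping and the $\eta^2$ control are in place, the inequality is essentially immediate from the definition of task priority.
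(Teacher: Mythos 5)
Your proposal follows essentially the same route as the paper's proof: a block-by-block first-order Taylor comparison in which the decrease of $\sum_k w_k\mathcal{L}_k$ along the combined gradient $G^{(j)}=\sum_i w_i g_i^{(j)}$ is a $w$-weighted average of the per-task decreases, each dominated by the priority task's decrease via \cref{def:task_priority}, then summed over the blocks $\theta_{s,j}$ (the paper writes this as the sign of $-\eta(\mathrm{g}^t-\hat{\mathrm{g}}_i^t)^\top\hat{\mathrm{g}}_k^t$ summed with weights $w_k$, which is the identical inequality). One caution: drop the factor $w_{i(j)}$ from the priority step's contribution, as \cref{def:task_priority} and the paper's update $\hat{\theta}_{s,i}^{t+1}=\theta_{s,i}^{t}-\eta\hat{\mathrm{g}}_i^t$ both use the unweighted gradient; with that factor retained the claimed inequality can fail (e.g.\ when all task gradients coincide on a block), whereas your final ``convex combination dominated by the best single direction'' packaging is the correct, weight-free version.
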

The theorem suggests that by identifying the task priority within the shared parameter $\Theta_s$, we can further expand the known Pareto frontier compared to neglecting that priority.  A detailed proof and theoretical analysis are provided in \cref{Append:theoretical_analysis}. However, identifying task priority in real-world scenarios is highly computationally demanding. Because it requires evaluating priorities for each subset of the parameter $\Theta_s$ through pairwise comparisons among multiple tasks. Instead, we prioritize tasks based on connection strength for practical purposes.

\subsection{Type and Strength of Connection}
\label{method:connection}

If we think of each input and output of the network's component as a node, we can depict the computation flow by establishing connections between them, and then evaluate the strength of these connections to measure their interconnectedness. The idea of connection strength initially emerged in the field of network compression by pruning connections in expansive CNNs ~\citep{RN48}. This notion stems from the intuition that larger parameters have a greater influence on the model's output. Numerous studies \citep{comp1, comp2, comp3, comp4, comp5, comp6, comp7} have reinforced this hypothesis. In our study, we re-interpret this intuition for MTL to determine task priority in shared parameters of the network.

Before we dive in, we divide network connections based on the type of task. Conventionally, connection in a network refers to the connectivity between nodes, quantified by the magnitude of parameters. However, we regrouped the network connection based on which task's loss influences on the connection in backpropagation.
\vspace{-1pt}
\begin{definition}[Task-specific connection]
\label{def:connection}
The connection of task $\tau_i$ includes a set of parameters and their interconnections, specifically those involved in the backpropagation process related to the loss function $\mathcal{L}_i$ for task $\tau_i$.
\end{definition}
In the context of MTL, where each task has its own distinct objective function, diverse connections are formed during the backpropagation. Such connections are determined by the specific loss associated with each task, leading us to term them \emph{task-specific connections}. A set of shared and task-specific parameters, $\Theta_s$ and $\Theta_i$, establishes a unique connection.
\vspace{-1pt}
The connection strength can be measured by the scale of parameters, mirroring the conventional notion. In this instance, we employ task-specific batch normalization to determine the task priority of the output channel of the shared convolutional layer. To establish connection strength, we initiate with a convolutional layer where the input is represented as $x \in \textbf{R}^{N_{I}\times H \times W}$ and the weight is denoted by $W \in \textbf{R}^{N_{O}\times N_{I} \times K \times K}$. Here, $N_{I}$ stands for the number of input channels, $N_{O}$ for the number of output channels, and $K$ indicates the kernel size. Suppose we have output channel set $\mathcal{C}^{out}=\{c^{out}_{p}\}_{p=1}^{N_{O}}$ and input channel set $\mathcal{C}^{in}=\{c^{in}_{q}\}_{q=1}^{N_{I}}$. For any given pair of output and input channels $c^{out}_{p} \in \mathcal{C}^{out}$, $c^{in}_{q} \in \mathcal{C}^{in}$, the connection strength $s_{p,q}$ is defined as:
\begin{equation}
  s_{p, q} = \frac{1}{K^2}\sum_{m=0}^{K-1} \sum_{n=0}^{K-1} {W(c^{out}_p,c^{in}_q,m,n)}^2
  \label{eq:eq1}
\end{equation}
The variables $m$ and $n$ correspond to the indices of the convolutional kernel.
We explore the convolutional layer followed by task-specific batch normalization, which plays a key role in determining task priority for each output channel.
We revisit the equation for batch normalization with input $y$ and output $z$ of batch normalization \citep{ioffe2015batch}:
\begin{equation}
    z=\frac{\gamma}{\sqrt{Var[y]+\epsilon}}\cdot y+(\beta-\frac{\gamma E[y]}{\sqrt{Var[y]+\epsilon}})
    \label{eq:eq2}
\end{equation}
The coefficient of $y$ has a direct correlation with the kernel's relevance to the task since it directly modulates the output $y$. Therefore, for task $\tau_i$, we re-conceptualize the connection strength at the intersection of the convolutional layer and task-specific batch normalization in the following way:
\begin{equation}
\begin{split}
S_{p}^{\tau_{i}} = & \frac{\gamma_{\tau_{i},p}^2}{Var[y]_p+\epsilon}\cdot \sum_{q=1}^{N_I} s_{p, q}
    \label{eq:eq3}
    \end{split}
\end{equation}
where $\gamma_{\tau_{i},p}$ is a scale factor of the task-specific batch normalization.
$S_p^{\tau_i}$ measures the contribution of each output channel $c^{out}_p$ to the output of task $\tau_i$. 
However, it is not possible to directly compare $S_p^{\tau_i}$ across tasks because the tasks exhibit different output scales.
Hence, we employ a normalized version of connection strength that takes into account the relative scale differences among tasks:
\begin{equation}
\begin{split}
    \hat{S}_{p}^{\tau_i} = \frac{S_{p}^{\tau_i}}{\sum_{p=1}^{N_O} S_{p}^{\tau_i}}
\end{split}
\label{eq:eq5}
\end{equation}
Comparing \cref{eq:eq5} for each task allows us to determine task priority. Since normalized connection strength represents the relative contribution of each channel across the entire layer, using it to determine task priority also has the advantage of preventing a specific task from having priority over the entire layer. Connection strength depends on network parameters, necessitating design considerations based on the network structure. While this paper provides an example for convolutional layers, a similar application can be extended to transformer blocks or linear layers. In the following optimization, we employ task-specific connections and their strength to learn task priority and conserve it.

\subsection{Phase 1: Learning the task priority}
Our first approach is very simple and intuitive. Here, the notation follows \cref{preliminaries:problem_definition} and \cref{method:motivation}. For simplicity, we assume all tasks' losses are equally weighted $w_1=w_2=...=w_{\mathcal{K}}=1/{\mathcal{K}}$. According to conventional gradient descent (GD), we have 
\begin{align}
\begin{cases}
    \Theta_{s}^{t+1} = \Theta_{s}^{t}-\eta\sum_{i=1}^{\mathcal{K}}w_i\nabla_{\Theta_{s}^{t}}\mathcal{L}_{i}(\mathcal{X}^t,\Theta_s^t,\Theta_i^t)
    \\
    \Theta_{i}^{t+1} = \Theta_{i}^{t}-\eta \nabla_{\Theta_{i}^{t}} \mathcal{L}_{i}(\mathcal{X}^t,\Theta_{s}^t,\Theta_i^t)
\end{cases}
\label{eq:conventional_gd}
\end{align}
for $i=1,...,\mathcal{K}$. In standard GD, the network struggles to prioritize tasks since all tasks' gradients are updated simultaneously at each step. Instead, we sequentially update each task's gradients, as outlined below:
\begin{align}
\begin{cases}
    \Theta_{s}^{t+i/{\mathcal{K}}} = \Theta_{s}^{t+\frac{(i-1)}{\mathcal{K}}}-\eta \nabla_{\Theta_{s}^{t+\frac{(i-1)}{\mathcal{K}}}} \mathcal{L}_{i}(\mathcal{X}^t,\Theta_s^{t+\frac{(i-1)}{\mathcal{K}}},\Theta_i^{t})\\
    \Theta_{i}^{t+1} = \Theta_{i}^{t}-\eta \nabla_{\Theta_{i}^{t}} \mathcal{L}_{i}(\mathcal{X}^t,\Theta_{s}^{t+\frac{(i-1)}{\mathcal{K}}},\Theta_i^{t})
\end{cases}
\end{align}
for $i=1,...,\mathcal{K}$. The intuition behind this optimization is to let the network divide shared parameters $\Theta_{s}$ into $\{\theta_{s,1}, \theta_{s,2},...,\theta_{s,\mathcal{K}}\}$ based on task priority by updating each task-specific connection sequentially. After the initial gradient descent step modifies both $\Theta_s$ and $\Theta_1$, $\theta_{s,1}$ start to better align with $\tau_1$. In the second step, the network can determine whether $\theta_{s,1}$ would be beneficial for $\tau_2$. Throughout this process, task priorities are learned by updating the task's loss in turn. Recognizing task priority effectively enables the tasks to parse out task-specific information.

\DontPrintSemicolon
\begin{algorithm}[t]
\begin{algorithmic}
\caption{Connection Strength-based Optimization for Multi-task Learning}\label{alg:alg1}
\REQUIRE output channel set $\{c^{out}_{p}\}_{p=1}^{N_O}$, task set $\{\tau_i\}^\mathcal{K}_{i=1}$, loss function set $\{\mathcal{L}_i\}^\mathcal{K}_{i=1}$, channel group $\{CG_i\}^\mathcal{K}_{i=1}$, number of epochs $E$, current epoch $e$ \newline

Randomly choose $\mathcal{P} \sim U(0,1)$ \\
\tcp{Phase 1: Learning the task priority}
\eIf{$\mathcal{P} \geq e/E$ }{
    \For{$i$ $\gets$ $1$ $to$ $\mathcal{K}$}{
    \textbf{update:} $g_{i} \gets \nabla_{\theta} L_i$
    \tcp*{Update task's gradients one-by-one}}
    
}
{\tcp{Phase 2: Conserving the task priority}
    \textbf{Initialize} all $CG_i$ as empty set $\{$ $\}$ in the shared convolutional layer \\
    \For{$p$ $\gets$ 1 to $N_O$}{
        $\nu = \argmax_i \hat{S}_p^{\tau_i}$\\
        \tcp*{Determine the top priority task $\nu$}
        $CG_{\nu} = CG_{\nu} + \{c^{out}_p\}$\\
        \tcp*{Classify channel with task $\nu$}
    }

    \For{$i$ $\gets$ $1$ $to$ $\mathcal{K}$}{
        Let $\{G_{i,1}, ... , G_{i,\mathcal{K}}\}$ are gradients of $CG_{i}$ \\
        \For{$j$ $\gets$ $1$ $to$ $\mathcal{K}$ and $i \neq j$}{
            \If{$G_{i,i}\cdot G_{i,j} < 0$}{
                $G_{i,j} = G_{i,j}$ - $\frac{G_{i,i}\cdot G_{i,j}}{||G_{i,i}||^2}$$\cdot G_{i,i}$\\
                \tcp*{Project gradients with priorities}
            }
        }
    }
    \textbf{update:} $g_{final}$ = $\sum_{i=1}^{\mathcal{K}} g_{i}$\\
    \tcp*{Update modified gradients}
}

\end{algorithmic}
\end{algorithm}

\begin{table*}[h]
    \caption{The experimental results of different multi-task learning optimization methods on NYUD-v2 with HRNet-18. The weights of tasks are manually tuned. Experiments are repeated over 3 random seeds and average values are presented.}
    \vspace{-5pt}
    \centering
    \footnotesize
    \renewcommand\arraystretch{0.80}
    \begin{tabular}{ll@{\enspace}ccl@{\enspace}cccl@{\enspace}cccccl@{\enspace}c}
        \toprule
        \multicolumn{1}{c}{Tasks} && \multicolumn{2}{c}{Depth} && \multicolumn{3}{c}{SemSeg} && \multicolumn{5}{c}{Surface Normal} && \\
        \cmidrule(r){1-1} \cmidrule(r){3-4} \cmidrule(r){6-8} \cmidrule(r){10-14}
        
        \multicolumn{1}{c}{\multirow{2}{*}{Method}}            && \multicolumn{2}{c}{\begin{tabular}[c]{@{}c@{}}Distance\\ (Lower Better)\end{tabular}} && \multicolumn{3}{c}{\begin{tabular}[c]{@{}c@{}}(\%)\\ (Higher Better)\end{tabular}}        && \multicolumn{2}{c}{\begin{tabular}[c]{@{}c@{}}Angle Distance\\ (Lower Better)\end{tabular}} & \multicolumn{3}{c}{\begin{tabular}[c]{@{}c@{}}Within t degree (\%)\\ (Higher Better)\end{tabular}} && MTP \\
        
        && rmse & abs\_rel && mIoU & PAcc & mAcc && mean & median & 11.25 & 22.5 & 30 && $\triangle_m$ $\uparrow$(\%) \\ \toprule

        Independent && 0.667 & 0.186 && 33.18 & 65.04 & 45.07 && 20.75 & 14.04 & 41.32 & 68.26 & 78.04 && + 0.00 \\ \midrule

        GD && 0.594 & 0.150 && 38.67 & 69.16 & 51.12 && 20.52 & 13.46 & 42.63 & 69.00 & 78.42 && + 9.53\\

        MGDA \citep{RN36} && 0.603 & 0.159 && 38.89 & 69.39 & 51.53 && 20.58 & 13.56 & 42.28 & 68.79 & 78.33 && + 9.21\\ 
        
        PCGrad \citep{RN20} && 0.596 & 0.149 && 38.61 & 69.30 & 51.51 && 20.50 & 13.54 & 42.56 & 69.14 & 78.55 && + 9.40 \\
        
        CAGrad \citep{RN18} && 0.595 & 0.153 && 38.80 & 68.95 & 50.78 && 20.38 & 13.53 & 42.89 & 69.33 & 78.71 && + 9.84 \\ 

        Aligned-MTL \citep{senushkin2023independent} && 0.592 & 0.150 && 39.02 & 68.98 & 51.83 && 20.40 & 13.57 & 42.83 & 69.26 & 78.69 && + 10.17 \\

        \rowcolor{_gray}
        Ours && \textbf{0.565} & \textbf{0.148} && \textbf{41.10} & \textbf{70.37} & \textbf{53.74} &&  \textbf{19.54} & \textbf{12.45} & \textbf{46.11} & \textbf{71.54} & \textbf{80.12} && \textbf{+ 15.00} \\ \bottomrule
    \end{tabular}
    \label{tab:nyud_hrnet_tuned}
    \vspace{-10pt}
\end{table*}

\subsection{Phase 2: Conserving the task priority}
Due to negative transfer between tasks, task losses fluctuate during training, resulting in variations in multi-task performance.
Therefore, we introduce a secondary optimization phase to update gradients preserving task priority. For this phase, we employ the connection strength defined in \cref{eq:eq5}. Because of its normalization, individual tasks cannot be highly prioritized across the entire network. The top priority task $\tau_\nu$ for the channel $c^{out}_p$ is determined by evaluating the connection strength as follows:
\begin{equation}
    \nu = \argmax_i \hat{S}_p^{\tau_i}
    \label{eq:eq6}
\end{equation}
After determining the priority of tasks in each output channel, the gradient vector of each task is aligned with the gradient of the top priority task. In detail, we categorize output channel $\{c^{out}_p\}^{N_O}_{p=1}$ into channel groups $\{CG_i\}^\mathcal{K}_{i=1}$ based on their top priority task. The parameter of each channel group $CG_i$ corresponds to $\theta_{s,i}$ in $\Theta_{s} = \{\theta_{s,1}, \theta_{s,2},...,\theta_{s,\mathcal{K}}\}$. Let $\{G_{i,1}, G_{i,2}, ... , G_{i,\mathcal{K}}\}$ are task-specific gradients of $CG_{i}$. Then $G_{i,i}$ acts as the reference vector for identifying conflicting gradients. When another gradient vector $G_{i,j}$, where $i\neq j$, clashes with $G_{i,i}$, we adjust $G_{i,j}$ to lie on the perpendicular plane of the reference vector $G_{i,i}$ to minimize negative transfer. After projecting gradients based on task priority, the sum of them is finally updated.

In the final step, we blend two optimization stages by picking a number $\mathcal{P}$ from a uniform distribution spanning from 0 to 1. We define $E$ as the total number of epochs and $e$ as the current epoch. The choice of optimization for that epoch hinges on whether $\mathcal{P}$ exceeds $e/E$. As we approach the end of the training, the probability of selecting Phase 2 increases. This is to preserve the task priority learned in Phase 1 while updating the gradient in Phase 2. A detailed view of the optimization process is provided in \cref{alg:alg1}. The reason for mixing two phases instead of completely separating them is that the speed of learning task priority varies depending on the position within the network.

Previous studies \citep{RN36, RN18, RN20, senushkin2023independent} deal with conflicting gradients by adjusting them to align in the same direction. These studies attempt to find an intermediate point among gradient vectors, which often leads to negative transfer due to the influence of the dominant task. In comparison, our approach facilitates the network's understanding of which shared parameter holds greater significance for a given task, thereby minimizing negative transfer more efficiently. The key distinction between earlier methods and ours is the inclusion of task priority.
\begin{table*}[t]
    \vspace{-15pt}
    \caption{The experimental results of different multi-task learning optimization methods on PASCAL-Context with HRNet-18. The weights of tasks are manually tuned. Experiments are repeated over 3 random seeds and average values are presented.}
    \vspace{-5pt}
    \centering
    \footnotesize
    \renewcommand\arraystretch{0.80}
    \begin{tabular}{l@{\hspace{6pt}}l@{\hspace{6pt}}cc@{\hspace{2pt}}l@{\hspace{2pt}}c@{\hspace{4pt}}l@{\hspace{2pt}}cc@{\hspace{2pt}}l@{\hspace{4pt}}ccccc@{\hspace{6pt}}l@{\hspace{4pt}}c}
        \toprule
        \multicolumn{1}{c}{Tasks} && \multicolumn{2}{c}{SemSeg} && \multicolumn{1}{c}{PartSeg} && \multicolumn{2}{c}{Saliency} && \multicolumn{5}{c}{Surface Normal} \\
        \cmidrule(l){1-1} \cmidrule(r){3-4} \cmidrule(r){6-6} \cmidrule(r){8-9} \cmidrule(r){11-15}
        \multicolumn{1}{c}{\multirow{2}{*}{Method}}            && \multicolumn{2}{c}{\begin{tabular}[c]{@{}c@{}}(Higher Better)\end{tabular}} && \multicolumn{1}{c}{\begin{tabular}[c]{@{}c@{}}(Higher Better)\end{tabular}}        && \multicolumn{2}{c}{\begin{tabular}[c]{@{}c@{}} (Higher Better)\end{tabular}} && \multicolumn{2}{c}{\begin{tabular}[c]{@{}c@{}}Angle Distance\\ (Lower Better)\end{tabular}} & \multicolumn{3}{c}{\begin{tabular}[c]{@{}c@{}}Within t degree (\%)\\ (Higher Better)\end{tabular}} && MTP \\
        
        && mIoU & PAcc && mIoU && mIoU & maxF && mean & median & 11.25 & 22.5 & 30 && $\triangle_m$ $\uparrow$(\%) \\ \toprule

        Independent && 60.30 & 89.88 && 60.56 && 67.05 & 78.98 && 14.76 & 11.92 & 47.61 & 81.02 & 90.65 && + 0.00 \\ \midrule

        GD && 62.17 & 90.27 && 61.15 && 67.99 & 79.60 && 14.70 & 11.81 & 47.55 & 80.97 & 90.56 && + 1.47\\ 

        MGDA \citep{RN36} && 61.75 & 89.98 && 61.69 && 67.32 & 78.98 && 14.77 & 12.22 & 47.02 & 80.91 & 90.14 && + 1.15 \\ 
        
        PCGrad \citep{RN20} && 62.47 & 90.57 && 61.46 && 67.86 & 79.38 && 14.59 & 11.77 & 47.72 & 81.28 & 90.81 && + 1.86  \\ 
        
        CAGrad \citep{RN18} && 62.22 & 90.01 && 61.89 && 67.46 & 79.12 && 14.97 & 12.10 & 47.23 & 80.54 & 90.30 && + 1.14 \\ 

        Aligned-MTL \citep{senushkin2023independent} && 62.43 & 90.51 && 62.05 && 67.94 & 79.57 && 14.76 & 11.86 & 47.44 & 80.78 & 90.46 && + 1.83 \\

        \rowcolor{_gray}
        Ours && \textbf{63.86} & \textbf{90.65} && \textbf{63.05} && \textbf{68.30} & \textbf{79.26} &&  \textbf{14.33} & \textbf{11.45} & \textbf{49.08} & \textbf{81.86} & \textbf{91.05} && \textbf{+ 3.70} \\ \bottomrule
    \end{tabular}
    \label{tab:pascal_hrnet_tuned}
\vspace{-10pt}
\end{table*}


\begin{table*}[t]
\begin{minipage}{0.55\textwidth}
\centering
\scriptsize
\caption{The comparison of multi-task performance on Cityscapes. Ours demonstrate competitive results without any significant addition to the network's parameters.}
\vspace{-5pt}
\begin{tabular}{lcccccc}
    \toprule
     & \multicolumn{2}{c}{Segmentation} & \multicolumn{2}{c}{Depth} & & \\
    Method       & \multicolumn{2}{c}{(Higher Better)} & \multicolumn{2}{c}{(Lower Better)} & $\triangle_m$ $\uparrow$(\%) & \#P. \\
    \cmidrule(lr){2-3} \cmidrule(lr){4-5}
                 & mIoU   & Pix Acc  & Abs Err  & Rel Err  &         &        \\ \toprule
    Single-task  & 74.36  & 93.22    & 0.0128   & 29.98    &         & 190.59 \\
    Cross-Stitch \citep{RN31} & 74.05  & 93.17    & 0.0162   & 116.66   & - 79.04  & 190.59 \\
    RotoGrad \citep{RN22}    & 73.38  & 92.97    & 0.0147   & 82.31    & - 47.81  & 103.43 \\ \midrule
    GD           & 74.13  & 93.13    & 0.0166   & 116.00   & - 79.32  & 95.43  \\
    w/ Recon \citep{guangyuan2022recon}     & 71.17  & 93.21    & 0.0136   & 43.18    & - 12.63  & 108.44 \\ \midrule
    MGDA \citep{RN36}         & 70.74  & 92.19    & 0.0130   & 47.09    & - 16.22  & 95.43  \\
    w/ Recon \citep{guangyuan2022recon}     & 71.01  & 92.17    & 0.0129   & \textbf{33.41}    & \textbf{- 4.46}   & 108.44 \\ \midrule
    Graddrop \citep{RN21}    & 74.08  & 93.08    & 0.0173   & 115.79   & - 80.48  & 95.43  \\
    w/ Recon \citep{guangyuan2022recon}    & 74.17  & 93.11    & 0.0134   & 41.37    & - 10.69  & 108.44 \\ \midrule
    PCGrad \citep{RN20}       & 73.98  & 93.08    & 0.02     & 114.50   & - 78.39  & 95.43  \\
    w/ Recon \citep{guangyuan2022recon}     & 74.18  & 93.14    & 0.0136   & 46.02    & - 14.92  & 108.44 \\ \midrule
    CAGrad \citep{RN18}       & 73.81  & 93.02    & 0.0153   & 88.29    & - 53.81  & 95.43  \\
    w/ Recon \citep{guangyuan2022recon}    & 74.22  & 93.10    & 0.0130   & 38.27    & - 7.38   & 108.44 \\ \midrule \rowcolor{_gray}
    Ours         & \textbf{74.75}  & \textbf{93.39}    & \textbf{0.0125}   & 41.60    & - 10.08  & 95.48  \\ 
    \bottomrule
\end{tabular}
\label{tab:cityscape_segnet}
\end{minipage}
\begin{minipage}{.40\textwidth}

\centering
\subcaptionbox{NYUD-v2}{
\includegraphics[width=0.99\linewidth]{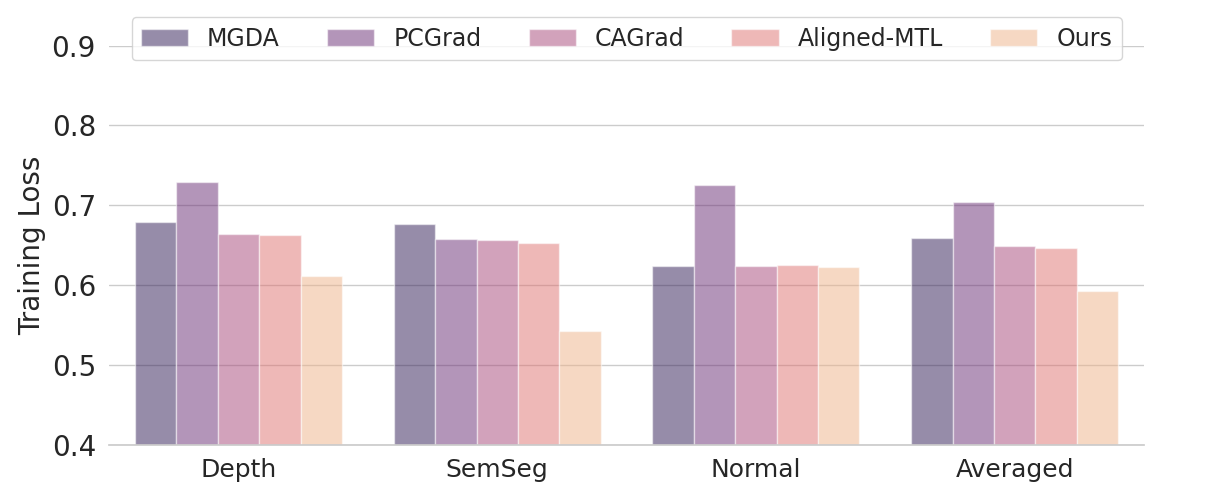}}

\subcaptionbox{PASCAL-Context}{
\includegraphics[width=0.99\linewidth]{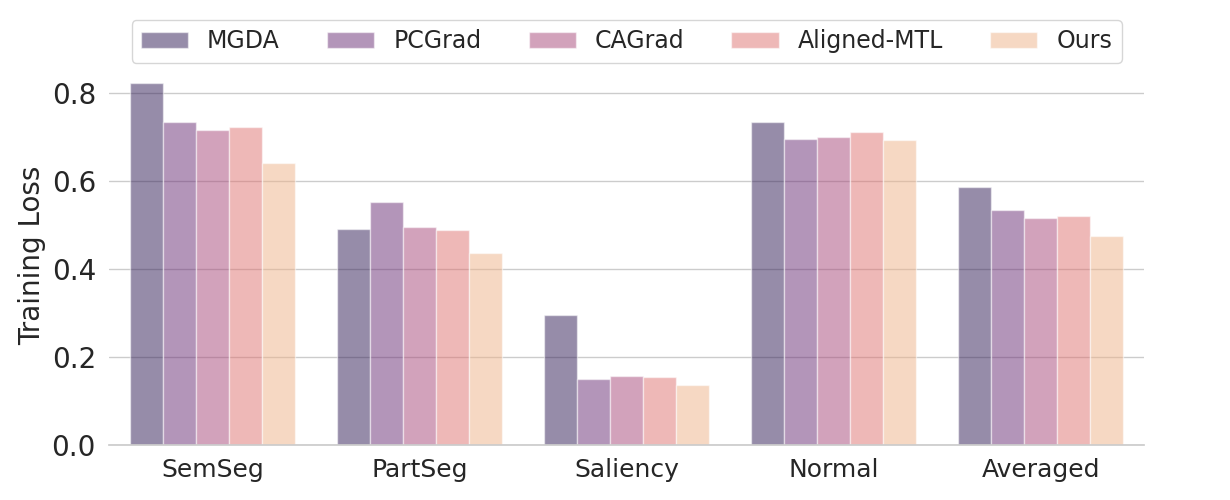}}
\vspace{-5pt}
\captionof{figure}{The comparison of training losses on the NYUDv2 and PASCAL-Context. Ours find a new Pareto optimal solution for multiple tasks. \label{fig:loss}}
\end{minipage}
\vspace{-10pt}
\end{table*}

\section{Experiments}
\subsection{Experimental Setup}
\noindent
\textbf{Datasets.} Our method is evaluated on three multi-task datasets: NYUD-v2 \citep{RN15}, PASCAL-Context \citep{mottaghi2014role}, and Cityscapes \citep{cordts2016cityscapes}. These datasets contain different kinds of vision tasks. NYUD-v2 contains 4 vision tasks: Our evaluation is based on depth estimation, semantic segmentation, and surface normal prediction, with edge detection as an auxiliary task. PASCAL-Context contains 5 tasks: We evaluate semantic segmentation, human parts estimation, saliency estimation, and surface normal prediction, with edge detection as an auxiliary task. Cityscapes contains 2 tasks: We use semantic segmentation and depth estimation.

\noindent
\textbf{Baselines.} We conduct extensive experiments with the following baselines: 1) single-task learning: training each task separately; 2) GD: simply updating all tasks' gradients jointly without any manipulation; 3) multi-task optimization methods with gradient manipulation: MGDA \citep{RN36}, PCGrad \citep{RN20}, CAGrad \citep{RN18}, Aligned-MTL \citep{senushkin2023independent}; 3) loss scaling methods: We consider 4 types of loss weighting where two of them are fixed during training and the other two use dynamically varying weights. Static setting includes equal loss: all tasks are weighted equally; manually tuned loss: all tasks are weighted manually following works in \citep{RN29, RN32}. Dynamic setting includes uncertainty-based approach \citep{RN23}: tasks' weights are determined dynamically based on homoscedastic uncertainty; DWA \citep{RN26}: tasks' losses are determined considering the descending rate of loss to determine tasks' weight dynamically. 4) Architecture design methods including NAS-like approaches: Cross-Stitch \citep{RN31} architecture based on SegNet \citep{badrinarayanan2017segnet}; Recon \citep{guangyuan2022recon}: turn shared layers into task-specific layers when conflicting gradients are detected. All experiments are conducted 3 times with different random seeds for a fair comparison.

\noindent
\textbf{Evaluation Metrics.} To evaluate the multi-task performance (MTP), we utilized the metric proposed in \citep{RN2}. It measures the per-task performance by averaging it with respect to the single-task baseline b, as shown in $\triangle_m = (1/T)\sum_{i=1}^{T}(-1)^{l_i}(M_{m,i}-M_{b,i})/M_{b,i}$ where $l_i=1$ if a lower value of measure $M_i$ means better performance for task $i$, and 0 otherwise. We measured the single-task performance of each task $i$ with the same backbone as baseline $b$. To evaluate the performance of tasks, we employed widely used metrics. More details are provided in \cref{Append:exp_details}.

\begin{table*}[t]
\vspace{-15pt}
\begin{minipage}{0.50\textwidth}
\centering
\scriptsize
\caption{Comparison of multi-task performance using each phase individually, sequentially, and by the proposed mixing method on NYUD-v2.}
\vspace{-5pt}
\begin{tabular}{cc|ccc|cc}
    \toprule
    \multicolumn{2}{c|}{Phase} & Depth  & Seg & Norm  & MTP & Averaged     \\ \midrule
    \multicolumn{1}{c}{1} & 2 & rmse & mIoU & mean & $\triangle_m$ $\uparrow$ & Loss \\ \midrule
    \multicolumn{1}{c}{\checkmark} & &0.581 &40.36 & 19.55 &+ 13.44      &\textbf{0.5396}  \\
    \multicolumn{1}{c}{}  & \checkmark &0.597 &39.23 &20.39 &+ 10.32 & 0.6519  \\
    \multicolumn{1}{c}{$\checkmark_{seq}$}  & $\checkmark_{seq}$ &0.574 &40.38 &19.56 &+ 13.79 & 0.5788  \\
    \multicolumn{1}{c}{$\checkmark_{mix}$} & $\checkmark_{mix}$ &\textbf{0.565} &\textbf{41.10} &\textbf{19.54} &\textbf{+ 15.50} & 0.5942 \\
    \bottomrule
\end{tabular}
\label{tab:ablation}
\begin{subfigure}{\columnwidth}
\includegraphics[width=0.99\columnwidth]{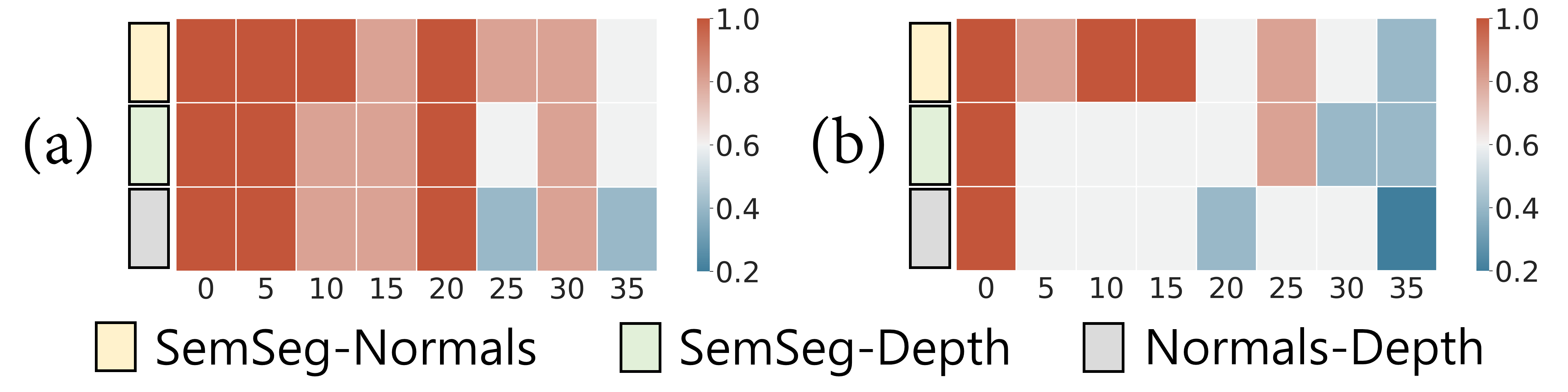}
\end{subfigure}
\captionof{figure}{Correlation of loss trends across tasks during the epochs.\\
a) Phase 1, b) Phase 2. \label{fig:corres_mat}}
\end{minipage}
\hfill
\vspace{-8pt}
\begin{minipage}{.48\textwidth}
\centering
{
\includegraphics[width=0.98\columnwidth]{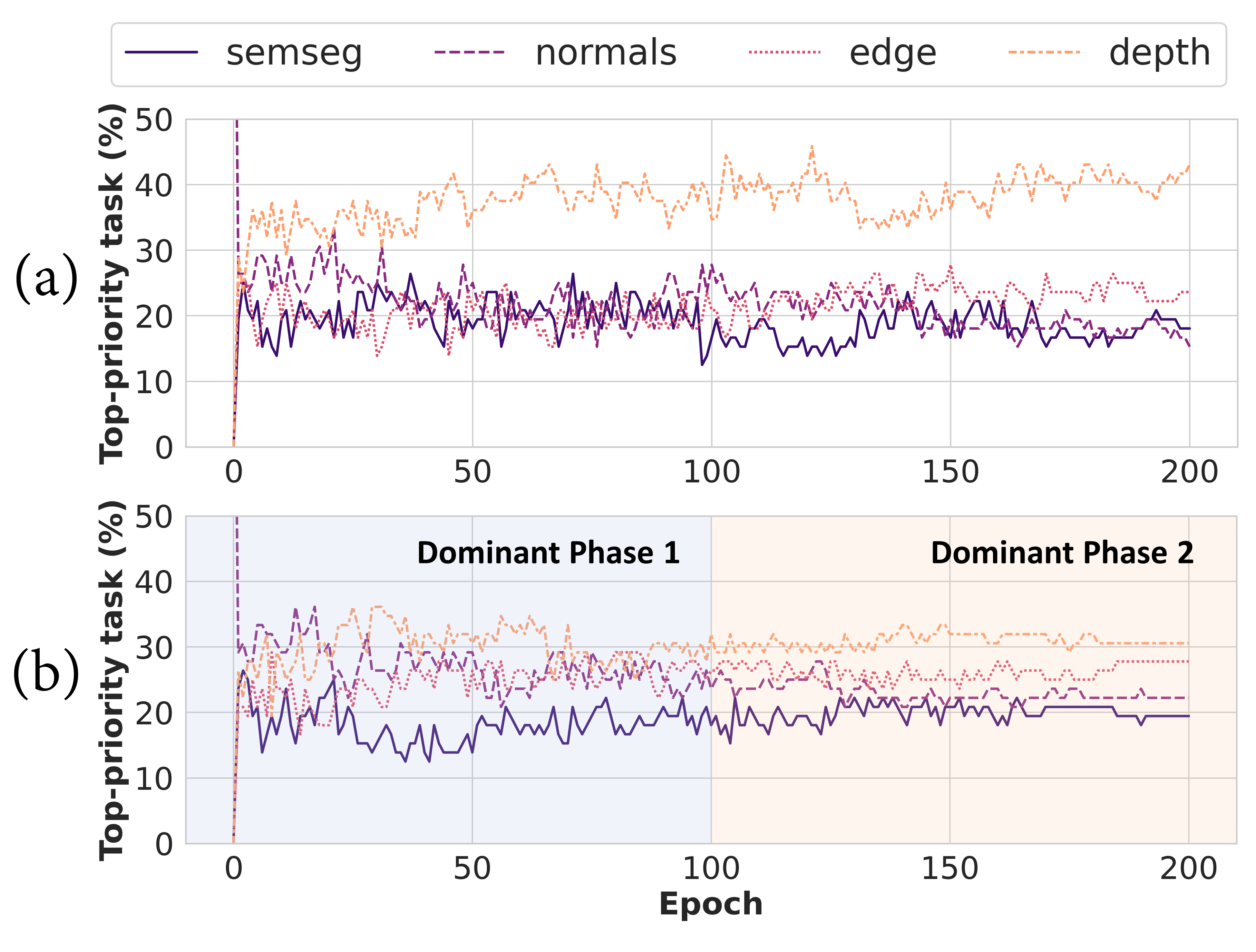}}
\vspace{-12pt}
\captionof{figure}{Visualization of the percentage of top-priority tasks over training epoch. a) Phase 1, b) Mixing Phase 1 and Phase 2 \label{fig:connection_strength}}
\end{minipage}
\vspace{-7pt}
\end{table*}


\subsection{Experimental Results}
\noindent
\textbf{Our method achieves the largest improvements in multi-task performance.} The main results on NYUD-v2, PASCAL-Context are presented in \cref{tab:nyud_hrnet_tuned} and \cref{tab:pascal_hrnet_tuned} respectively. For a fair comparison, we compare various optimization methods on exactly the same architecture with identical task-specific layers. Tasks' losses are tuned manually following the setting in \citep{RN29, RN32}. Compared to previous methods, our approach shows better performance on most tasks and datasets. It proves our method tends to induce less task interference.

\noindent
\textbf{Proposed optimization works robustly on various loss scaling methods.} To prove the generality of our method, we conduct extensive experiments on NYUD-v2 as shown in \cref{tab:nyud_hrnet_equal,tab:nyud_hrnet_tuned,tab:nyud_hrnet_homosce,tab:nyud_hrnet_dwa} (\cref{Append:nyud_hrnet18}) and PASCAL-Context as shown in \cref{tab:pascal_hrnet_equal,tab:pascal_hrnet_tuned,tab:pascal_hrnet_homosce,tab:pascal_hrnet_dwa} (\cref{Append:pascal_hrnet18}). In almost all types of loss scaling, our method shows the best multi-task performance. Unlike conventional approaches where the effectiveness of optimization varies depending on the loss scaling method, ours can be applied to various types of loss weighting and shows robust results.

\noindent
\textbf{Our method can be applied to various types of network architecture.} We use MTI-Net \citep{RN32} with HRNet-18 \citep{RN37} and ResNet-18 \citep{he2016deep} on NYUD-v2 and PASCAL-Context. HRNet-18 and ResNet-18 are pre-trained on ImageNet \citep{imagenet}. On the other hand, we use SegNet \citep{badrinarayanan2017segnet} for Cityscapes from scratch following the experiments setting in \citep{RN18, guangyuan2022recon}. Our optimization shows robustly better performance with different neural network architectures. The results with ResNet-18 are also experimented with various loss scaling as shown in \cref{tab:nyud_resnet_equal,tab:nyud_resnet_tuned,tab:nyud_resnet_homosce,tab:nyud_resnet_dwa} (\cref{Append:nyud_resnet18}).

\noindent
\textbf{Results are compatible with various architectures with fewer parameters.}
In \cref{tab:cityscape_segnet}, we evaluate our methods in different aspects by considering the various types of architecture. In the table, we include the results of Recon \citep{guangyuan2022recon} to show our method can mitigate negative transfer between tasks more parameter efficiently. Compared to Cross-Stitch \citep{RN31} and RotoGrad \citep{RN22}, ours show better multi-task performance with fewer parameters. Compared to Recon, our method is more parameter efficient as it increases the number of parameters by about 0.05\% with the use of task-specific batch normalization. Our method shows comparable performance on Cityscapes with fewer parameters.

\noindent
\textbf{Our method finds new Pareto optimal solutions for multiple tasks.} The final task-specific loss and their average are shown in \cref{fig:loss} for NYUD-v2 and PASCAL-Context. We compare our method with previous gradient manipulation techniques and repeat the experiments over 3 random seeds. For both NYUD-v2 and PASCAL-Context, ours show the lowest average training loss. When comparing each task individually, ours still shows the lowest final loss on every task. This provides proof that our method leads to the expansion of the Pareto frontier of previous approaches.
\subsection{Ablation Study}

\noindent
\textbf{Phase 1 learns task priority to find Pareto-optimal solutions.}
We perform ablation studies on each stage of optimization as shown in \cref{tab:ablation}. When solely utilizing phase 2, its performance has no big difference from the previous optimization techniques. However, when the first phase was used, the lowest averaged multi-task loss was achieved. Additionally, we show the correlation of loss trends in \cref{fig:corres_mat}. The closer the value is to 1, the more it means that the loss of the task pair decreases together. In the initial stages of optimization, phase 1 appears to align the loss more effectively than solely relying on phase 2. This shows that phase 1 aids the network in differentiating task-specific details, leading to the identification of optimal Pareto solutions.

\noindent
\textbf{During Phase 2, the task's priority is likely to be maintained.}
We evaluate the top priority task within the shared space of network using \cref{eq:eq6}. Subsequently, we visualized the percentage of top priority tasks in \cref{fig:connection_strength}. It illustrates how much of the output channels in the shared convolutional layer each task has priority over. We compared when we used only Phase 1 and when we used both Phase 1 and Phase 2. We found Phase 2 at the latter half of the optimization has an effect on conserving learned task priority. This method of priority allocation prevents a specific task from exerting a dominant influence over the entire network as discussed with \cref{eq:eq5}.

\noindent
\textbf{Mixing two phases shows higher performance than using each phase separately.} 
In \cref{tab:ablation}, using only Phase 1 results in a lower multi-task loss than when mixing the two phases. Nonetheless, combining both phases enhances multi-task performance. This improvement can be attributed to the normalized connection strength (refer to \cref{eq:eq5}), which ensures that no single task dominates the entire network during Phase 2. When the two phases are applied sequentially, performance declines compared to our mixing strategy. The reason for this performance degradation seems to be the application of Phase 1 at the later stages of Optimization. This continuously alters the established task priority, which in turn disrupts the gradient's proper updating based on the learned priority.
\vspace{-2pt}\section{Conclusion}
\vspace{-3pt}In this paper, we present a novel optimization technique for multi-task learning named connection strength-based optimization. By recognizing task priority within shared network parameters and measuring it using connection strength, we pinpoint which parameters are crucial for distinct tasks. By learning and preserving this task priority during optimization, we are able to identify new Pareto optimal solutions, boosting multi-task performance. We validate the efficacy of our approaches through comprehensive experiments and analysis.

\vspace{1pt}
\noindent
\textbf{Acknowledgements} This research was supported by National Research Foundation of Korea (NRF) grant funded by the Korea government (MSIT) (NRF2022R1A2B5B03002636) and the Challengeable Future Defense Technology Research, Development Program through the Agency For Defense Development (ADD) funded by the Defense Acquisition Program Administration (DAPA) in 2024 (No.912768601), and the Technology Innovation Program (1415187329, 20024355, Development of autonomous driving connectivity technology based on sensor-infrastructure cooperation) funded By the Ministry of Trade, Industry Energy(MOTIE, Korea).

{
    \small
    \bibliographystyle{ieeenat_fullname}
    \bibliography{main}
}

\clearpage
\newpage
\onecolumn
    \centering
    \Large
    \textbf{\thetitle}\\
    \vspace{0.5em}Supplementary Material \\
    \vspace{1.0em}
\setcounter{page}{1}
\normalsize
\raggedright
\appendix

\section{Theoretical Analysis}
\label{Append:theoretical_analysis}

\subsection{Proof of \cref{theorem1}}
\label{Append:theorem1}
\theom*

\begin{proof}
We start from shared parameters $\Theta_s$ and we can divide them with task priority.
\begin{equation}
\Theta_{s} = \{\theta_{s,1}, \theta_{s,2},...,\theta_{s,\mathcal{K}}\}
\label{Append:eq1}
\end{equation}

Let $\tilde{\Theta}_{s,i}$ represent the parameters in $\Theta_{s}$, excluding $\theta_{s,i}$.
For the sake of simplicity in our proof, we begin by focusing on a subset of shared parameters, specifically $\theta_{s,i}$, to demonstrate that accounting for task priority leads to a reduced multi-task loss compared to neglecting it. Subsequently, we will apply the same process to the remaining shared parameters to complete the proof.
Let $\hat{\mathrm{g}}_k^t$ be the gradient of $\theta_{s,i}^{t}$ for task $\tau_k$ as follows:
\begin{equation}
    \hat{\mathrm{g}}_k^t = \nabla_{\theta^{t}_{s,i}} \mathcal{L}_{k}(\mathcal{X}^t,\tilde{\Theta}_{s,i}^t,\theta_{s,i}^t,\Theta_k^t)
\label{Append:eq3}
\end{equation}

Previous optimization methods involving gradient manipulation update the weighted summation of task-specific gradients. Therefore, we can update $\theta^{t}_{s,i}$ to $\theta^{t+1}_{s,i}$ as follows:
\begin{equation}
\begin{split}
    \mathrm{g}^t = \sum_{j=1}^{\mathcal{K}}\nabla_{\theta^{t}_{s,i}} w_j\mathcal{L}_{j}(\mathcal{X}^t,\tilde{\Theta}_{s,i}^t,\theta_{s,i}^t,\Theta_i^t)
    = \sum_{j=1}^{\mathcal{K}}w_j \hat{\mathrm{g}}_{j}^t
    ,\qquad
    \theta_{s,i}^{t+1} = \theta_{s,i}^{t} - \eta\mathrm{g}^t
\end{split}
\label{Append:eq2}
\end{equation}
where $w_i$ is loss weights of $\tau_i$ and $\sum_{i=1}^{\mathcal{K}}w_i = 1$.

From the first order Taylor approximation of $\mathcal{L}_i$ for $\theta_{s,i}$, we have
\begin{equation}
    \mathcal{L}_{i}(\mathcal{X}^t,\tilde{\Theta}_{s,i}^t,\theta_{s,i}^{t+1},\Theta_i^t) = \mathcal{L}_{i}(\mathcal{X}^t,\tilde{\Theta}_{s,i}^t,\theta_{s,i}^t,\Theta_i^t)
    +(\theta_{s,i}^{t+1}-\theta_{s,i}^{t})^\top \mathrm{\hat{g}}_i^t + O(\eta^2)
\label{Append:eq4}
\end{equation}

On the other hand, when considering task priority, we can update $\theta_{s,i}^{t}$ to $\hat{\theta}_{s,i}^{t+1}$ using $\hat{\mathrm{g}}_i$ as follows:
\begin{equation}
    \hat{\theta}_{s,i}^{t+1} = \theta_{s,i}^{t} - \eta\hat{\mathrm{g}}_i^t
\end{equation}

From the first order Taylor approximation of $\mathcal{L}_i$ from $\theta_{s,i}^{t}$ to $\hat{\theta}_{s,i}^{t+1}$, we have
\begin{equation}
    \mathcal{L}_{i}(\mathcal{X}^t,\tilde{\Theta}_{s,i}^t,\hat{\theta}_{s,i}^{t+1},\Theta_i^t) = \mathcal{L}_{i}(\mathcal{X}^t,\tilde{\Theta}_{s,i}^t,\theta_{s,i}^t,\Theta_i^t)
    +(\hat{\theta}_{s,i}^{t+1}-\theta_{s,i}^{t})^\top \hat{\mathrm{g}}_i^t + O(\eta^2)
\label{Append:eq5}
\end{equation}

The difference between \cref{Append:eq4} and \cref{Append:eq5} is
\begin{align}
     \mathcal{L}_{i}(\mathcal{X}^t,&\tilde{\Theta}_{s,i}^t,\theta_{s,i}^{t+1},\Theta_i^t)-
     \mathcal{L}_{i}(\mathcal{X}^t,\tilde{\Theta}_{s,i}^t,\hat{\theta}_{s,i}^{t+1},\Theta_i^t) = 
     (\theta_{s,i}^{t+1}-\theta_{s,i}^{t})^\top \hat{\mathrm{g}}_i^t - 
     (\hat{\theta}_{s,i}^{t+1}-\theta_{s,i}^{t})^\top \hat{\mathrm{g}}_i^t\\
     &= -\eta (\mathrm{g}^t - \mathrm{\hat{g}}_i^t)^\top \hat{\mathrm{g}}_i^t  
     \label{Append:eq6_3}
\end{align}

Similar to \cref{Append:eq4} and \cref{Append:eq5}, we have the following two inequalities for the last of the losses $\mathcal{L}_j$ where $i\neq j$:
\begin{align}
    \mathcal{L}_{j}(\mathcal{X}^t,\tilde{\Theta}_{s,i}^t,\theta_{s,i}^{t+1},\Theta_i^t) = \mathcal{L}_{j}(\mathcal{X}^t,\tilde{\Theta}_{s,i}^t,\theta_{s,i}^t,\Theta_i^t)
    +(\theta_{s,i}^{t+1}-\theta_{s,i}^{t})^\top \hat{\mathrm{g}}_j^t + O(\eta^2)
    \label{Append:eq7_1}\\
    \mathcal{L}_{j}(\mathcal{X}^t,\tilde{\Theta}_{s,i}^t,\hat{\theta}_{s,i}^{t+1},\Theta_i^t) = \mathcal{L}_{j}(\mathcal{X}^t,\tilde{\Theta}_{s,i}^t,\theta_{s,i}^t,\Theta_i^t)
    +(\hat{\theta}_{s,i}^{t+1}-\theta_{s,i}^{t})^\top \hat{\mathrm{g}}_j^t + O(\eta^2)
    \label{Append:eq7_2}
\end{align}
The result in \cref{Append:eq7_1} corresponds to updating the weighted summation of task-specific gradients, while \cref{Append:eq7_2} reflects the result when updating gradients with consideration for task priority.

The difference between \cref{Append:eq7_1} and \cref{Append:eq7_2} is
\begin{align}
     \mathcal{L}_{j}(\mathcal{X}^t,&\tilde{\Theta}_{s,i}^t,\theta_{s,i}^{t+1},\Theta_i^t)-
     \mathcal{L}_{j}(\mathcal{X}^t,\tilde{\Theta}_{s,i}^t,\hat{\theta}_{s,i}^{t+1},\Theta_i^t) = 
     (\theta_{s,i}^{t+1}-\theta_{s,i}^{t})^\top \hat{\mathrm{g}}_j^t - 
     (\hat{\theta}_{s,i}^{t+1}-\theta_{s,i}^{t})^\top \hat{\mathrm{g}}_j^t\\
     &= -\eta (\mathrm{g}^t - \hat{\mathrm{g}}_i^t)^\top \hat{\mathrm{g}}_j^t  
     \label{Append:eq8_3}
\end{align}

If we sum \cref{Append:eq8_3} over all task losses $\{\mathcal{L}_k\}_{k=1}^{\mathcal{K}}$ along with their corresponding task-specific weights $\{w_k\}_{k=1}^{\mathcal{K}}$, the following result is obtained:
\begin{align}
     \sum_{k=1}^{\mathcal{K}} w_k \mathcal{L}_{k}&(\mathcal{X}^t,\tilde{\Theta}_{s,i}^t,\theta_{s,i}^{t+1},\Theta_i^t)-
     \sum_{k=1}^{\mathcal{K}} w_k \mathcal{L}_{k}(\mathcal{X}^t,\tilde{\Theta}_{s,i}^t,\hat{\theta}_{s,i}^{t+1},\Theta_i^t) \\
     &=-\eta \sum_{k=1}^{\mathcal{K}} w_k (\mathrm{g}^t - \hat{\mathrm{g}}_i^t)^\top \hat{\mathrm{g}}_k^t \\
     &=-\eta \sum_{k=1}^{\mathcal{K}} w_k (\sum_{j=1}^{\mathcal{K}}\nabla_{\theta^{t}_{s,i}} w_j\mathcal{L}_{j}(\mathcal{X}^t,\tilde{\Theta}_{s,i}^t,\theta_{s,i}^t,\Theta_i^t)
     -\nabla_{\theta^{t}_{s,i}} \mathcal{L}_{i}(\mathcal{X}^t,\tilde{\Theta}_{s,i}^t,\theta_{s,i}^t,\Theta_i^t))^\top \hat{\mathrm{g}}_k^t \\
     &=-\eta \sum_{k=1}^{\mathcal{K}} w_k \left(\sum_{j=1}^{\mathcal{K}}w_j\left(\nabla_{\theta^{t}_{s,i}} \mathcal{L}_{j}(\mathcal{X}^t,\tilde{\Theta}_{s,i}^t,\theta_{s,i}^t,\Theta_i^t)
     -\nabla_{\theta^{t}_{s,i}} \mathcal{L}_{i}(\mathcal{X}^t,\tilde{\Theta}_{s,i}^t,\theta_{s,i}^t,\Theta_i^t)\right)\right)^\top \hat{\mathrm{g}}_k^t 
     \label{Append:eq9_4}\\
     &\geq 0
     \label{Append:eq9_5}
\end{align}
The elements within the brackets of \cref{Append:eq9_4} represent a pairwise comparison of the changes in loss resulting from updating the gradients of each task. Thus, the inequality of \cref{Append:eq9_5} holds from \cref{def:task_priority} of task priority. The results indicate that taking task priority into account yields a lower multi-task loss compared to neglecting it. Following a similar process for all shared parameters $\Theta_s=\{\theta_{s,1}, \theta_{s,2},...,\theta_{s,\mathcal{K}}\}$, we can conclude considering task priority leads to the expansion of the known Pareto frontier.

\end{proof}

\subsection{Convergence Analysis}
This section provides theoretical analyses of the proposed optimization method, including a convergence analysis. The overview is as follows:
\begin{enumerate}[leftmargin=*]
\item We present the concept of Pareto-stationarity. Previous methods \citep{RN36, RN20, RN18, senushkin2023independent} have shown their convergence to Pareto stationary points in multi-task optimization. (See \cref{Append:pareto-stationarity}).
\item We offer a convergence analysis for Phase 1 of connection strength-based optimization. The analysis is conducted separately for shared and task-specific parameters. For task-specific parameters, it converges to the Pareto optimal point, similar to simple gradient descent. However, for shared parameters, Phase 1 doesn't ensure convergence to the Pareto optimal point; instead, it enhances the correlation between the gradients of tasks. (See \cref{Append:convergence_phase1})
\item We provide the convergence rate of Phase 1, with a focus on task-specific parameters. (See \cref{Append:convergence_rate_phase1})
\item We present a convergence analysis for Phase 2 of connection strength-based optimization, specifically focusing on the shared parameters of the network. Our analysis shows that Phase 2 converges to the Pareto optimal point, distinguishing it from previous works that converge to Pareto stationary points. (See \cref{Append:convergence_phase2})
\item We provide the convergence rate of Phase 2. (See \cref{Append:convergence_rate_phase2})
\end{enumerate}

\subsubsection{Pareto-stationarity}
\label{Append:pareto-stationarity}
Initially, we establish the concept of a Pareto stationary point. Previous methods \citep{RN36, RN20, RN18, senushkin2023independent} have shown their convergence to Pareto stationary points in multi-task optimization.

\begin{definition}[Pareto stationarity]
The network parameter $\Theta$ is defined with task-specific losses $\{\mathcal{L}_{i}\}^{\mathcal{K}}_{i=1}$. If the sum of weighted gradients $\sum_{i=1}^{\mathcal{K}} w_i \nabla_{\Theta} \mathcal{L}_i=0$, then the point is termed Pareto stationary, indicating the absence of a descent direction from that point.
\end{definition}

Previous research \citep{RN36, RN20, RN18, senushkin2023independent} has demonstrated their convergence to Pareto stationary points, which carries the risk of leading to sub-optimal solutions. This is due to the fact that Pareto-stationarity is a necessary condition for Pareto-optimality. In contrast, our work establishes convergence to the Pareto optimal point during Phase 2 of connection strength-based optimization. Phase 1 doesn't assure attainment of the Pareto optimal solution. Instead, it enhances the correlation between task gradients, amplifying the significance of task-specific parameters to learn task priorities.

\subsubsection{Convergence of Phase 1}
\label{Append:convergence_phase1}

In the subsequent convergence analysis, we omit the input $\mathcal{X}^t$ for clarity.
\begin{theorem}[Convergence of Phase 1]
\label{Append:conv_phase1}
Assume losses $\{\mathcal{L}_i\}_{i=1}^{\mathcal{K}}$ are convex and differentiable and the gradient of $\{\mathcal{L}_i\}_{i=1}^{\mathcal{K}}$ is Lipschitz continuous with constant $H>0$, i.e. $||\nabla \mathcal{L}_i (x) - \nabla \mathcal{L}_i (y)|| \leq H||x-y||$ for $i=1,2,...,\mathcal{K}$. Phase 1 of connection strength optimization, with a step size $\eta \leq \frac{1}{H}$, will converge to the Pareto optimal point for task-specific parameters $\{\Theta_i\}_{i=1}^{\mathcal{K}}$. For shared parameters $\Theta_s$ with a step size $\eta \leq \frac{2}{H}$, it does not guarantee convergence to the Pareto optimal point, but it optimizes in the direction to increase the correlation between tasks' gradients.
\end{theorem}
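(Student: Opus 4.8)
The plan is to follow the split suggested by the statement itself, because Phase 1 acts very differently on the two parameter groups: each task-specific block $\Theta_i$ is touched \emph{only} by its own gradient $\nabla_{\Theta_i}\mathcal{L}_i$, while the shared block $\Theta_s$ is hit \emph{sequentially} by $g_1, g_2, \dots, g_{\mathcal{K}}$ inside a single outer iteration. For the task-specific part I would first note that $\mathcal{L}_j$ does not depend on $\Theta_i$ when $j \ne i$, so the sub-update $\Theta_i^{t+1} = \Theta_i^t - \eta\nabla_{\Theta_i^t}\mathcal{L}_i$ is literally plain gradient descent on the convex, $H$-smooth map $\Theta_i \mapsto \mathcal{L}_i(\Theta_s, \Theta_i)$ (with $\Theta_s$ frozen during that sub-step). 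Then I apply the standard descent lemma: for $\eta \le 1/H$ one gets $\mathcal{L}_i(\Theta_i^{t+1}) \le \mathcal{L}_i(\Theta_i^t) - \tfrac{\eta}{2}\|\nabla_{\Theta_i}\mathcal{L}_i\|^2$, hence square-summable gradients and, by convexity and the usual telescoping argument, convergence of $\Theta_i^t$ to a minimizer of $\mathcal{L}_i$ over $\Theta_i$. Since moving $\Theta_i$ leaves every other loss unchanged, such a minimizer is Pareto-optimal in the $\Theta_i$-coordinates, which is the first assertion.

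For the shared parameters I would unroll one outer iteration into its $\mathcal{K}$ sub-steps $\Theta_s^{(0)} = \Theta_s^t$, $\Theta_s^{(i)} = \Theta_s^{(i-1)} - \eta\,\tilde g_i$ with $\tilde g_i = \nabla_{\Theta_s}\mathcal{L}_i(\Theta_s^{(i-1)})$. A first-order Taylor expansion with a Lipschitz remainder, exactly as in the proof of \cref{theorem1}, gives $\mathcal{L}_j(\Theta_s^{(i)}) - \mathcal{L}_j(\Theta_s^{(i-1)}) = -\eta\langle \tilde g_i, \tilde g_j\rangle + O(\eta^2)$; summing over the cycle and over $j$ shows that $\sum_k w_k\mathcal{L}_k$ is driven down only through the aggregate term $-\eta\|\sum_i w_i g_i\|^2$, so the only fixed-point condition of the shared-parameter dynamics is $\sum_i w_i g_i = 0$, i.e. Pareto-\emph{stationarity} and not Pareto-\emph{optimality}; moreover the equal-weight cyclic scheme need not settle on a single such point when $1/H < \eta \le 2/H$. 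For the positive "correlation" half I would expand the full cycle to second order, obtaining $\Theta_s^{t+1} = \Theta_s^t - \eta\sum_i g_i + \eta^2\sum_{i}\nabla^2\mathcal{L}_i\big(\sum_{j<i} g_j\big) + O(\eta^3)$, and then read the correction through the identity $\nabla_{\Theta_s}\langle g_i, g_j\rangle = \nabla^2\mathcal{L}_i\,g_j + \nabla^2\mathcal{L}_j\,g_i$: the sequential ordering injects precisely a component along the ascent direction of the pairwise inner-product functional $\sum_{i\ne j} w_i w_j\langle g_i, g_j\rangle$, which the simultaneous update $-\eta\sum_i w_i g_i$ does not have. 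Equivalently, I would track $c(t) := \sum_{i<j} w_i w_j\langle g_i(t), g_j(t)\rangle$ and show that its per-iteration increment equals a nonnegative alignment term plus remainders that vanish as $\eta \to 0$, so that to leading nontrivial order Phase 1 increases gradient correlation.

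The main obstacle is the shared-parameter correlation claim. The sequential ordering produces only the \emph{asymmetric} half-sum $\sum_{j<i}\nabla^2\mathcal{L}_i\,g_j$, whereas the symmetric correlation gradient is $\tfrac12\sum_{i\ne j}\nabla^2\mathcal{L}_i\,g_j$; to close the argument I would either average over a uniformly random task ordering (which matches how Phase 1 is actually run, since the order is effectively reshuffled across epochs and the sub-steps are mixed with Phase 2), turning the correction into $\tfrac{\eta^2}{2}\nabla_{\Theta_s}\big(\sum_{i<j}\langle g_i,g_j\rangle\big)$, or bound the leftover asymmetric part against $\|\nabla^2\mathcal{L}_i\|\le H$. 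I also need the $O(\eta^2)$ and $O(\eta^3)$ remainders to be uniformly controlled along the whole trajectory, which follows from $H$-smoothness together with boundedness of the iterates' gradients implied by convexity and the descent estimate, and I need to state the negative claim cleanly — that the fixed-point set of Phase 1 on $\Theta_s$ is exactly the Pareto-stationary set and that, unlike Phase 2, nothing forces the iterate into its Pareto-optimal subset.
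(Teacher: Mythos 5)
Your task-specific half is essentially the paper's own argument: freeze $\Theta_s$, observe that $\mathcal{L}_j$ is independent of $\Theta_i$ for $j\neq i$, apply the descent lemma with $\eta\le 1/H$ to get $\mathcal{L}_i(\Theta_i^{t+1})\le\mathcal{L}_i(\Theta_i^t)-\tfrac{\eta}{2}\|\nabla_{\Theta_i}\mathcal{L}_i\|^2$, sum with the weights, and conclude that all task-specific gradients vanish (the paper carries the weights through and states the threshold as $\eta\le 1/(Hw_i)$, but that is cosmetic). The shared-parameter half is where you genuinely diverge. The paper stays at first order in the sub-steps: it applies the quadratic upper bound to each sub-update $\Theta_s^{t+i/\mathcal{K}}=\Theta_s^{t+(i-1)/\mathcal{K}}-\eta w_i \mathrm{g}_i$, sums over all losses, and lands on the per-sub-step inequality $\sum_k w_k\mathcal{L}_k(\Theta_s^{t+i/\mathcal{K}})\le\sum_k w_k\mathcal{L}_k(\Theta_s^{t+(i-1)/\mathcal{K}})-\eta w_i\,\mathrm{g}_i\cdot(\mathrm{g}-w_i\mathrm{g}_i)$ for $\eta\le 2/H$; the "increases correlation" clause is then read off interpretively from the sign of the cross-term $\mathrm{g}_i\cdot(\mathrm{g}-w_i\mathrm{g}_i)$, with no second-order analysis. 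Your route — expanding the whole cycle to $O(\eta^2)$ and identifying the correction $\eta^2\sum_i\nabla^2\mathcal{L}_i\bigl(\sum_{j<i}g_j\bigr)$ as (half of) the ascent direction of $\sum_{i<j}\langle g_i,g_j\rangle$ — is a sharper and more honest formalization of that clause, and your diagnosis of the obstacle (the asymmetric half-sum) is exactly right; what it buys is an actual mechanism for the correlation increase rather than a one-step sign observation, at the cost of needing either order-randomization or a bound on the leftover asymmetric part. Two caveats: your symmetrization-by-random-ordering fix does not match the algorithm as written, since Algorithm 1 updates tasks in a fixed order $i=1,\dots,\mathcal{K}$ (the appendix only ablates over a few fixed orders); and your claim that the \emph{only} fixed-point condition is $\sum_i w_i g_i=0$ holds only to leading order in $\eta$ — a fixed point of the composed cycle map requires the net displacement over the cycle to vanish with gradients taken at the intermediate points, so strictly speaking the fixed-point set coincides with the Pareto-stationary set only up to $O(\eta)$ corrections. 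Neither caveat affects the validity of the statement as the paper actually asserts it, which is weaker than what you set out to prove.
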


\begin{proof}
We begin by conducting a quadratic expansion of the task-specific loss $\mathcal{L}_i(\Theta_s^t, \Theta_i^t)$ concerning the parameters $\Theta_s^t$ and $\Theta_i^t$ at each update step of Phase 1 for sequential tasks.
\begin{align}
    \mathcal{L}_i  (\Theta_s^{t+i/\mathcal{K}}, \Theta_i^{t+1})
    \leq &\mathcal{L}_i(\Theta_s^{t+(i-1)/\mathcal{K}}, \Theta_i^{t})\\
    &+\nabla_{\Theta_s^{t+(i-1)/\mathcal{K}}}\mathcal{L}_i(\Theta_s^{t+(i-1)/\mathcal{K}}, \Theta_i^{t})(\Theta_s^{t+i/\mathcal{K}}-\Theta_s^{t+(i-1)/\mathcal{K}})\\
    &+\frac{1}{2}\nabla_{\Theta_s^{t+(i-1)/\mathcal{K}}}^{2}\mathcal{L}_i(\Theta_s^{t+(i-1)/\mathcal{K}}, \Theta_i^{t})(\Theta_s^{t+i/\mathcal{K}}-\Theta_s^{t+(i-1)/\mathcal{K}})^{2}\\
    &+\nabla_{\Theta_i^{t}}\mathcal{L}_i(\Theta_s^{t+(i-1)/\mathcal{K}}, \Theta_i^{t})(\Theta_i^{t+1}-\Theta_i^{t})\\
    &+\frac{1}{2}\nabla_{\Theta_i^{t+(i-1)/\mathcal{K}}}^{2}\mathcal{L}_i(\Theta_s^{t+(i-1)/\mathcal{K}}, \Theta_i^{t})(\Theta_i^{t+1}-\Theta_i^{t})^{2}\\
    \leq &\mathcal{L}_i(\Theta_s^{t+(i-1)/\mathcal{K}}, \Theta_i^{t}) \label{Append:conv1_eq2}\\
    &+\nabla_{\Theta_s^{t+(i-1)/\mathcal{K}}}\mathcal{L}_i(\Theta_s^{t+(i-1)/\mathcal{K}}, \Theta_i^{t})(\Theta_s^{t+i/\mathcal{K}}-\Theta_s^{t+(i-1)/\mathcal{K}})\\
    &+\frac{1}{2} H (\Theta_s^{t+i/\mathcal{K}}-\Theta_s^{t+(i-1)/\mathcal{K}})^{2}\\
    &+\nabla_{\Theta_i^{t}}\mathcal{L}_i(\Theta_s^{t+(i-1)/\mathcal{K}}, \Theta_i^{t})(\Theta_i^{t+1}-\Theta_i^{t})\\
    &+\frac{1}{2}H(\Theta_i^{t+1}-\Theta_i^{t})^{2} \label{Append:conv1_eq2_last}
\end{align}

for $i=1,2,...,\mathcal{K}$. The inequality in \cref{Append:conv1_eq2} holds as $\nabla\mathcal{L}$ is Lipschitz continuous with constant $H$ which implies that $\nabla^{2}\mathcal{L}-HI\leq0$. We follow the gradient update rule for Phase 1 in connection strength-based optimization:
\begin{align}
  \Theta^{t+i/\mathcal{K}}=\Theta^{t+(i-1)/\mathcal{K}}-\eta w_i \nabla_{\Theta_s^{t+(i-1)/\mathcal{K}}}\mathcal{L}_i(\Theta_s^{t+(i-1)/\mathcal{K}}, \Theta_i^{t})
  \label{Append:conv1_eq3}\\
  \Theta_i^{t+1}=\Theta_i^{t} - \eta w_i \nabla_{\Theta_i^{t}} \mathcal{L}_i (\Theta_i^{t})
  \label{Append:conv1_eq4}
\end{align}

for $i=1,2,...,\mathcal{K}$. To simplify the proof, we partition the equation into two subsets—one for shared parameters $\Theta_s$ and the other for task-specific parameters $\Theta_i$. 

(i) For task-specific parameter $\Theta_i$, the following inequality holds:
\begin{align}
    \mathcal{L}_i  (\Theta_s^{t}, \Theta_i^{t+1}) \leq \mathcal{L}_i(\Theta_s^{t}, \Theta_i^{t})
    +\nabla_{\Theta_i^{t}}\mathcal{L}_i(\Theta_s^{t}, \Theta_i^{t})(\Theta_i^{t+1}-\Theta_i^{t})
    +\frac{1}{2}H(\Theta_i^{t+1}-\Theta_i^{t})^{2}
    \label{Append:conv1_eq5}
\end{align}

We denote $\mathrm{g}_i^t$ as the gradient of $\Theta_i^{t}$ for task $\tau_i$ as follows:
\begin{equation}
    \mathrm{g}_i^t = \nabla_{\Theta_i^{t}} \mathcal{L}_{i}(\Theta_s^{t}, \Theta_i^{t})
\end{equation}

If we substitute \cref{Append:conv1_eq4} into \cref{Append:conv1_eq5}, it becomes as follows:
\begin{align}
    \mathcal{L}_i(\Theta_s^{t}, \Theta_i^{t+1})
    \leq&\mathcal{L}_i(\Theta_s^{t}, \Theta_i^{t})
    - \eta w_i ||\mathrm{g}_i^{t}||^2 + \frac{\eta_i^2 w_i^2}{2}H ||{\mathrm{g}_i^{t}}||^2 \\
    =&\mathcal{L}_i(\Theta_s^{t}, \Theta_i^{t})
    - \eta w_i (1-\frac{1}{2} \eta w_i H) ||\mathrm{g}_i^{t}||^2\\
    \leq&\mathcal{L}_i(\Theta_s^{t}, \Theta_i^{t})
    - \frac{1}{2} \eta w_i ||\mathrm{g}_i^{t}||^2
    \label{Append:conv1_eq7}
\end{align}

\cref{Append:conv1_eq7} is valid when the step size $\eta$ is sufficiently small, specifically, when $\eta \leq \frac{1}{H w_i}$. When we sum \cref{Append:conv1_eq7} over all task losses $\{\mathcal{L}_k\}_{k=1}^{\mathcal{K}}$ along with their corresponding task-specific weights $\{w_k\}_{k=1}^{\mathcal{K}}$, the following result is obtained:
\begin{align}
    \sum_{i=1}^{\mathcal{K}} w_i \mathcal{L}_i(\Theta_s^{t}, \Theta_i^{t+1})
    \leq& \sum_{i=1}^{\mathcal{K}} w_i \mathcal{L}_i(\Theta_s^{t}, \Theta_i^{t})
    - \frac{1}{2} \sum_{i=1}^{\mathcal{K}} \eta w_i^2 ||\mathrm{g}_i^{t}||^2
    \label{Append:conv1_eq8}
\end{align}
According to \cref{Append:conv1_eq8}, we can infer that the application of Phase 1 in connection strength-based optimization can result in $\mathrm{g}_i=0$ for $i = {1,2,...,\mathcal{K}}$. The condition $g_i^t=0$ indicates that the proposed updating rule converges to the Pareto-optimal point for task-specific parameters $\Theta_i$ for $i = \{1,2,...,\mathcal{K}\}$.
\newline

(ii) For shared parameter $\Theta_s$, the following inequality holds:
\begin{align}
    \mathcal{L}_i  (\Theta_s^{t+i/\mathcal{K}}, \Theta_i^{t}) \leq \mathcal{L}_i(\Theta_s^{t+(i-1)/\mathcal{K}}, \Theta_i^{t})
    +&\nabla_{\Theta_s^{t+(i-1)/\mathcal{K}}}\mathcal{L}_i(\Theta_s^{t+(i-1)/\mathcal{K}}, \Theta_i^{t})(\Theta_s^{t+i/\mathcal{K}}-\Theta_s^{t+(i-1)/\mathcal{K}}) \label{Append:conv1_eq9}\\
    +&\frac{1}{2}H(\Theta_s^{t+i/\mathcal{K}}-\Theta_s^{t+(i-1)/\mathcal{K}})^{2}
    \label{Append:conv1_eq10}
\end{align}

In case (ii), we denote $\mathrm{g}_i^t$ as the gradient of $\Theta_s^{t}$ for task $\tau_i$, and $\mathrm{g}^t$ as the weighted sum of $\{\mathrm{g}_i^t\}_{i=1}^{\mathcal{K}}$ with $\{w_i\}_{i=1}^{\mathcal{K}}$ as follows:
\begin{equation}
    \mathrm{g}_i^t = \nabla_{\Theta_s^{t}} \mathcal{L}_{i}(\Theta_s^{t}, \Theta_i^{t})
    ,\qquad
    \mathrm{g}^t = \sum_{i=1}^{\mathcal{K}} w_i \nabla \mathcal{L}_{i}(\Theta_s^t, \Theta_i^t)
\end{equation}

If we substitute \cref{Append:conv1_eq3} into \cref{Append:conv1_eq9} and \cref{Append:conv1_eq10}, it becomes as follows:
\begin{align}
     \mathcal{L}_i  (\Theta_s^{t+i/\mathcal{K}}, \Theta_i^{t})
    \leq&\mathcal{L}_i(\Theta_s^{t+(i-1)/\mathcal{K}}, \Theta_i^{t})
    - \eta w_i ||\mathrm{g}_i^{t+(i-1)/\mathcal{K}}||^2 + \frac{\eta^2 w_i^2}{2}H ||{\mathrm{g}_i^{t+(i-1)/\mathcal{K}}}||^2 \label{Append:conv1_eq11}
\end{align}

Similarly, the quadratic expansion of $\mathcal{L}_j$ for $\Theta_s^{t+i/\mathcal{K}}$ when $i\neq j$ is as follows:
\begin{align}
    \mathcal{L}_j  (\Theta_s^{t+i/\mathcal{K}}, \Theta_j^{t}) 
    &\leq \mathcal{L}_j(\Theta_s^{t+(i-1)/\mathcal{K}}, \Theta_j^{t})
    +\nabla_{\Theta_s^{t+(i-1)/\mathcal{K}}}\mathcal{L}_j(\Theta_s^{t+(i-1)/\mathcal{K}}, \Theta_j^{t})(\Theta_s^{t+i/\mathcal{K}}-\Theta_s^{t+(i-1)/\mathcal{K}})\\
    &\quad +\frac{1}{2}H(\Theta_s^{t+i/\mathcal{K}}-\Theta_s^{t+(i-1)/\mathcal{K}})^{2}\\
    &\leq \mathcal{L}_j(\Theta_s^{t+(i-1)/\mathcal{K}}, \Theta_j^{t})
    - \eta w_i \mathrm{g}_i^{t+(i-1)/\mathcal{K}} \cdot \mathrm{g}_j^{t+(i-1)/\mathcal{K}}
    + \frac{\eta^2 w_i^2}{2} H ||{\mathrm{g}_i^{t+(i-1)/\mathcal{K}}}||^2
    \label{Append:conv1_eq18}
\end{align}

When we sum \cref{Append:conv1_eq11} and \cref{Append:conv1_eq18} over all task losses $\{\mathcal{L}_k\}_{k=1}^{\mathcal{K}}$ along with their corresponding task-specific weights $\{w_k\}_{k=1}^{\mathcal{K}}$, the following result is obtained:
\begin{align}
    \sum_{k=1}^{\mathcal{K}} w_k \mathcal{L}_k(\Theta_s^{t+i/\mathcal{K}}, \Theta_k^{t})
    &\leq \sum_{k=1}^{\mathcal{K}} w_k \mathcal{L}_k(\Theta_s^{t+(i-1)/\mathcal{K}}, \Theta_k^{t})
    - \eta w_i \sum_{k=1}^{\mathcal{K}} w_k \mathrm{g}_i^{t+(i-1)/\mathcal{K}} \cdot \mathrm{g}_k^{t+(i-1)/\mathcal{K}}
    + \frac{\eta^2 w_i^2}{2} H ||{\mathrm{g}_i^{t+(i-1)/\mathcal{K}}}||^2\\
    &= \sum_{k=1}^{\mathcal{K}} w_k \mathcal{L}_k(\Theta_s^{t+(i-1)/\mathcal{K}}, \Theta_k^{t})
    - \eta w_i \mathrm{g}_i^{t+(i-1)/\mathcal{K}} \cdot \mathrm{g}^{t+(i-1)/\mathcal{K}}
    + \frac{\eta^2 w_i^2}{2} H ||{\mathrm{g}_i^{t+(i-1)/\mathcal{K}}}||^2\\
    &\leq \sum_{k=1}^{\mathcal{K}} w_k \mathcal{L}_k(\Theta_s^{t+(i-1)/\mathcal{K}}, \Theta_k^{t})
    - \eta w_i \mathrm{g}_i^{t+(i-1)/\mathcal{K}} \cdot \mathrm{g}^{t+(i-1)/\mathcal{K}}
    + \eta w_i^2 ||{\mathrm{g}_i^{t+(i-1)/\mathcal{K}}}||^2
    \label{Append:conv1_eq21}\\
    &= \sum_{k=1}^{\mathcal{K}} w_k \mathcal{L}_k(\Theta_s^{t+(i-1)/\mathcal{K}}, \Theta_k^{t})
    - \eta w_i \mathrm{g}_i^{t+(i-1)/\mathcal{K}} \cdot (\mathrm{g}^{t+(i-1)/\mathcal{K}} -w_i \mathrm{g}_i^{t+(i-1)/\mathcal{K}})
    \label{Append:conv1_eq22}
\end{align}

\cref{Append:conv1_eq21} is valid when the step size $\eta$ is sufficiently small, specifically, when $\eta \leq \frac{2}{H}$. As shown in \cref{Append:conv1_eq22}, Phase 1 of connection strength-based optimization does not strictly ensure convergence. This is attributed to its sequential updating of task-specific connections, leading to fluctuations in their losses during training. Nevertheless, as illustrated in \cref{Append:conv1_eq22}, we can note that the optimization moves in the direction of minimizing the dot product between the gradient of the currently updated task $\mathrm{g}_i^{t+(i-1)/\mathcal{K}}$ and the weighted sum of gradients from the remaining losses $(\mathrm{g}^{t+(i-1)/\mathcal{K}} -w_i \mathrm{g}_i^{t+(i-1)/\mathcal{K}})$. This observation aligns with the experimental results presented in \cref{fig:corres_mat}. Phase 1 effectively increases the correlation between tasks in shared parameters $\Theta_s$, which exaggerates the role of task-specific parameters, allowing it to sufficiently grasp and establish task priorities.

\end{proof}

\subsubsection{Convergence rate of Phase 1}
\label{Append:convergence_rate_phase1}

\begin{theorem}[Convergence rate of Phase 1]
Assume losses $\{\mathcal{L}_i\}_{i=1}^{\mathcal{K}}$ are convex and differentiable and the gradient of $\{\mathcal{L}_i\}_{i=1}^{\mathcal{K}}$ is Lipschitz continuous with constant $H>0$, i.e. $||\nabla \mathcal{L}_i (x) - \nabla \mathcal{L}_i (y)|| \leq H||x-y||$ for $i=1,2,...,\mathcal{K}$. Then, in phase 1 of connection strength optimization with a step size $\eta \leq \frac{1}{H}$, the system will reach the Pareto optimal point for task-specific parameters $\{\Theta_i\}_{i=1}^{\mathcal{K}}$ at a rate of $O(1/T)$, where $T$ is the total number of iterations. This is guaranteed by the following inequality:

\begin{align}
    \min_{0\leq t \leq T} \sum_{\substack{k=1}}^{\mathcal{K}} w_k^2 ||\mathrm{g}_{k}^t||^{2}
    \leq& \frac{2}{\eta T} (\mathcal{L} (\Theta^{0}) - \mathcal{L} (\Theta^{*}))
\end{align}
where $\Theta^{*}$ represents the converged parameters, and $T$ is the total number of iterations.
\end{theorem}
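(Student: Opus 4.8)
## Proof Proposal for the Convergence Rate of Phase 1

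The plan is to build directly on the per-step descent inequality already established in the proof of \cref{Append:conv_phase1}, specifically \cref{Append:conv1_eq7}, and then telescope it over the iteration horizon $T$. First I would recall that for task-specific parameters, one update step of Phase 1 yields
\begin{equation}
\mathcal{L}_i(\Theta_s^{t}, \Theta_i^{t+1}) \leq \mathcal{L}_i(\Theta_s^{t}, \Theta_i^{t}) - \tfrac{1}{2}\eta w_i \|\mathrm{g}_i^{t}\|^2,
\end{equation}
valid under the step-size condition $\eta \leq \frac{1}{H w_i}$, which is implied by $\eta \leq \frac{1}{H}$ since $w_i \leq 1$. Multiplying by $w_i$ and summing over $i = 1,\dots,\mathcal{K}$ gives exactly \cref{Append:conv1_eq8}: the weighted multi-task objective $\mathcal{L}(\Theta^t) = \sum_{i=1}^{\mathcal{K}} w_i \mathcal{L}_i(\Theta_s^t,\Theta_i^t)$ decreases by at least $\tfrac{1}{2}\eta \sum_{i=1}^{\mathcal{K}} w_i^2 \|\mathrm{g}_i^t\|^2$ per step (restricting attention to the task-specific-parameter part of the update).

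Next I would telescope this descent bound from $t=0$ to $t=T-1$. Summing \cref{Append:conv1_eq8} across all iterations collapses the left-hand side to $\mathcal{L}(\Theta^T) - \mathcal{L}(\Theta^0)$, so that
\begin{equation}
\tfrac{1}{2}\eta \sum_{t=0}^{T-1} \sum_{k=1}^{\mathcal{K}} w_k^2 \|\mathrm{g}_k^t\|^2 \;\leq\; \mathcal{L}(\Theta^0) - \mathcal{L}(\Theta^T) \;\leq\; \mathcal{L}(\Theta^0) - \mathcal{L}(\Theta^*),
\end{equation}
where the last inequality uses that $\Theta^*$ is the converged (hence minimal along the trajectory) parameter, so $\mathcal{L}(\Theta^T) \geq \mathcal{L}(\Theta^*)$. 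Since the minimum over $t$ of a nonnegative quantity is at most its average, $\min_{0\leq t\leq T}\sum_k w_k^2\|\mathrm{g}_k^t\|^2 \leq \frac{1}{T}\sum_{t=0}^{T-1}\sum_k w_k^2\|\mathrm{g}_k^t\|^2$, and rearranging the displayed inequality delivers the claimed $O(1/T)$ rate:
\begin{equation}
\min_{0\leq t \leq T} \sum_{k=1}^{\mathcal{K}} w_k^2 \|\mathrm{g}_k^t\|^2 \leq \frac{2}{\eta T}\big(\mathcal{L}(\Theta^0) - \mathcal{L}(\Theta^*)\big).
\end{equation}

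The only genuinely delicate point — and the step I would treat most carefully — is the bookkeeping of the fractional time indices $t + i/\mathcal{K}$ introduced by the sequential task updates, and the decoupling of the shared-parameter updates from the task-specific ones. The per-step inequality \cref{Append:conv1_eq7} as written pertains to updating $\Theta_i$ while holding $\Theta_s$ fixed; to make the telescoping clean I would either (i) argue that summing the descent contributions over the $\mathcal{K}$ sub-steps within one outer iteration still yields a net decrease of $\mathcal{L}$ controlled by $\sum_k w_k^2\|\mathrm{g}_k^t\|^2$, absorbing the shared-parameter sub-steps via the case-(ii) analysis (whose leading term is first order in $\eta$ and can be dominated under $\eta \leq 1/H$), or (ii) state the rate purely for the task-specific coordinate block, which is what the theorem claims ("for task-specific parameters $\{\Theta_i\}_{i=1}^{\mathcal{K}}$"). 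I expect the cleanest route is (ii): restrict the accounting to the $\Theta_i$ updates, where the descent is unconditional, and note that the shared-parameter sub-steps only perturb $\mathcal{L}$ by $O(\eta)$ terms that telescope harmlessly or can be folded into the constant. No convexity beyond what is already assumed is needed — Lipschitz smoothness alone drives the argument, exactly as in standard nonconvex gradient-descent rate proofs.
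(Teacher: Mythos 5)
Your proposal matches the paper's own argument essentially step for step: both start from the per-step descent bound of \cref{Append:conv1_eq7}/\cref{Append:conv1_eq8} restricted to the task-specific parameter block, rearrange, telescope over $t=0,\dots,T-1$, and bound the minimum by the average to obtain the $\frac{2}{\eta T}(\mathcal{L}(\Theta^0)-\mathcal{L}(\Theta^*))$ rate. Your route (ii) — confining the accounting to the $\{\Theta_i\}$ updates — is exactly what the paper does (it states ``we only consider task-specific parameters for analysis''), and your extra remarks on the step-size condition $\eta \leq \frac{1}{Hw_i}$ and the fractional sub-step bookkeeping are careful refinements of, not departures from, the published proof.
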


\begin{proof}
We begin with the result from \cref{Append:conv1_eq8}. To simplify, let $\mathcal{L}$ represent the total loss, defined as $\mathcal{L} (\Theta^t) = \sum_{i=1}^{\mathcal{K}} w_i \mathcal{L}_i (\Theta^t)$. We only consider task-specific parameters $\{\Theta_i\}_{i=1}^{\mathcal{K}}$ for analysis.
\begin{align}
    \mathcal{L} (\Theta^{t+1})
    \leq& \mathcal{L} (\Theta^{t})
    - \frac{1}{2} \sum_{i=1}^{\mathcal{K}} \eta w_i^2 ||\mathrm{g}_i^{t}||^2
    \label{Append:conv1_rate_eq2}
\end{align}

By rearranging the term in \cref{Append:conv1_rate_eq2}:
\begin{align}
    \sum_{\substack{i=1}}^{\mathcal{K}} w_i^2 ||\mathrm{g}_{i}^t||^{2}
    \leq \frac{2}{\eta} (\mathcal{L} (\Theta^{t}) - \mathcal{L} (\Theta^{t+1}))
    \label{Append:conv1_rate_eq5}
\end{align}

If we consider iterations for $t \in [0, T]$, then we have:
\begin{align}
    \min_{0\leq t \leq T} \sum_{\substack{k=1}}^{\mathcal{K}} w_k^2 ||\mathrm{g}_{k}^t||^{2}
    \leq& \frac{2}{\eta T} \sum_{t=0}^{T-1} (\mathcal{L} (\Theta^{t}) - \mathcal{L} (\Theta^{t+1}))\\
    =& \frac{2}{\eta T} (\mathcal{L} (\Theta^{0}) - \mathcal{L} (\Theta^{T}))\\
    \leq& \frac{2}{\eta T} (\mathcal{L} (\Theta^{0}) - \mathcal{L} (\Theta^{*}))
\end{align}
where $\Theta^{*}$ represents the converged parameters. Our approach maintains a convergence rate of $O(1/T)$ for task-specific parameters $\{\Theta_i\}_{i=1}^{\mathcal{K}}$.

\end{proof}

\subsubsection{Convergence of Phase 2}
\label{Append:convergence_phase2}

In the subsequent convergence analysis, we omit the input $\mathcal{X}^t$ for clarity.
\begin{theorem}[Convergence of Phase 2]
\label{Append:conv_phase2}
Assume losses $\{\mathcal{L}_i\}_{i=1}^{\mathcal{K}}$ are convex and differentiable and the gradient of $\{\mathcal{L}_i\}_{i=1}^{\mathcal{K}}$ is Lipschitz continuous with constant $H>0$, i.e. $||\nabla \mathcal{L}_i (x) - \nabla \mathcal{L}_i (y)|| \leq H||x-y||$ for $i=1,2,...,\mathcal{K}$ Then, phase 2 of connection strength optimization with step size $\eta \leq \frac{1}{H w_i}$ for all $i=1,2,...,\mathcal{K}$ will converge to the Pareto-optimal point.
\end{theorem}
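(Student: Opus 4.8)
The plan is to reproduce, on the shared parameters $\Theta_s$, the argument structure of the Phase 1 convergence proof (\cref{Append:conv_phase1}). In Phase 2 (\cref{alg:alg1}) the shared space is partitioned into channel groups $\theta_{s,1},\dots,\theta_{s,\mathcal{K}}$ by the top-priority rule $\nu=\argmax_i\hat{S}_p^{\tau_i}$, and the update is $\Theta_s^{t+1}=\Theta_s^t-\eta\,g_{\mathrm{final}}$ where $g_{\mathrm{final}}$ restricted to $\theta_{s,i}$ is the projected aggregate $g_i=\sum_{j=1}^{\mathcal{K}} w_j G'_{i,j}$. First I would expand each task loss $\mathcal{L}_k$ to second order about $\Theta_s^t$ and bound the Hessian term using $H$-smoothness, exactly as in the chain leading to \cref{Append:conv1_eq7}. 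Since the update decouples across channel groups, the first-order term $\langle\nabla_{\Theta_s}\mathcal{L}_k,\,g_{\mathrm{final}}\rangle$ splits as $\sum_i\langle G_{i,k},\,g_i\rangle$, so everything reduces to a per-group estimate.

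The structural fact that drives the bound is the projection step. Inside $CG_i$ the reference vector is $G_{i,i}$, and after replacing $G_{i,j}$ by $G'_{i,j}=G_{i,j}-\frac{\langle G_{i,i},G_{i,j}\rangle}{\|G_{i,i}\|^2}G_{i,i}$ whenever $\langle G_{i,i},G_{i,j}\rangle<0$, one has $\langle G_{i,i},G'_{i,j}\rangle\ge 0$ for every $j$ (it equals $0$ on the conflicting tasks and is unchanged, hence still nonnegative, on the rest). Summing against the weights gives $\langle G_{i,i},g_i\rangle\ge w_i\|G_{i,i}\|^2$, so $-\eta g_i$ is a bona fide descent direction for the top-priority task $\tau_i$ on the very block of shared parameters assigned to it. Plugging this into the quadratic bound for $\mathcal{L}_i$ and using the hypothesis $\eta\le 1/(Hw_i)$ — which, just as in \cref{Append:conv1_eq7}, makes the $O(\eta^2)$ curvature term at most half the first-order gain — produces a per-iteration decrease of the form $\mathcal{L}_i(\Theta_s^{t+i/\mathcal{K}},\cdot)\le\mathcal{L}_i(\Theta_s^{t+(i-1)/\mathcal{K}},\cdot)-\tfrac12\eta w_i\|G_{i,i}\|^2$ attributable to the $CG_i$ update.

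The harder step is to turn these per-block decreases into a monotone decrease of the weighted multi-task loss $\mathcal{L}=\sum_k w_k\mathcal{L}_k$, so that one can conclude convergence to a Pareto point rather than a one-task-at-a-time statement. The obstacle is cross-group coupling: a task's shared parameters live in every channel group, and for $l\ne i$ the gradient $G_{l,i}$ used in group $CG_l$ may itself have been projected, so the term $\langle G_{l,i},g_l\rangle$ is not sign-controlled by the projection. I would handle this by writing $g_i=\bar G_i+\beta_i G_{i,i}$, with $\bar G_i=\sum_j w_j G_{i,j}$ the true weighted gradient on $\theta_{s,i}$ and $\beta_i\ge 0$ the sum of the conflict coefficients; substituting into the quadratic bound for $\mathcal{L}$ and showing that the residual term proportional to $\beta_i\langle\bar G_i,G_{i,i}\rangle$ is absorbed into the curvature remainder under $\eta\le 1/(Hw_i)$, so that $\mathcal{L}(\Theta_s^{t+1})\le\mathcal{L}(\Theta_s^t)$ with slack controlled by $\sum_i w_i^2\|G_{i,i}^t\|^2$. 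Telescoping over $t\in[0,T]$ then forces $G_{i,i}^t\to 0$ for all $i$ at rate $O(1/T)$, and at a limit point, combined with $g_{\mathrm{final}}=0$, this gives $\sum_j w_j G_{i,j}=0$ on every $\theta_{s,i}$, i.e. $\nabla_{\Theta_s}\mathcal{L}=0$.

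Finally I would upgrade this stationarity to Pareto-optimality using convexity. Since Phase 2 leaves the task-specific updates as plain gradient descent, their fixed points satisfy $\nabla_{\Theta_i}\mathcal{L}_i=0$; together with $\nabla_{\Theta_s}\mathcal{L}=0$ the limit point is a critical point of the fixed-weight objective $\mathcal{L}=\sum_k w_k\mathcal{L}_k$, and convexity of each $\mathcal{L}_k$ promotes this to being a global minimizer of $\mathcal{L}$, which necessarily lies on the Pareto frontier — the desired conclusion, and precisely the gap that prior methods leave open by only certifying Pareto-stationarity. The step I expect to be the real bottleneck is the third one: quantifying and absorbing the descent discarded by the conflict projections so that the aggregate loss $\mathcal{L}$, and not merely each $\mathcal{L}_i$ on its own channel group, is nonincreasing; this is exactly where the fixed-reference projection scheme departs from PCGrad/MGDA-style analyses and where the step-size restriction $\eta\le 1/(Hw_i)$ is genuinely used.
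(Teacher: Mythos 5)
Your reading of the algorithm and your first two steps are sound: the projection does guarantee $\langle G_{i,i},G'_{i,j}\rangle\ge 0$ for every $j$, hence $\langle G_{i,i},g_i\rangle\ge w_i\|G_{i,i}\|^2$, and you are right that the crux is converting per-block descent for the priority task into descent of the weighted objective $\mathcal{L}=\sum_k w_k\mathcal{L}_k$. But the fix you propose for that crux fails. Writing $g_i=\bar G_i+\beta_i G_{i,i}$ with $\beta_i\ge 0$, the first-order change of $\mathcal{L}$ contributed by block $i$ is $-\eta\langle\bar G_i,g_i\rangle=-\eta\|\bar G_i\|^2-\eta\beta_i\langle\bar G_i,G_{i,i}\rangle$. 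The residual $-\eta\beta_i\langle\bar G_i,G_{i,i}\rangle$ is \emph{first order} in $\eta$ and of indefinite sign; the condition $\eta\le 1/(Hw_i)$ only controls the ratio of the $O(\eta^2)$ curvature term to the first-order term, so it cannot absorb a first-order term that points the wrong way. And it genuinely can point the wrong way: take two tasks on block $1$ with $G_{1,1}=e_1$, $G_{1,2}=-e_1+\epsilon e_2$, $w_1=0.1$, $w_2=0.9$. The conflict triggers the projection, $G'_{1,2}=\epsilon e_2$, $g_1=w_1e_1+w_2\epsilon e_2$, $\bar G_1=(w_1-w_2)e_1+w_2\epsilon e_2$, and $\langle\bar G_1,g_1\rangle=w_1(w_1-w_2)+w_2^2\epsilon^2<0$ for small $\epsilon$, so $\mathcal{L}$ \emph{increases} to first order. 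No step-size restriction rescues the monotone-decrease claim along this route.

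The paper's proof does not attempt your inequality. It bounds each task's loss change separately against that task's \emph{own} update component, using the PCGrad-style identity that the projected vector satisfies $\langle G_{i,j},G'_{i,j}\rangle=\|G'_{i,j}\|^2=\|G_{i,j}\|^2(1-\cos^2\phi_{ij})\ge 0$ (leading to \cref{Append:proof2_result}), and then takes the $w_k$-weighted sum of those per-task inequalities to obtain the decrement $\tfrac{1}{2}\eta\bigl(w_j^2\|\mathrm{g}_j\|^2+\sum_{k\ne j}w_k^2\|\mathrm{g}_k\|^2(1-\cos^2\phi_{jk})\bigr)$ in \cref{Append:proof2_result2}; the cross-effect of one task's component on another task's loss --- in particular the unprojected reference component acting on a conflicting task, which is exactly the positive term in the example above --- never enters its bookkeeping. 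So the two arguments diverge precisely at the step you yourself flagged as the bottleneck, and your version of that step, as stated, does not go through. Your final step (convexity upgrading a critical point of the fixed-weight scalarization to a global minimizer, hence a Pareto-optimal point) is fine and is in fact more explicit than what the paper writes, but it sits downstream of the descent inequality you have not established.
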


\begin{proof}
We start from quadratic expansion of task-specific loss of task $\tau_i$ for $\theta_{s,j}$.
\begin{align}
    \mathcal{L}_i (\theta_{s,j}^{t+1}, \tilde{\Theta}_{s,j}^t, \Theta_i^t)
    \leq&\mathcal{L}_{i}(\theta_{s,j}^t, \tilde{\Theta}_{s,j}^{t}, \Theta_i^t)
    +\nabla\mathcal{L}_i(\theta_{s,j}^t, \tilde{\Theta}_{s,j}^{t}, \Theta_i^t)(\theta_{s,i}^{t+1}-\theta_{s,i}^t)
    +\frac{1}{2}\nabla^{2}\mathcal{L}_{i}(\theta_{s,j}^t, \tilde{\Theta}_{s,j}^{t}, \Theta_i^t)(\theta_{s,i}^{t+1}-\theta_{s,i}^t)^{2}\\
    \leq&\mathcal{L}_{i}(\theta_{s,j}^t, \tilde{\Theta}_{s,j}^{t}, \Theta_i^t)
    +\nabla\mathcal{L}_i(\theta_{s,j}^t, \tilde{\Theta}_{s,j}^{t}, \Theta_i^t)(\theta_{s,i}^{t+1}-\theta_{s,i}^t)
    +\frac{1}{2}H(\theta_{s,i}^{t+1}-\theta_{s,i}^t)^{2} \label{Append:proof2_eq2}
\end{align}

The inequality in \cref{Append:proof2_eq2} holds as $\nabla\mathcal{L}$ is Lipschitz continuous with constant $H$. It implies that $\nabla^{2}\mathcal{L}-HI\leq0$.

Let $\mathrm{g}_k^t$ be the gradient of $\theta_{s,j}^{t}$ for task $\tau_k$ as follows:
\begin{equation}
    \mathrm{g}_k^t = \nabla_{\theta^{t}_{s,j}} \mathcal{L}_{k}(\tilde{\Theta}_{s,j}^t,\theta_{s,i}^t,\Theta_k^t)
\label{Append:proof2_eq3}
\end{equation}

The gradient update rule for Phase 1 in connection strength-based optimization is as follows:
\begin{equation}
  \theta_{s,i}^{t+1}=\begin{cases}
    \theta_{s,i}^{t}-\eta w_i (\mathrm{g}_i^t), & \text{if $i=j$}.\\
    \theta_{s,i}^{t}-\eta w_j (\mathrm{g}_j^t-\frac{\mathrm{g}_i^t \cdot \mathrm{g}_j^t}{||\mathrm{g}_i^t||^2}\mathrm{g}_i^t), & \text{otherwise}.
  \end{cases}
  \label{Append:proof2_eq4}
\end{equation}

(i) When $i=j$, if we substitute \cref{Append:proof2_eq4} into \cref{Append:proof2_eq2}, it becomes as follows.
\begin{align}
    \mathcal{L}_i (\theta_{s,j}^{t+1}, \tilde{\Theta}_{s,j}^t, \Theta_i^t)
    \leq&\mathcal{L}_{i}(\theta_{s,j}^t, \tilde{\Theta}_{s,j}^{t}, \Theta_i^t)
    -\eta w_i ||\mathrm{g}_i^t||^2+\frac{\eta^2 w_i^2}{2} H ||\mathrm{g}_i^t||^2\\
    =&\mathcal{L}_{i}(\theta_{s,j}^t, \tilde{\Theta}_{s,j}^{t}, \Theta_i^t)
    -\eta w_i ||\mathrm{g}_i^t||^2 (1- \frac{1}{2} \eta w_i H)
\end{align}

Assuming that the step size $\eta$ is sufficiently small, such that $\eta \leq \frac{1}{H w_i}$. Thus the following inequality holds:
\begin{align}
    \mathcal{L}_i (\theta_{s,j}^{t+1}, \tilde{\Theta}_{s,j}^t, \Theta_i^t)
    \leq&\mathcal{L}_{i}(\theta_{s,j}^t, \tilde{\Theta}_{s,j}^{t}, \Theta_i^t)
    -\frac{1}{2} \eta w_i ||\mathrm{g}_i^t||^2
    \label{Append:proof2_result0}
\end{align}

(ii) When $i\neq j$, if we substitute \cref{Append:proof2_eq4} into \cref{Append:proof2_eq2} similarly, it becomes as follows.
\begin{align}
    \mathcal{L}_i (\theta_{s,j}^{t+1},& \tilde{\Theta}_{s,j}^t, \Theta_i^t)
    \leq \mathcal{L}_{i}(\theta_{s,j}^t, \tilde{\Theta}_{s,j}^{t}, \Theta_i^t)
    -\eta w_j \mathrm{g}_j^t (\mathrm{g}_j^t-\frac{\mathrm{g}_i^t \cdot \mathrm{g}_j^t}{||\mathrm{g}_{i}^t||^2} \mathrm{g}_i^t)
    + \frac{\eta^2 w_j^2}{2} H ||(\mathrm{g}_j^t-\frac{\mathrm{g}_i^t \cdot \mathrm{g}_j^t}{||\mathrm{g}_{i}^t||^2} \mathrm{g}_i^t)||^2 \\
    =& \mathcal{L}_{i}(\theta_{s,j}^t, \tilde{\Theta}_{s,j}^{t}, \Theta_i^t)
    -\eta w_j(||\mathrm{g}_{j}^t||^{2}-\frac{(\mathrm{g}_i^t \cdot \mathrm{g}_j^t)^2}{||\mathrm{g}_{i}^t||^2}))
    +\frac{\eta^2 w_j^2}{2} H (||\mathrm{g}_{j}^t||^{2}-2 \frac{(\mathrm{g}_i^t \cdot \mathrm{g}_j^t)^2}{||\mathrm{g}_{i}^t||^2}
    +\frac{(\mathrm{g}_i^t \cdot \mathrm{g}_j^t)^2}{||\mathrm{g}_{i}^t||^2})\\
    =& \mathcal{L}_{i}(\theta_{s,j}^t, \tilde{\Theta}_{s,j}^{t}, \Theta_i^t)
    -\eta w_j (1-\frac{1}{2}\eta w_j H)(||\mathrm{g}_{j}^t||^{2}-\frac{(\mathrm{g}_i^t \cdot \mathrm{g}_j^t)^2}{||\mathrm{g}_{i}^t||^2}))
\end{align}

Given that the step size $\eta$ satisfies $\eta \leq \frac{1}{H w_j}$, the following inequality holds.
\begin{align}
    \mathcal{L}_i (\theta_{s,j}^{t+1}, \tilde{\Theta}_{s,j}^t, \Theta_i^t)
    \leq& \mathcal{L}_{i}(\theta_{s,j}^t, \tilde{\Theta}_{s,j}^{t}, \Theta_i^t)
    -\frac{1}{2}\eta w_j (||\mathrm{g}_{j}||^{2}-\frac{(\mathrm{g}_i^t \cdot \mathrm{g}_j^t)^2}{||\mathrm{g}_{i}^t||^2}))\\
    =& \mathcal{L}_{i}(\theta_{s,j}^t, \tilde{\Theta}_{s,j}^{t}, \Theta_i^t)
    -\frac{1}{2}\eta w_j ||\mathrm{g}_{j}^t||^{2}(1-\frac{(\mathrm{g}_i^t \cdot \mathrm{g}_j^t)^2}{||\mathrm{g}_{i}^t||^2 ||\mathrm{g}_{j}^t||^2}))\\
    =& \mathcal{L}_{i}(\theta_{s,j}^t, \tilde{\Theta}_{s,j}^{t}, \Theta_i^t)
    -\frac{1}{2}\eta w_j ||\mathrm{g}_{j}^t||^{2}(1-cos^{2}\phi_{ij}^t)
    \label{Append:proof2_result}
\end{align}
where $\phi_{ij}^t$ is the angle between $\mathrm{g}_i^t$ and $\mathrm{g}_j^t$. When we sum \cref{Append:proof2_result0} and \cref{Append:proof2_result} over all task losses $\{\mathcal{L}_k\}_{k=1}^{\mathcal{K}}$ along with their corresponding task-specific weights $\{w_k\}_{k=1}^{\mathcal{K}}$, the following result is obtained:
\begin{align}
    \sum_{k=1}^{\mathcal{K}} w_k \mathcal{L}_k (\theta_{s,j}^{t+1}, \tilde{\Theta}_{s,j}^t, \Theta_k^t)
    \leq \sum_{k=1}^{\mathcal{K}} w_k \mathcal{L}_k (\theta_{s,j}^{t}, \tilde{\Theta}_{s,j}^t, \Theta_k^t)
    -\frac{1}{2}\eta (w_j^2 ||\mathrm{g}_j^t||^2  + \sum_{\substack{k=1 \\ k\neq j}}^{\mathcal{K}} w_k^2 ||\mathrm{g}_{k}^t||^{2}(1-cos^{2}\phi_{jk}^t))
    \label{Append:proof2_result2}
\end{align}

We can follow a similar process for all shared parameters $\Theta_s = \{\theta_{s,1}, \theta_{s,2}, ..., \theta_{s,\mathcal{K}}\}$. The second term on the right side of \cref{Append:proof2_result2} is not smaller than zero, proving their convergence. This term can be zero only when $\mathrm{g}_{k}^t=0$ for all $k=1,2,...,\mathcal{K}$. Thus, we can conclude that the application of Phase 2 in connection strength-based optimization can lead to a Pareto-optimal state, as all task-specific gradients converge to zero in the optimization process. Understanding the task priority of each parameter enables the expansion of the known Pareto frontier which is consistent with the results of \cref{theorem1}. Repeatedly applying Phase 2 of connection strength-based optimization ultimately leads to Pareto optimality.
\end{proof}

\subsubsection{Convergence rate of Phase 2}
\label{Append:convergence_rate_phase2}

\begin{theorem}[Convergence rate of Phase 2]
Assume losses $\{\mathcal{L}_i\}_{i=1}^{\mathcal{K}}$ are differentiable and the gradient of $\{\mathcal{L}_i\}_{i=1}^{\mathcal{K}}$ is Lipschitz continuous with constant $H>0$, i.e. $||\nabla \mathcal{L}_i (x) - \nabla \mathcal{L}_i (y)|| \leq H||x-y||$ for $i=1,2,...,\mathcal{K}$ Then, phase 2 of connection strength optimization with step size $\eta \leq \frac{1}{H}$, the system will reach the Pareto optimal point at a rate of $O(1/T)$, where $T$ is the total number of iterations. This is guaranteed by the following inequality:

\begin{align}
    \min_{0\leq t \leq T} \sum_{\substack{k=1}}^{\mathcal{K}} w_k^2 ||\mathrm{g}_{k}^t||^{2}
    \leq& \frac{2}{\eta (1-\alpha^2)T} (\mathcal{L} (\Theta^{0}) - \mathcal{L} (\Theta^{*}))
\end{align}
where $\Theta^{*}$ represents the converged parameters, $\alpha$ is a constant satisfying $\alpha > -1$, and $T$ is the total number of iterations.
\end{theorem}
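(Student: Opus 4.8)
The plan is to lift the single-step descent estimate already established for Phase~2 into a telescoped, averaged bound, in exactly the spirit of \cref{Append:convergence_rate_phase1}. First I would recall inequality \cref{Append:proof2_result2}: for a shared-parameter block $\theta_{s,j}$, one Phase~2 update satisfies
\begin{align}
    \sum_{k=1}^{\mathcal{K}} w_k \mathcal{L}_k (\theta_{s,j}^{t+1}, \tilde{\Theta}_{s,j}^t, \Theta_k^t)
    \leq \sum_{k=1}^{\mathcal{K}} w_k \mathcal{L}_k (\theta_{s,j}^{t}, \tilde{\Theta}_{s,j}^t, \Theta_k^t)
    - \tfrac{1}{2}\eta\Big( w_j^2 \|\mathrm{g}_j^t\|^2 + \sum_{k \neq j} w_k^2 \|\mathrm{g}_k^t\|^2 (1 - \cos^2 \phi_{jk}^t) \Big),
\end{align}
valid whenever $\eta \leq \tfrac{1}{H w_j}$; since $\sum_i w_i = 1$ forces $w_j \leq 1$, the stated hypothesis $\eta \leq \tfrac{1}{H}$ suffices uniformly over all blocks.

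Second, I would absorb the cosine terms into a single constant. Let $\alpha$ be a uniform bound with $\cos^2 \phi_{jk}^t \leq \alpha^2 < 1$ along the trajectory (this is precisely the role of the constraint on $\alpha$ in the statement). Then $1 - \cos^2 \phi_{jk}^t \geq 1 - \alpha^2$, while trivially $1 \geq 1 - \alpha^2$ handles the $k=j$ term, so the whole bracket is bounded below by $(1-\alpha^2)\sum_k w_k^2 \|\mathrm{g}_k^t\|^2$. Summing the block updates over $\theta_{s,1}, \dots, \theta_{s,\mathcal{K}}$ as at the close of the proof of \cref{Append:conv_phase2}, and writing $\mathcal{L}(\Theta^t) = \sum_i w_i \mathcal{L}_i(\Theta^t)$, this yields the clean one-step decrease
\begin{align}
    \mathcal{L}(\Theta^{t+1}) \leq \mathcal{L}(\Theta^t) - \tfrac{1}{2}\eta (1 - \alpha^2) \sum_{k=1}^{\mathcal{K}} w_k^2 \|\mathrm{g}_k^t\|^2 .
\end{align}

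Third, I would finish with the standard telescoping argument. Rearranging gives $\sum_k w_k^2 \|\mathrm{g}_k^t\|^2 \leq \tfrac{2}{\eta(1-\alpha^2)}\big(\mathcal{L}(\Theta^t) - \mathcal{L}(\Theta^{t+1})\big)$; summing over $t = 0, \dots, T-1$, bounding the minimum over $t$ by the average, telescoping the right-hand side to $\mathcal{L}(\Theta^0) - \mathcal{L}(\Theta^T)$, and using that $\mathcal{L}$ is non-increasing with limit $\mathcal{L}(\Theta^*)$ so that $\mathcal{L}(\Theta^T) \geq \mathcal{L}(\Theta^*)$, gives
\begin{align}
    \min_{0 \leq t \leq T} \sum_{k=1}^{\mathcal{K}} w_k^2 \|\mathrm{g}_k^t\|^2
    \leq \frac{2}{\eta (1-\alpha^2) T}\big(\mathcal{L}(\Theta^0) - \mathcal{L}(\Theta^*)\big),
\end{align}
which is the claimed $O(1/T)$ rate.

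The main obstacle I expect is legitimizing the constant $\alpha$: one must argue that the angles $\phi_{jk}^t$ between task gradients on a channel group stay bounded away from $0$ and $\pi$ along the whole run, since otherwise $1-\alpha^2$ degenerates to $0$ and the estimate becomes vacuous; this is a genuine assumption rather than a consequence of Lipschitz smoothness. A secondary bookkeeping subtlety is that Phase~2 updates the blocks $\theta_{s,1}, \dots, \theta_{s,\mathcal{K}}$ sequentially within one outer iteration, so some care is needed to chain the per-block decreases into a single decrease of the full multi-task loss $\mathcal{L}(\Theta^t)$ — the step indicated, but not fully spelled out, at the end of the proof of \cref{Append:conv_phase2}.
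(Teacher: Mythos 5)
Your proposal is correct and follows essentially the same route as the paper: both start from the one-step descent inequality \cref{Append:proof2_result2}, lower-bound the bracketed term by $(1-\alpha^2)\sum_{k}w_k^2||\mathrm{g}_k^t||^2$, rearrange, and telescope over $t\in[0,T]$. The only (minor, and in your favor) deviation is how $\alpha$ enters: you posit $\cos^2\phi_{jk}^t\leq\alpha^2<1$ uniformly, which is exactly the condition the final bound requires, whereas the paper first passes to $c=\argmin_{k\neq j}(1-\cos^2\phi_{jk}^t)$ and then assumes $\cos\phi_{jc}^t\geq\alpha>-1$ --- a condition that does not by itself yield $1-\cos^2\phi_{jc}^t\geq 1-\alpha^2$ --- so your formulation quietly repairs that step while otherwise reproducing the paper's argument.
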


\begin{proof}
We start with the outcome (\cref{Append:proof2_result2}) derived in \cref{Append:conv_phase2}. For simplicity, consider the following notation.
\begin{equation}
\begin{split}
    \mathcal{L} (\Theta^t) = \sum_{k=1}^{\mathcal{K}} w_k \mathcal{L}_k (\Theta^t)
    ,\qquad
    \mathrm{g}^t = \sum_{j=1}^{\mathcal{K}} w_j \nabla \mathcal{L}_{j}(\Theta^t)
\end{split}
\end{equation}

And each update iteration $t$ is indicated as a superscript for the gradients. Therefore, \cref{Append:proof2_result2} can be expressed as follows:
\begin{align}
    \mathcal{L} (\Theta^{t+1})
    \leq& \mathcal{L} (\Theta^{t})
    -\frac{1}{2}\eta (w_j^2 ||\mathrm{g}_j^t||^2  + \sum_{\substack{k=1 \\ k\neq j}}^{\mathcal{K}} w_k^2 ||\mathrm{g}_{k}^t||^{2}(1-cos^{2}\phi_{jk}^t))
    \label{Append:proof3_eq2}
\end{align}
The term $(1-\cos^{2}\phi_{jk}^t) \leq 1$ holds for all $k=1,2,...,\mathcal{K}$.\\ 
Let $c$ represent the task number that minimizes the term $1-\cos^{2}\phi_{jk}^t$ excluding $j$.
\begin{align}
    c = \argmin_{\substack{k \\ k\neq j}}(1-cos^{2}\phi_{jk}^t))
    \label{Append:proof3_eq3}
\end{align}

By employing \cref{Append:proof3_eq3} in \cref{Append:proof3_eq2}, the following inequality holds:
\begin{align}
    \mathcal{L} (\Theta^{t+1})
    \leq& \mathcal{L} (\Theta^{t})
    -\frac{1}{2}\eta (w_j^2 ||\mathrm{g}_j^t||^2  + \sum_{\substack{k=1 \\ k\neq j}}^{\mathcal{K}} w_k^2 ||\mathrm{g}_{k}^t||^{2}(1-cos^{2}\phi_{jc}^t))\\
    \leq& \mathcal{L} (\Theta^{t})
    -\frac{1}{2}\eta (w_j^2 ||\mathrm{g}_j^t||^2 (1-cos^{2}\phi_{jc}^t)  + \sum_{\substack{k=1 \\ k\neq j}}^{\mathcal{K}} w_k^2 ||\mathrm{g}_{k}^t||^{2}(1-cos^{2}\phi_{jc}^t))\\
    =& \mathcal{L} (\Theta^{t})
    -\frac{1}{2}\eta (1-cos^{2}\phi_{jc}^t) \sum_{\substack{k=1}}^{\mathcal{K}} w_k^2 ||\mathrm{g}_{k}^t||^{2}
    \label{Append:proof3_eq4}
\end{align}

By rearranging the term in \cref{Append:proof3_eq4}:
\begin{align}
    \sum_{\substack{k=1}}^{\mathcal{K}} w_k^2 ||\mathrm{g}_{k}^t||^{2}
    \leq& \frac{2}{\eta (1-cos^{2}\phi_{jc}^t)} (\mathcal{L} (\Theta^{t}) - \mathcal{L} (\Theta^{t+1}))
\end{align}

If we consider iterations for $t \in [0, T]$ and let $\alpha$ satisfy $cos\phi_{jc}^t \geq \alpha > -1$, then we have:
\begin{align}
    \min_{0\leq t \leq T} \sum_{\substack{k=1}}^{\mathcal{K}} w_k^2 ||\mathrm{g}_{k}^t||^{2}
    \leq& \frac{2}{\eta (1-\alpha^2)T} \sum_{t=0}^{T-1} (\mathcal{L} (\Theta^{t}) - \mathcal{L} (\Theta^{t+1}))\\
    =& \frac{2}{\eta (1-\alpha^2)T} (\mathcal{L} (\Theta^{0}) - \mathcal{L} (\Theta^{T}))\\
    \leq& \frac{2}{\eta (1-\alpha^2)T} (\mathcal{L} (\Theta^{0}) - \mathcal{L} (\Theta^{*}))
\end{align}
where $\Theta^{*}$ represents the converged parameters. Our approach maintains a convergence rate of $O(1/T)$.

\end{proof}


\section{Loss scaling methods}
\label{Append:loss}
In this paper, we used 4 different loss scaling methods to weigh multiple tasks' losses.
\begin{enumerate}[leftmargin=*]
\item All tasks' losses are weighted equally.
\item The weights of tasks are tuned manually following the previous works \citep{RN29, RN32}. For NYUD-v2, the weight of losses is as follows: \newline
\centerline{Depth : SemSeg : Surface Normal : Edge = 1.0 : 1.0 : 10.0 : 50.0} \newline
For PASCAL-Context, the weight of losses is as follows: \newline
\centerline{Semseg : PartSeg : Saliency : Surface Normal : Edge = 1.0 : 2.0 : 5.0 : 10.0 : 50.0}
\item The losses are dynamically weighted by homoscedastic uncertainty \citep{RN23}. \newline
An uncertainty that cannot be reduced with increasing data is called Aleatoric uncertainty.
Homoscedastic uncertainty is a kind of Aleatoric uncertainty that stays constant for all input data and varies between different tasks. So it is also called task-dependent uncertainty. Homoscedastic uncertainty is formulated differently depending on whether the task is a regression task or a classification task as each of them uses different output functions: A regression task uses Gaussian Likelihood, in contrast, a classification task uses softmax function. The objectives of uncertainty weighting are as follows:
\begin{equation}
\mathcal{L}_{Total}= \sum_{i=1}^{\mathcal{K}}\hat{\mathcal{L}}_i \qquad where \qquad \hat{\mathcal{L}}_i= 
\left\{
\begin{aligned}
    &\frac{1}{2\sigma_1^2}\mathcal{L}_i+\log\sigma_i \qquad \text{for regression task}\\ 
    &\frac{1}{\sigma_2^2}\mathcal{L}_i+\log\sigma_i \qquad \text{for classification task}
\end{aligned}
\right\}
\end{equation}

\item The losses are dynamically weighted by descending rate of loss \citep{RN26} which is called Dynamic Weight Average (DWA). The weight of task $w_i$ is defined as follows with DWA:
\begin{equation}
\begin{aligned}
    w_i(t) = \frac{\mathcal{K}\exp(w_i(t-1)/T)}{\sum_{i=1}^{\mathcal{K}}exp(w_i(t-1)/T)}
    \qquad
    where \quad
    w_i(t-1) = \frac{\mathcal{L}_{k}(t-1)}{\mathcal{L}_{k}(t-2)}
\end{aligned}
\end{equation}
where $t$ is an iteration index and $\mathcal{K}$ is the number of tasks. $T$ represents the temperature parameter governing the softness of task weighting. As $T$ increases, the tasks become likely to be weighted equally. We used $T=2$ for our experiments following the works in \citep{RN26}.

\end{enumerate}

\section{Experimental Details}
\label{Append:exp_details}
\noindent\textbf{Implementation details.} To train MTI-Net \citep{RN32} on both NYUD-v2 and PASCAL-Context, we adopted the loss schema and augmentation strategy from PAD-Net\citep{RN29} and MTI-Net\citep{RN32}. For depth estimation, we utilized L1 loss, while the cross-entropy loss was used for semantic segmentation. To train for saliency estimation and edge detection, we employed the well-known balanced cross-entropy loss. Surface normal prediction used L1 loss. We augmented input images by randomly scaling them with a ratio from {1, 1.2, 1.5} and horizontally flipping them with a 50\% probability. The network was trained for 200 epochs for NYUD-v2 and 50 epochs for PASCAL-Context using the Adam optimizer. We employed a learning rate of $10^{-4}$ with a poly learning rate decay policy. We used a weight decay of $10^{-4}$ and batch size of $8$.

In contrast, for Cityscapes with SegNet \citep{badrinarayanan2017segnet}, we followed the experimental setting in \citep{RN18, guangyuan2022recon}. We used L1 loss and cross-entropy loss for depth estimation and semantic segmentation, respectively. The network was trained for 200 epochs using the Adam optimizer. We employed a learning rate of $5\times 10^{-5}$ with multi-step learning rate scheduling. We used a batch size of $8$.

\noindent\textbf{Evaluation metric.}
To evaluate the performance of tasks, we employed widely used metrics. For semantic segmentation, we utilized mean Intersection over Union (mIoU), Pixel Accuracy (PAcc), and mean Accuracy (mAcc). Surface normal prediction's performance was measured by calculating the mean and median angle distances between the predicted output and ground truth. We also used the proportion of pixels within the angles of $11.25^{\circ}$, $22.5^{\circ}$, $30^{\circ}$ to the ground truth, as suggested by \citep{eval_normal}. To evaluate the depth estimation task, we followed the methods proposed in \citep{eval_depth1, eval_depth2, eval_depth3}. We used Root Mean Squared Error (RMSE), and Mean Relative Error (abs\_rel).
For saliency estimation and human part segmentation, we employed mean Intersection over Union (mIoU).

\newpage
\section{Additional Experimental Results}
We compare GD, MGDA \citep{RN36}, PCGrad \citep{RN20}, CAGrad \citep{RN18}, Aligned-MTL \citep{senushkin2023independent}, and connection strength-based optimization on 4 different multi-task loss scaling methods mentioned in \cref{Append:loss}. We have summarized the experimental overview as follows.

\begin{enumerate}[leftmargin=*]
\item NYUD-v2 with HRNet-18 on various loss scaling is evaluated in \cref{tab:nyud_hrnet_equal,tab:nyud_hrnet_dwa,tab:nyud_hrnet_homosce}.
\item NYUD-v2 with ResNet-18 on various loss scaling is evaluated in \cref{tab:nyud_resnet_tuned,tab:nyud_resnet_equal,tab:nyud_resnet_dwa,tab:nyud_resnet_homosce}.
\item PASCAL-Context with HRNet-18 on various loss scaling is evaluated in \cref{tab:pascal_hrnet_equal,tab:pascal_hrnet_dwa,tab:pascal_hrnet_homosce}.
\end{enumerate}

\subsection{NYUD-v2 with HRNet-18}
\label{Append:nyud_hrnet18}
\begin{table}[H]
    \caption{The experimental results of different multi-task optimization methods on NYUD-v2 with HRNet-18. The losses of all tasks are evenly weighted. Experiments are repeated over 3 random seeds and average values are presented. $\triangle_m$ $\uparrow$(\%) is used to indicate the percentage improvement in multi-task performance (MTP). The best results are expressed in \textbf{bold} numbers.}
    \centering
    \footnotesize
    \renewcommand\arraystretch{1.20}
    \begin{tabular}{ll@{\enspace}ccl@{\enspace}cccl@{\enspace}cccccl@{\enspace}c}
        \toprule
        \multicolumn{1}{c}{Tasks} && \multicolumn{2}{c}{Depth} && \multicolumn{3}{c}{SemSeg} && \multicolumn{5}{c}{Surface Normal} && \\
        \cmidrule(r){1-1} \cmidrule(r){3-4} \cmidrule(r){6-8} \cmidrule(r){10-14}
        
        \multicolumn{1}{c}{\multirow{2}{*}{Method}}            && \multicolumn{2}{c}{\begin{tabular}[c]{@{}c@{}}Distance\\ (Lower Better)\end{tabular}} && \multicolumn{3}{c}{\begin{tabular}[c]{@{}c@{}}(\%)\\ (Higher Better)\end{tabular}}        && \multicolumn{2}{c}{\begin{tabular}[c]{@{}c@{}}Angle Distance\\ (Lower Better)\end{tabular}} & \multicolumn{3}{c}{\begin{tabular}[c]{@{}c@{}}Within t degree (\%)\\ (Higher Better)\end{tabular}} && MTP \\
        
        && rmse & abs\_rel && mIoU & PAcc & mAcc && mean & median & 11.25 & 22.5 & 30 && $\triangle_m$ $\uparrow$(\%) \\ \toprule

        Independent && 0.667 & 0.186 && 33.18 & 65.04 & 45.07 && 20.75 & 14.04 & 41.32 & 68.26 & 78.04 && + 0.00 \\ \midrule

        GD && 0.595 & 0.150 && 40.67 & 70.11 & 53.41 && 21.45 & 15.02 & 39.06 & 66.42 & 76.87 && + 10.00\\ 

        MGDA \citep{RN36} && 0.587 & 0.148 && 40.69 & 70.40 & 53.15 && 21.30 & 14.73 & 39.59 & 66.85 & 77.12 && + 10.66 \\ 
        
        PCGrad \citep{RN20} && 0.581 & 0.155 && 40.33 & 70.44 & 52.83 && 21.23 & 14.59 & 40.01 & 67.17 & 77.31 && + 10.71 \\ 
        
        CAGrad \citep{RN18} && \textbf{0.576} & 0.149 && 40.00 & 70.45 & 51.75 && 21.09 & 14.50 & 40.18 & 67.40 & 77.47 && + 10.85 \\ 

        Aligned-MTL \citep{senushkin2023independent} && 0.588 & 0.152 && 40.58 & 70.37 & 52.71 && 21.17 & 14.55 & 40.07 & 67.23 & 77.39 && + 10.71 \\

        \rowcolor{_gray}
        Ours && \textbf{0.576} & \textbf{0.143} && \textbf{41.20} & \textbf{71.03} & \textbf{53.76} &&  \textbf{20.42} & \textbf{13.75} & \textbf{42.20} & \textbf{69.22} & \textbf{78.88} && \textbf{+ 13.13} \\ \bottomrule
    \end{tabular}
    \label{tab:nyud_hrnet_equal}
\end{table}

\begin{table}[H]
    \caption{The experimental results of different multi-task optimization methods on NYUD-v2 with HRNet-18. The losses are weighted using Dynamic Weight Average (DWA). Experiments are repeated over 3 random seeds and average values are presented. $\triangle_m$ $\uparrow$(\%) is used to indicate the percentage improvement in multi-task performance (MTP). The best results are expressed in \textbf{bold} numbers.}
    \vspace{2pt}
    \centering
    \footnotesize
    \renewcommand\arraystretch{1.20}
    \begin{tabular}{ll@{\enspace}ccl@{\enspace}cccl@{\enspace}cccccl@{\enspace}c}
        \toprule
        \multicolumn{1}{c}{Tasks} && \multicolumn{2}{c}{Depth} && \multicolumn{3}{c}{SemSeg} && \multicolumn{5}{c}{Surface Normal} && \\
        \cmidrule(r){1-1} \cmidrule(r){3-4} \cmidrule(r){6-8} \cmidrule(r){10-14}
        
        \multicolumn{1}{c}{\multirow{2}{*}{Method}}            && \multicolumn{2}{c}{\begin{tabular}[c]{@{}c@{}}Distance\\ (Lower Better)\end{tabular}} && \multicolumn{3}{c}{\begin{tabular}[c]{@{}c@{}}(\%)\\ (Higher Better)\end{tabular}}        && \multicolumn{2}{c}{\begin{tabular}[c]{@{}c@{}}Angle Distance\\ (Lower Better)\end{tabular}} & \multicolumn{3}{c}{\begin{tabular}[c]{@{}c@{}}Within t degree (\%)\\ (Higher Better)\end{tabular}} && MTP \\
        
        && rmse & abs\_rel && mIoU & PAcc & mAcc && mean & median & 11.25 & 22.5 & 30 && $\triangle_m$ $\uparrow$(\%) \\ \toprule

        Independent && 0.667 & 0.186 && 33.18 & 65.04 & 45.07 && 20.75 & 14.04 & 41.32 & 68.26 & 78.04 && + 0.00 \\ \midrule

        GD && 0.592 & 0.146 && 40.86 & 70.19 & 53.01 && 21.15 & 14.52 & 40.20 & 67.36 & 77.48 && + 10.82\\ 

        MGDA \citep{RN36} && 0.593 & 0.147 && 40.46 & 70.10 & 52.83 && 21.30 & 14.68 & 39.73 & 66.90 & 77.16 && + 10.13 \\ 
        
        PCGrad \citep{RN20} && 0.593 & 0.147 && 40.34 & 70.00 & 52.37 && 21.36 & 14.77 & 39.57 & 66.78 & 77.07 && + 9.91 \\ 
        
        CAGrad \citep{RN18} && 0.576 & 0.146 && 40.52 & 70.23 & 52.73 && 21.09 & 14.59 & 40.18 & 67.40 & 77.49 && + 11.38 \\ 

        Aligned-MTL \citep{senushkin2023independent} && 0.590 & 0.147 && 40.43 & 70.09 & 52.66 && 21.18 & 14.61 & 39.98 & 67.21 & 77.39 && + 10.44 \\

        \rowcolor{_gray}
        Ours && \textbf{0.565} & \textbf{0.141} && \textbf{41.64} & \textbf{70.97} & \textbf{54.49} &&  \textbf{20.35} & \textbf{13.48} & \textbf{43.04} & \textbf{69.60} & \textbf{78.95} && \textbf{+ 14.24} \\ \bottomrule
    \end{tabular}
    \label{tab:nyud_hrnet_dwa}
\end{table}

\begin{table}[H]
    \caption{The experimental results of different multi-task optimization methods on NYUD-v2 with HRNet-18. The losses are weighted by homoscedastic uncertainty. Experiments are repeated over 3 random seeds and average values are presented. $\triangle_m$ $\uparrow$(\%) is used to indicate the percentage improvement in multi-task performance (MTP). The best results are expressed in \textbf{bold} numbers.}
    \centering
    \footnotesize
    \renewcommand\arraystretch{1.20}
    \begin{tabular}{ll@{\enspace}ccl@{\enspace}cccl@{\enspace}cccccl@{\enspace}c}
        \toprule
        \multicolumn{1}{c}{Tasks} && \multicolumn{2}{c}{Depth} && \multicolumn{3}{c}{SemSeg} && \multicolumn{5}{c}{Surface Normal} && \\
        \cmidrule(r){1-1} \cmidrule(r){3-4} \cmidrule(r){6-8} \cmidrule(r){10-14}
        
        \multicolumn{1}{c}{\multirow{2}{*}{Method}}            && \multicolumn{2}{c}{\begin{tabular}[c]{@{}c@{}}Distance\\ (Lower Better)\end{tabular}} && \multicolumn{3}{c}{\begin{tabular}[c]{@{}c@{}}(\%)\\ (Higher Better)\end{tabular}}        && \multicolumn{2}{c}{\begin{tabular}[c]{@{}c@{}}Angle Distance\\ (Lower Better)\end{tabular}} & \multicolumn{3}{c}{\begin{tabular}[c]{@{}c@{}}Within t degree (\%)\\ (Higher Better)\end{tabular}} && MTP \\
        
        && rmse & abs\_rel && mIoU & PAcc & mAcc && mean & median & 11.25 & 22.5 & 30 && $\triangle_m$ $\uparrow$(\%) \\ \toprule

        Independent && 0.667 & 0.186 && 33.18 & 65.04 & 45.07 && 20.75 & 14.04 & 41.32 & 68.26 & 78.04 && + 0.00 \\ \midrule

        GD && 0.589 & 0.148 && 39.93 & 70.15 & 51.99 && 21.13 & 14.46 & 40.47 & 67.28 & 77.38 && + 9.87\\ 

        MGDA \citep{RN36} && 0.590 & 0.148 && 39.78 & 69.77 & 51.80 && 21.24 & 14.69 & 39.78 & 66.94 & 77.22 && + 9.69 \\ 
        
        PCGrad \citep{RN20} && 0.587 & 0.147 && 40.56 & 69.97 & 53.07 && 21.19 & 14.40 & 40.51 & 67.46 & 77.41 && + 10.71 \\ 
        
        CAGrad \citep{RN18} && 0.583 & 0.147 && 40.23 & 70.06 & 52.74 && 21.09 & 14.47 & 40.23 & 67.48 & 77.50 && + 10.73 \\ 

        Aligned-MTL \citep{senushkin2023independent} && 0.589 & 0.147 && 40.08 & 69.91 & 52.23 && 21.15 & 14.47 & 10.19 & 67.45 & 77.45 && + 10.17 \\

        \rowcolor{_gray}
        Ours && \textbf{0.569} & \textbf{0.140} && \textbf{41.16} & \textbf{70.83} & \textbf{53.65} &&  \textbf{20.19} & \textbf{13.39} & \textbf{43.33} & \textbf{70.07} & \textbf{79.30} && \textbf{+ 13.81} \\ \bottomrule
    \end{tabular}
    \label{tab:nyud_hrnet_homosce}
\end{table}

\subsection{NYUD-v2 with ResNet-18}
\label{Append:nyud_resnet18}

\begin{table}[H]
    \caption{The experimental results of different multi-task optimization methods on NYUD-v2 with ResNet-18. The losses of all tasks are evenly weighted. Experiments are repeated over 3 random seeds and average values are presented. $\triangle_m$ $\uparrow$(\%) is used to indicate the percentage improvement in multi-task performance (MTP). The best results are expressed in \textbf{bold} numbers.}
    \vspace{2pt}
    \centering
    \footnotesize
    \renewcommand\arraystretch{1.20}
    \begin{tabular}{ll@{\enspace}ccl@{\enspace}cccl@{\enspace}cccccl@{\enspace}c}
        \toprule
        \multicolumn{1}{c}{Tasks} && \multicolumn{2}{c}{Depth} && \multicolumn{3}{c}{SemSeg} && \multicolumn{5}{c}{Surface Normal} && \\
        \cmidrule(r){1-1} \cmidrule(r){3-4} \cmidrule(r){6-8} \cmidrule(r){10-14}
        
        \multicolumn{1}{c}{\multirow{2}{*}{Method}}            && \multicolumn{2}{c}{\begin{tabular}[c]{@{}c@{}}Distance\\ (Lower Better)\end{tabular}} && \multicolumn{3}{c}{\begin{tabular}[c]{@{}c@{}}(\%)\\ (Higher Better)\end{tabular}}        && \multicolumn{2}{c}{\begin{tabular}[c]{@{}c@{}}Angle Distance\\ (Lower Better)\end{tabular}} & \multicolumn{3}{c}{\begin{tabular}[c]{@{}c@{}}Within t degree (\%)\\ (Higher Better)\end{tabular}} && MTP \\
        
        && rmse & abs\_rel && mIoU & PAcc & mAcc && mean & median & 11.25 & 22.5 & 30 && $\triangle_m$ $\uparrow$(\%) \\ \toprule

        Independent && 0.659 & 0.183 && 34.46 & 65.51 & 46.50 && 23.36 & 16.67 & 34.89 & 62.19 & 73.33 && + 0.00 \\ \midrule

        GD && 0.613 & \textbf{0.160} && 38.54 & 68.89 & 51.04 && 22.09 & 15.35 & 38.29 & 65.12 & 75.61 && + 8.09\\ 

        MGDA \citep{RN36} && 0.616 & 0.165 && \textbf{39.49} & 69.30 & \textbf{52.30} && 22.52 & 15.61 & 37.92 & 64.25 & 74.77 && + 8.24 \\ 
        
        PCGrad \citep{RN20} && 0.618 & 0.164 && 38.76 & 69.01 & 51.12 && 22.05 & 15.28 & 38.55 & 65.36 & 75.77 && + 8.10 \\ 
        
        CAGrad \citep{RN18} && 0.610 & \textbf{0.160} && 39.20 & \textbf{69.38} & 51.58 && 22.18 & 15.61 & 37.65 & 64.70 & 75.42 && + 8.75 \\ 

        Aligned-MTL \citep{senushkin2023independent} && 0.612 & 0.161 && 39.35 & 69.21 & 51.80 && 22.34 & 15.47 & 38.12 & 64.83 & 75.61 && + 8.56 \\

        \rowcolor{_gray}
        Ours && \textbf{0.601} & 0.162 && 38.30 & 68.78 & 51.01 &&  \textbf{21.09} & \textbf{14.31} & \textbf{40.95} & \textbf{67.57} & \textbf{77.50} && \textbf{+ 9.89} \\ \bottomrule
    \end{tabular}
    \label{tab:nyud_resnet_equal}
\end{table}

\begin{table}[H]
    \caption{The experimental results of different multi-task optimization methods on NYUD-v2 with ResNet-18. The weights of tasks are manually tuned. Experiments are repeated over 3 random seeds and average values are presented. $\triangle_m$ $\uparrow$(\%) is used to indicate the percentage improvement in multi-task performance (MTP). The best results are expressed in \textbf{bold} numbers.}
    \centering
    \footnotesize
    \renewcommand\arraystretch{1.20}
    \begin{tabular}{ll@{\enspace}ccl@{\enspace}cccl@{\enspace}cccccl@{\enspace}c}
        \toprule
        \multicolumn{1}{c}{Tasks} && \multicolumn{2}{c}{Depth} && \multicolumn{3}{c}{SemSeg} && \multicolumn{5}{c}{Surface Normal} && \\
        \cmidrule(r){1-1} \cmidrule(r){3-4} \cmidrule(r){6-8} \cmidrule(r){10-14}
        
        \multicolumn{1}{c}{\multirow{2}{*}{Method}}            && \multicolumn{2}{c}{\begin{tabular}[c]{@{}c@{}}Distance\\ (Lower Better)\end{tabular}} && \multicolumn{3}{c}{\begin{tabular}[c]{@{}c@{}}(\%)\\ (Higher Better)\end{tabular}}        && \multicolumn{2}{c}{\begin{tabular}[c]{@{}c@{}}Angle Distance\\ (Lower Better)\end{tabular}} & \multicolumn{3}{c}{\begin{tabular}[c]{@{}c@{}}Within t degree (\%)\\ (Higher Better)\end{tabular}} && MTP \\
        
        && rmse & abs\_rel && mIoU & PAcc & mAcc && mean & median & 11.25 & 22.5 & 30 && $\triangle_m$ $\uparrow$(\%) \\ \toprule

        Independent && 0.659 & 0.183 && 34.46 & 65.51 & 46.50 && 23.36 & 16.67 & 34.89 & 62.19 & 73.33 && + 0.00 \\ \midrule

        GD && 0.622 & 0.163 && 38.07 & 68.31 & 50.84 && 21.49 & 14.63 & 40.04 & 66.87 & 76.87 && + 8.03\\ 

        MGDA \citep{RN36} && 0.635 & 0.166 && 38.18 & 68.22 & 49.70 && 22.07 & 15.01 & 39.11 & 65.81 & 75.90 && + 6.65 \\ 
        
        PCGrad \citep{RN20} && 0.617 & 0.165 && 37.80 & 67.94 & 50.00 && 21.52 & 14.53 & 40.27 & 66.91 & 76.71 && + 7.98\\ 
        
        CAGrad \citep{RN18} && 0.620 & 0.163 && 37.02 & 67.96 & 49.71 && 21.67 & 14.80 & 39.55 & 66.46 & 76.56 && + 6.86 \\ 

        Aligned-MTL \citep{senushkin2023independent} && 0.625 & 0.166 && 38.01 & 68.12 & 50.43 && 21.62 & 14.75 & 39.62 & 66.58 & 76.68 && + 7.64 \\

        \rowcolor{_gray}
        Ours && \textbf{0.600} & \textbf{0.157} && \textbf{39.00} & \textbf{69.02} & \textbf{51.11} &&  \textbf{20.65} & \textbf{13.77} & \textbf{42.78} & \textbf{68.97} & \textbf{78.30} && \textbf{+ 11.24} \\ \bottomrule
    \end{tabular}
    \label{tab:nyud_resnet_tuned}
\end{table}

\begin{table}[H]
    \caption{The experimental results of different multi-task optimization methods on NYUD-v2 with ResNet-18. The losses are weighted using Dynamic Weight Average (DWA). Experiments are repeated over 3 random seeds and average values are presented. $\triangle_m$ $\uparrow$(\%) is used to indicate the percentage improvement in multi-task performance (MTP). The best results are expressed in \textbf{bold} numbers.}
    \centering
    \footnotesize
    \renewcommand\arraystretch{1.20}
    \begin{tabular}{ll@{\enspace}ccl@{\enspace}cccl@{\enspace}cccccl@{\enspace}c}
        \toprule
        \multicolumn{1}{c}{Tasks} && \multicolumn{2}{c}{Depth} && \multicolumn{3}{c}{SemSeg} && \multicolumn{5}{c}{Surface Normal} && \\
        \cmidrule(r){1-1} \cmidrule(r){3-4} \cmidrule(r){6-8} \cmidrule(r){10-14}
        
        \multicolumn{1}{c}{\multirow{2}{*}{Method}}            && \multicolumn{2}{c}{\begin{tabular}[c]{@{}c@{}}Distance\\ (Lower Better)\end{tabular}} && \multicolumn{3}{c}{\begin{tabular}[c]{@{}c@{}}(\%)\\ (Higher Better)\end{tabular}}        && \multicolumn{2}{c}{\begin{tabular}[c]{@{}c@{}}Angle Distance\\ (Lower Better)\end{tabular}} & \multicolumn{3}{c}{\begin{tabular}[c]{@{}c@{}}Within t degree (\%)\\ (Higher Better)\end{tabular}} && MTP \\
        
        && rmse & abs\_rel && mIoU & PAcc & mAcc && mean & median & 11.25 & 22.5 & 30 && $\triangle_m$ $\uparrow$(\%) \\ \toprule

        Independent && 0.659 & 0.183 && 34.46 & 65.51 & 46.50 && 23.36 & 16.67 & 34.89 & 62.19 & 73.33 && + 0.00 \\ \midrule

        GD && 0.607 & 0.159 && 38.65 & 68.99 & 51.72 && 22.17 & 15.52 & 38.51 & 65.11 & 75.47 && + 8.38 \\ 

        MGDA \citep{RN36} && 0.616 & 0.165 && 39.38 & 69.18 & 51.78 && 22.53 & 15.69 & 37.68 & 64.12 & 74.67 && + 8.12 \\ 
        
        PCGrad \citep{RN20} && 0.612 & 0.162 && 38.56 & 68.97 & 51.16 && 22.11 & 15.40 & 38.20 & 65.07 & 75.58 && + 8.13 \\ 
        
        CAGrad \citep{RN18} && 0.609 & 0.157 && \textbf{39.40} & \textbf{69.30} & \textbf{51.84} && 22.28 & 15.68 & 37.62 & 64.46 & 75.24 && + 8.85 \\ 

        Aligned-MTL \citep{senushkin2023independent} && 0.609 & 0.161 && 39.22 & 69.04 & 69.01 && 22.15 & 15.48 & 38.30 & 65.08 & 75.52 && + 8.86 \\

        \rowcolor{_gray}
        Ours && \textbf{0.592} & \textbf{0.148} && 38.41 & 68.82 & 51.15 &&  \textbf{20.96} & \textbf{14.25} & \textbf{40.97} & \textbf{67.59} & \textbf{77.10} && \textbf{+ 10.63} \\ \bottomrule
    \end{tabular}
    \label{tab:nyud_resnet_dwa}
\end{table}

\begin{table}[H]
    \caption{The experimental results of different multi-task optimization methods on NYUD-v2 with ResNet-18. The losses are weighted by homoscedastic uncertainty. Experiments are repeated over 3 random seeds and average values are presented. $\triangle_m$ $\uparrow$(\%) is used to indicate the percentage improvement in multi-task performance (MTP). The best results are expressed in \textbf{bold} numbers.}
    \centering
    \footnotesize
    \renewcommand\arraystretch{1.20}
    \begin{tabular}{ll@{\enspace}ccl@{\enspace}cccl@{\enspace}cccccl@{\enspace}c}
        \toprule
        \multicolumn{1}{c}{Tasks} && \multicolumn{2}{c}{Depth} && \multicolumn{3}{c}{SemSeg} && \multicolumn{5}{c}{Surface Normal} && \\
        \cmidrule(r){1-1} \cmidrule(r){3-4} \cmidrule(r){6-8} \cmidrule(r){10-14}
        
        \multicolumn{1}{c}{\multirow{2}{*}{Method}}            && \multicolumn{2}{c}{\begin{tabular}[c]{@{}c@{}}Distance\\ (Lower Better)\end{tabular}} && \multicolumn{3}{c}{\begin{tabular}[c]{@{}c@{}}(\%)\\ (Higher Better)\end{tabular}}        && \multicolumn{2}{c}{\begin{tabular}[c]{@{}c@{}}Angle Distance\\ (Lower Better)\end{tabular}} & \multicolumn{3}{c}{\begin{tabular}[c]{@{}c@{}}Within t degree (\%)\\ (Higher Better)\end{tabular}} && MTP \\
        
        && rmse & abs\_rel && mIoU & PAcc & mAcc && mean & median & 11.25 & 22.5 & 30 && $\triangle_m$ $\uparrow$(\%) \\ \toprule

        Independent && 0.659 & 0.183 && 34.46 & 65.51 & 46.50 && 23.36 & 16.67 & 34.89 & 62.19 & 73.33 && + 0.00 \\ \midrule

        GD && 0.608 & 0.158 && 39.02 & 69.29 & 51.48 && 22.06 & 15.47 & 37.98 & 65.01 & 75.68 && + 8.85\\ 

        MGDA \citep{RN36} && 0.623 & 0.162 && \textbf{39.43} & \textbf{69.30} & \textbf{51.79} && 22.65 & 15.77 & 37.39 & 64.03 & 74.66 && + 7.64 \\ 
        
        PCGrad \citep{RN20} && 0.606 & 0.158 && 39.40 & 69.25 & 51.68 && 22.25 & 15.43 & 38.05 & 64.81 & 75.35 && + 9.04 \\ 
        
        CAGrad \citep{RN18} && 0.600 & 0.156 && 38.62 & 68.74 & 51.03 && 22.27 & 15.43 & 38.11 & 64.85 & 75.32 && + 8.56 \\ 

        Aligned-MTL \citep{senushkin2023independent} && 0.605 & 0.158 && 39.10 & 69.23 & 51.56 && 22.13 & 15.49 & 37.77 & 64.89 & 75.51 && + 8.97 \\

        \rowcolor{_gray}
        Ours && \textbf{0.595} & \textbf{0.153} && 38.67 & 69.01 & 51.01 &&  \textbf{21.05} & \textbf{14.11} & \textbf{41.43} & \textbf{67.91} & \textbf{77.59} && \textbf{+ 10.61} \\ \bottomrule
    \end{tabular}
    \label{tab:nyud_resnet_homosce}
\end{table}

\subsection{PASCAL-Context with HRNet-18}
\label{Append:pascal_hrnet18}
\begin{table}[H]
    \caption{The experimental results of different multi-task optimization methods on PASCAL-Context dataset with HRNet-18. The losses of all tasks are evenly weighted. Experiments are repeated over 3 random seeds and average values are presented. $\triangle_m$ $\uparrow$(\%) is used to indicate the percentage improvement in multi-task performance (MTP). The best results are expressed in \textbf{bold} numbers.}
    \centering
    \footnotesize
    \renewcommand\arraystretch{1.20}
    \begin{tabular}{l@{\hspace{4pt}}l@{\hspace{4pt}}cc@{\hspace{2pt}}l@{\hspace{2pt}}c@{\hspace{4pt}}l@{\hspace{2pt}}cc@{\hspace{2pt}}l@{\hspace{4pt}}ccccc@{\hspace{6pt}}l@{\hspace{4pt}}c}
        \toprule
        \multicolumn{1}{c}{Tasks} && \multicolumn{2}{c}{SemSeg} && \multicolumn{1}{c}{PartSeg} && \multicolumn{2}{c}{Saliency} && \multicolumn{5}{c}{Surface Normal} \\
        \cmidrule(l){1-1} \cmidrule(r){3-4} \cmidrule(r){6-6} \cmidrule(r){8-9} \cmidrule(r){11-15}
        
        \multicolumn{1}{c}{\multirow{2}{*}{Method}}            && \multicolumn{2}{c}{\begin{tabular}[c]{@{}c@{}}(Higher Better)\end{tabular}} && \multicolumn{1}{c}{\begin{tabular}[c]{@{}c@{}}(Higher Better)\end{tabular}}        && \multicolumn{2}{c}{\begin{tabular}[c]{@{}c@{}} (Higher Better)\end{tabular}} && \multicolumn{2}{c}{\begin{tabular}[c]{@{}c@{}}Angle Distance\\ (Lower Better)\end{tabular}} & \multicolumn{3}{c}{\begin{tabular}[c]{@{}c@{}}Within t degree (\%)\\ (Higher Better)\end{tabular}} && MTP \\
        
        && mIoU & PAcc && mIoU && mIoU & maxF && mean & median & 11.25 & 22.5 & 30 && $\triangle_m$ $\uparrow$(\%) \\ \toprule

        Independent && 60.30 & 89.88 && 60.56 && 67.05 & 78.98 && 14.76 & 11.92 & 47.61 & 81.02 & 90.65 && + 0.00 \\ \midrule

        GD && 61.65 & 90.14 && 58.35 && 65.80 & 78.07 && 16.71 & 13.82 & 39.70 & 75.18 & 87.17 && - 4.12 \\ 

        MGDA \citep{RN36} && \textbf{63.52} & \textbf{90.68} && 60.38 && 64.99 & 77.57 && 17.00 & 14.13 & 38.58 & 74.47 & 86.77 && - 3.30  \\ 
        
        PCGrad \citep{RN20} && 63.21 & 90.33 && 60.42 && 64.77 & 77.48 && 16.65 & 13.71 & 39.64 & 75.10 & 87.07 && - 2.90  \\ 
        
        CAGrad \citep{RN18} && 63.44 & 90.53 && 60.11 && 64.83 & 77.52 && 16.92 & 13.98 & 39.03 & 75.01 & 86.92 && - 3.37 \\ 

        Aligned-MTL \citep{senushkin2023independent} && 62.38 & 90.31 && 60.36 && 65.68 & 79.92 && 16.73 & 13.88 & 39.68 & 75.18 & 87.10 && - 3.07 \\

        \rowcolor{_gray}
        Ours && 62.64 & 90.39 && \textbf{61.42} && \textbf{67.10} & \textbf{78.91} &&  \textbf{15.58} & \textbf{12.68} & \textbf{43.93} & \textbf{78.69} & \textbf{89.26} && \textbf{- 0.05} \\ \bottomrule
    \end{tabular}
    \label{tab:pascal_hrnet_equal}
\end{table}

\begin{table}[H]
    \caption{The experimental results of different multi-task optimization methods on PASCAL-Context dataset with HRNet-18. The losses are weighted using Dynamic Weight Average (DWA). Experiments are repeated over 3 random seeds and average values are presented. $\triangle_m$ $\uparrow$(\%) is used to indicate the percentage improvement in multi-task performance (MTP). The best results are expressed in \textbf{bold} numbers.}
    \centering
    \footnotesize
    \renewcommand\arraystretch{1.20}
    \begin{tabular}{l@{\hspace{4pt}}l@{\hspace{4pt}}cc@{\hspace{2pt}}l@{\hspace{2pt}}c@{\hspace{4pt}}l@{\hspace{2pt}}cc@{\hspace{2pt}}l@{\hspace{4pt}}ccccc@{\hspace{6pt}}l@{\hspace{4pt}}c}
        \toprule
        \multicolumn{1}{c}{Tasks} && \multicolumn{2}{c}{SemSeg} && \multicolumn{1}{c}{PartSeg} && \multicolumn{2}{c}{Saliency} && \multicolumn{5}{c}{Surface Normal} \\
        \cmidrule(l){1-1} \cmidrule(r){3-4} \cmidrule(r){6-6} \cmidrule(r){8-9} \cmidrule(r){11-15}
        
        \multicolumn{1}{c}{\multirow{2}{*}{Method}}            && \multicolumn{2}{c}{\begin{tabular}[c]{@{}c@{}}(Higher Better)\end{tabular}} && \multicolumn{1}{c}{\begin{tabular}[c]{@{}c@{}}(Higher Better)\end{tabular}}        && \multicolumn{2}{c}{\begin{tabular}[c]{@{}c@{}} (Higher Better)\end{tabular}} && \multicolumn{2}{c}{\begin{tabular}[c]{@{}c@{}}Angle Distance\\ (Lower Better)\end{tabular}} & \multicolumn{3}{c}{\begin{tabular}[c]{@{}c@{}}Within t degree (\%)\\ (Higher Better)\end{tabular}} && MTP \\
        
        && mIoU & PAcc && mIoU && mIoU & maxF && mean & median & 11.25 & 22.5 & 30 && $\triangle_m$ $\uparrow$(\%) \\ \toprule

        Independent && 60.30 & 89.88 && 60.56 && 67.05 & 78.98 && 14.76 & 11.92 & 47.61 & 81.02 & 90.65 && + 0.00 \\ \midrule

        GD && \textbf{64.70} & \textbf{91.18} && 60.60 && 66.54 & 78.18 && 15.13 & 12.23 & 45.77 & 79.91 & 89.96 && + 1.02\\ 

        MGDA \citep{RN36} && 64.56 & 90.72 && 60.69 && 65.93 & 77.37 && 16.87 & 13.95 & 39.35 & 74.69 & 86.82 && - 2.17  \\ 
        
        PCGrad \citep{RN20} && 64.35 & 90.98 && 60.99 && 66.12 & 77.65 && 15.92 & 13.11 & 41.98 & 76.21 & 88.03 && - 0.45  \\ 
        
        CAGrad \citep{RN18} && 64.03 & 90.77 && 60.62 && 66.01 & 77.42 && 16.63 & 13.86 & 40.02 & 75.22 & 87.41 && - 1,98  \\ 

        Aligned-MTL \citep{senushkin2023independent} && 64.41 & 91.00 && 60.77 && 66.09 & 77.51 && 16.22 & 13.48 & 42.26 & 76.92 & 88.66 && - 1.04 \\

        \rowcolor{_gray}
        Ours && 63.89 & 90.73 && \textbf{61.89} && \textbf{67.39 }& \textbf{79.08} &&  \textbf{14.94} & \textbf{12.10} & \textbf{46.27} & \textbf{80.57} & \textbf{90.41} && \textbf{+ 1.86} \\ \bottomrule
    \end{tabular}
    \label{tab:pascal_hrnet_dwa}
\end{table}

\begin{table}[H]
    \caption{The experimental results of different multi-task optimization methods on PASCAL-Context dataset with HRNet-18. The losses are weighted by homoscedastic uncertainty. Experiments are repeated over 3 random seeds and average values are presented. $\triangle_m$ $\uparrow$(\%) is used to indicate the percentage improvement in multi-task performance (MTP). The best results are expressed in \textbf{bold} numbers.}
    \centering
    \footnotesize
    \renewcommand\arraystretch{1.20}
    \begin{tabular}{l@{\hspace{4pt}}l@{\hspace{4pt}}cc@{\hspace{2pt}}l@{\hspace{2pt}}c@{\hspace{4pt}}l@{\hspace{2pt}}cc@{\hspace{2pt}}l@{\hspace{4pt}}ccccc@{\hspace{6pt}}l@{\hspace{4pt}}c}
        \toprule
        \multicolumn{1}{c}{Tasks} && \multicolumn{2}{c}{SemSeg} && \multicolumn{1}{c}{PartSeg} && \multicolumn{2}{c}{Saliency} && \multicolumn{5}{c}{Surface Normal} \\
        \cmidrule(l){1-1} \cmidrule(r){3-4} \cmidrule(r){6-6} \cmidrule(r){8-9} \cmidrule(r){11-15}
        
        \multicolumn{1}{c}{\multirow{2}{*}{Method}}            && \multicolumn{2}{c}{\begin{tabular}[c]{@{}c@{}}(Higher Better)\end{tabular}} && \multicolumn{1}{c}{\begin{tabular}[c]{@{}c@{}}(Higher Better)\end{tabular}}        && \multicolumn{2}{c}{\begin{tabular}[c]{@{}c@{}} (Higher Better)\end{tabular}} && \multicolumn{2}{c}{\begin{tabular}[c]{@{}c@{}}Angle Distance\\ (Lower Better)\end{tabular}} & \multicolumn{3}{c}{\begin{tabular}[c]{@{}c@{}}Within t degree (\%)\\ (Higher Better)\end{tabular}} && MTP \\
        
        && mIoU & PAcc && mIoU && mIoU & maxF && mean & median & 11.25 & 22.5 & 30 && $\triangle_m$ $\uparrow$(\%) \\ \toprule

        Independent && 60.30 & 89.88 && 60.56 && 67.05 & 78.98 && 14.76 & 11.92 & 47.61 & 81.02 & 90.65 && + 0.00 \\ \midrule

        GD && 64.40 & 91.05 && 62.28 && 68.13 & 79.64. && 14.95 & 12.14 & 46.19 & 80.36 & 90.34 && + 2.49\\ 

        MGDA \citep{RN36} && 64.04 & 90.88 && 61.18 && 67.65 & 79.23 && 15.02 & 12.20 & 45.93 & 80.02 & 90.11 && + 1.59 \\ 
        
        PCGrad \citep{RN20} && \textbf{64.75} & \textbf{91.11} && \textbf{62.41} && 68.16 & 79.65 && 14.86 & 11.93 & 47.03 & 80.60 & 90.31 && + 2.85 \\ 
        
        CAGrad \citep{RN18} && 64.01 & 90.77 && 61.32 && 67.55 & 79.01 && 15.08 & 12.31 & 45.87 & 79.98 & 90.05 && + 1.50 \\ 

        Aligned-MTL \citep{senushkin2023independent} && 64.48 & 91.09 && 62.23 && 67.61 & 79.18 && 15.01 & 12.11 & 46.01 & 80.17 & 90.20 && + 2.21 \\

        \rowcolor{_gray}
        Ours && 64.01& 90.70 && 61.78 && \textbf{68.32} & \textbf{81.50} &&  \textbf{14.53} & \textbf{11.52} & \textbf{48.21} & \textbf{81.88} & \textbf{90.74} && \textbf{+ 2.90} \\ \bottomrule
    \end{tabular}
    \label{tab:pascal_hrnet_homosce}
\end{table}

\newpage
\section{Additional Ablation Studies}
\label{Append:ablation}
\textbf{The order of updating tasks in Phase 1 has little impact on multi-task performance.} To learn task priority in shared parameters, Phase 1 updates each task-specific gradient one by one sequentially. To determine the influence of the order of tasks on optimization, we randomly chose 5 sequences of tasks and showed their performance in \cref{tab:nyud_hrnet_random_sequence}. From the results, we can see that the order of updating tasks in Phase 1 does not have a significant impact on multi-task performance.

\begin{table}[h]
    \caption{The experimental results for NYUD-v2 with HRNet-18 involved exploring different task sequence orders in Phase 1. We conducted ablation experiments with five randomly selected task sequences. Each task was represented by a single alphabet letter, as follows: S for semantic segmentation, D for depth estimation, E for edge detection, and N for surface normal estimation.}
    \centering
    \footnotesize
    \renewcommand\arraystretch{1.25}
    \begin{tabular}{ll@{\enspace}ccl@{\enspace}cccl@{\enspace}cccccl@{\enspace}c}
        \toprule
        \multicolumn{1}{c}{Tasks} && \multicolumn{2}{c}{Depth} && \multicolumn{3}{c}{SemSeg} && \multicolumn{5}{c}{Surface Normal} && \\
        \cmidrule(r){1-1} \cmidrule(r){3-4} \cmidrule(r){6-8} \cmidrule(r){10-14}
        
        \multicolumn{1}{c}{\multirow{2}{*}{Method}}            && \multicolumn{2}{c}{\begin{tabular}[c]{@{}c@{}}Distance\\ (Lower Better)\end{tabular}} && \multicolumn{3}{c}{\begin{tabular}[c]{@{}c@{}}(\%)\\ (Higher Better)\end{tabular}}        && \multicolumn{2}{c}{\begin{tabular}[c]{@{}c@{}}Angle Distance\\ (Lower Better)\end{tabular}} & \multicolumn{3}{c}{\begin{tabular}[c]{@{}c@{}}Within t degree (\%)\\ (Higher Better)\end{tabular}} && MTP \\
        
        && rmse & abs\_rel && mIoU & PAcc & mAcc && mean & median & 11.25 & 22.5 & 30 && $\triangle_m$ $\uparrow$(\%) \\ \toprule

        Independent && 0.667 & 0.186 && 33.18 & 65.04 & 45.07 && 20.75 & 14.04 & 41.32 & 68.26 & 78.04 && + 0.00 \\ \midrule
        
        N-D-S-E && 0.574 & 0.157 && 41.12 & 70.44 & 53.77 && 19.60 & 12.52 & 46.01 & 71.33 & 80.02 && + 14.47\\
        D-S-N-E && 0.568 & 0.153 && 40.92 & 70.23 & 53.56 && 19.55 & 12.47 & 46.09 & 71.50 & 80.12 && + 14.65\\ 
        E-D-S-N && 0.568 & 0.150 && 40.97 & 70.22 & 53.59 && 19.58 & 12.50 & 46.08 & 71.44 & 80.07 && + 14.65 \\
        D-N-E-S && 0.571 & 0.153 && 41.03 & 70.31 & 53.68 && 19.49 & 12.44 & 46.17 & 71.58 & 80.17 && + 14.71 \\ 
        S-D-E-N && 0.565 & 0.148 && 41.10 & 70.37 & 53.74 &&  19.54 & 12.45 & 46.11 & 71.54 & 80.12 && + 15.00 \\ \bottomrule
    \end{tabular}
    \label{tab:nyud_hrnet_random_sequence}
\end{table}

\textbf{Our method demands the least computational load when compared to previous optimization methods.} In \cref{tab:training_time}, we show the impact of the proposed optimization on training time. The training time for each method is measured in seconds per epoch. To ensure a fair comparison, all methods were evaluated using the same architecture, guaranteeing an equal number of parameters and memory usage. The majority of the computational burden is concentrated on the forward pass, backpropagation, and gradient manipulation. While all optimization methods follow a similar process in the forward pass and backpropagation, the primary distinction arises from gradient manipulation. In Phase 1, no gradient manipulation is required, resulting in the shortest time consumption. In phase 2, it still exhibits the shortest training time compared to previous optimization methods. Unlike these previous methods that handle all shared components of the network, Phase 2 specifically targets the shared convolutional layer along with the task-specific batch normalization layer. This selective focus significantly reduces the time consumed per epoch.

\begin{table}[h!]
\small
\caption{Training time comparison for different multi-task optimization methods on NYUD-v2 with HRNet18.}
\centering
\begin{tabular}{c||c|c|c|c||c|c}
\hline
Method   & MGDA\citep{RN36}   & PCGrad\citep{RN20} & CAGrad\citep{RN18} & Aligned-MTL \citep{senushkin2023independent} & Phase 1 & Phase 2 \\ \hline
Time (s) & 363.98 & 421.48 & 378.12 & 811.57 & 296.74 & 331.53\\ \hline
\end{tabular}
\label{tab:training_time}
\end{table}

\textbf{The speed of learning the task priority differs based on the convolutional layer's position.} Phase 1 establishes the task priority during the initial stages of the network's optimization. Meanwhile, Phase 2 maintains this learned task priority, ensuring robust learning even when the loss for each task fluctuates. However, The timing at which task priority stabilizes varies based on the position of the convolutional layer within the network, as illustrated in \cref{fig:conn_pos}. This may suggest that optimizing by wholly separating each phase could be inefficient.

\begin{figure}[h]
\centering
\begin{subfigure}{.45\textwidth}
    \includegraphics[width=.99\columnwidth]{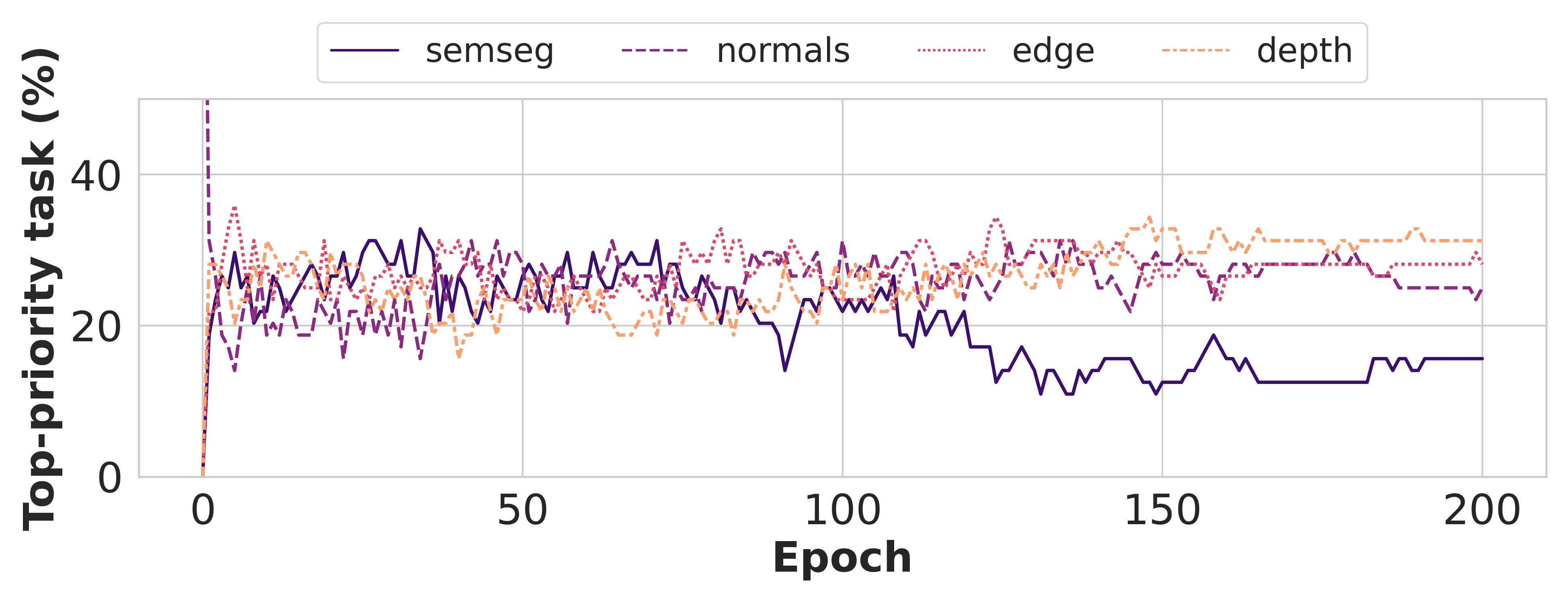}
    \caption{layer0-0-1}
\end{subfigure}
\begin{subfigure}{.45\textwidth}
    \includegraphics[width=.99\columnwidth]{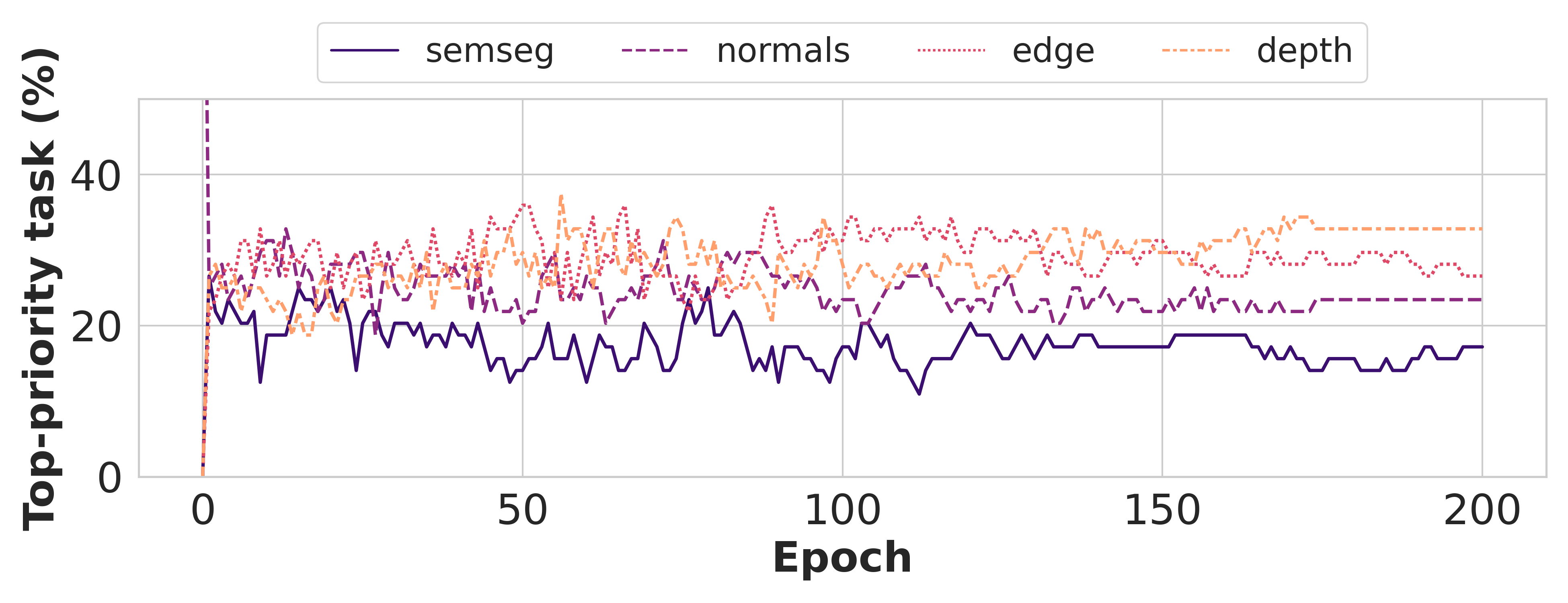}
    \caption{layer0-0-2}
\end{subfigure}

\begin{subfigure}{.45\textwidth}
    \includegraphics[width=.99\columnwidth]{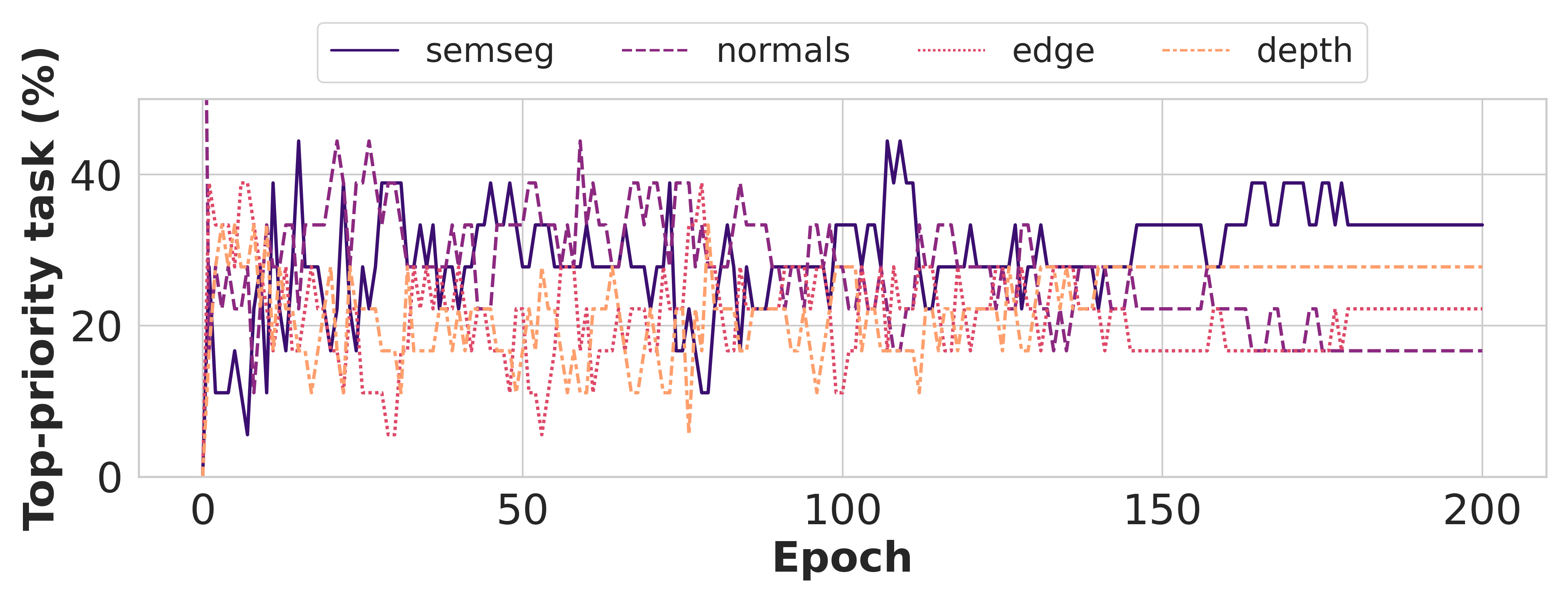}
    \caption{layer1-0-0-0}
\end{subfigure}
\begin{subfigure}{.45\textwidth}
    \includegraphics[width=.99\columnwidth]{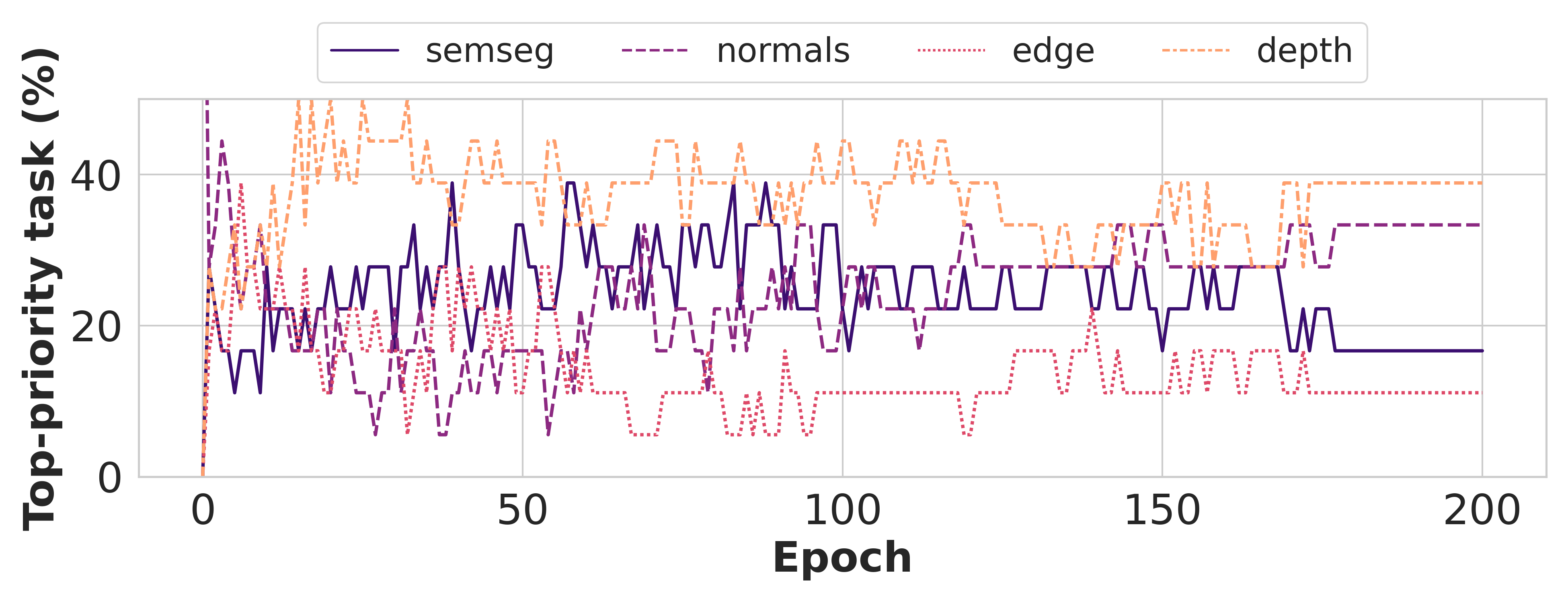}
    \caption{layer1-0-0-1}
\end{subfigure}

\begin{subfigure}{.45\textwidth}
    \includegraphics[width=.99\columnwidth]{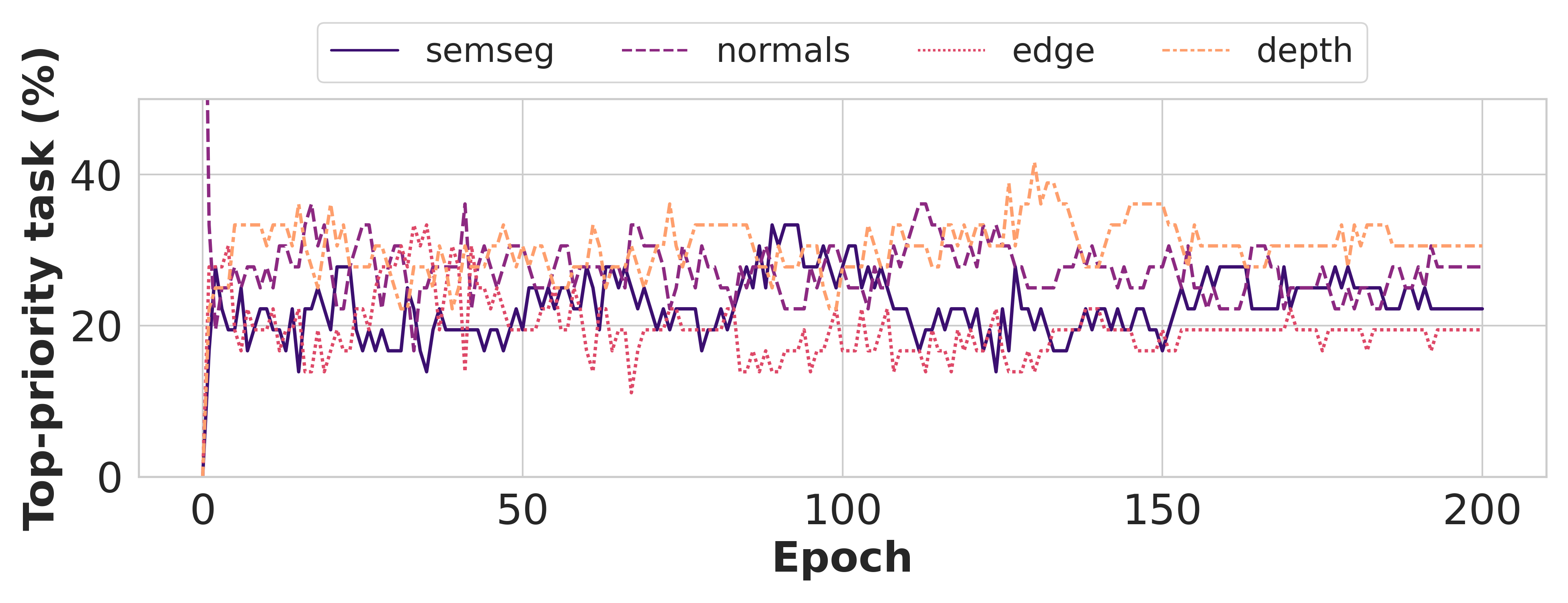}
    \caption{layer1-0-1-0}
\end{subfigure}
\begin{subfigure}{.45\textwidth}
    \includegraphics[width=.99\columnwidth]{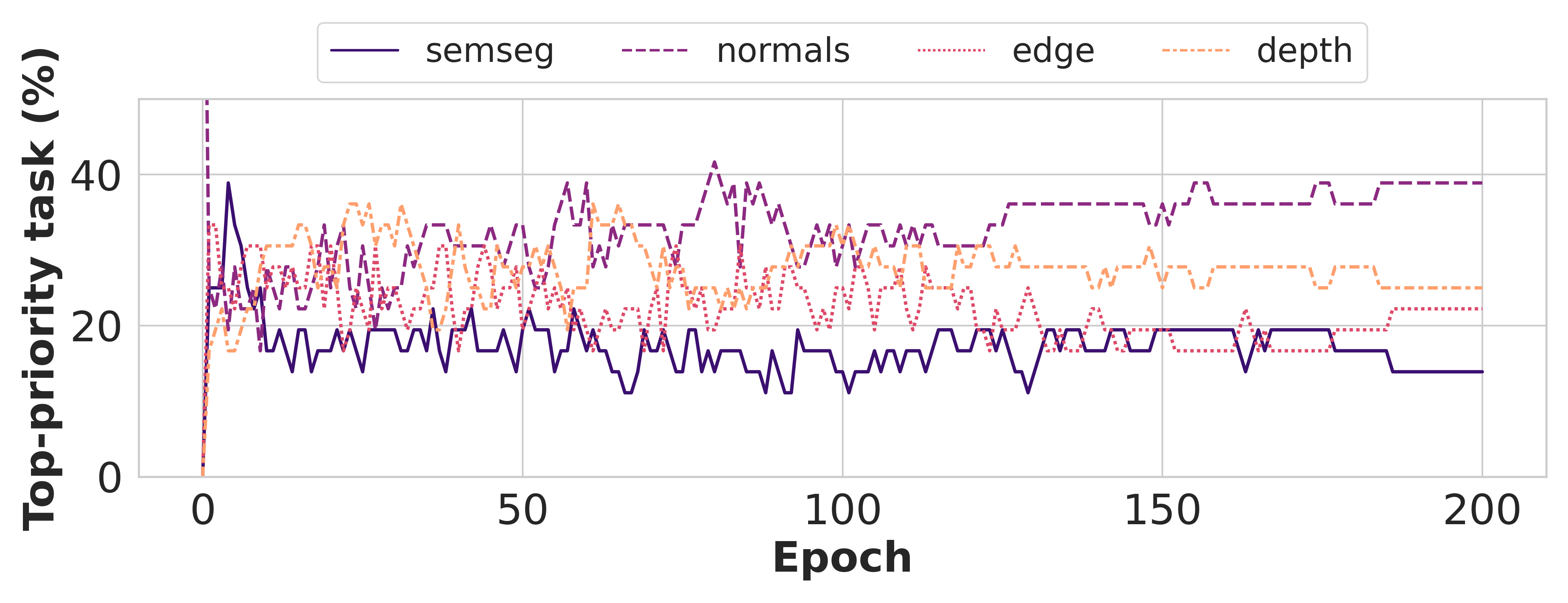}
    \caption{layer1-0-1-1}
\end{subfigure}

\begin{subfigure}{.45\textwidth}
    \includegraphics[width=.99\columnwidth]{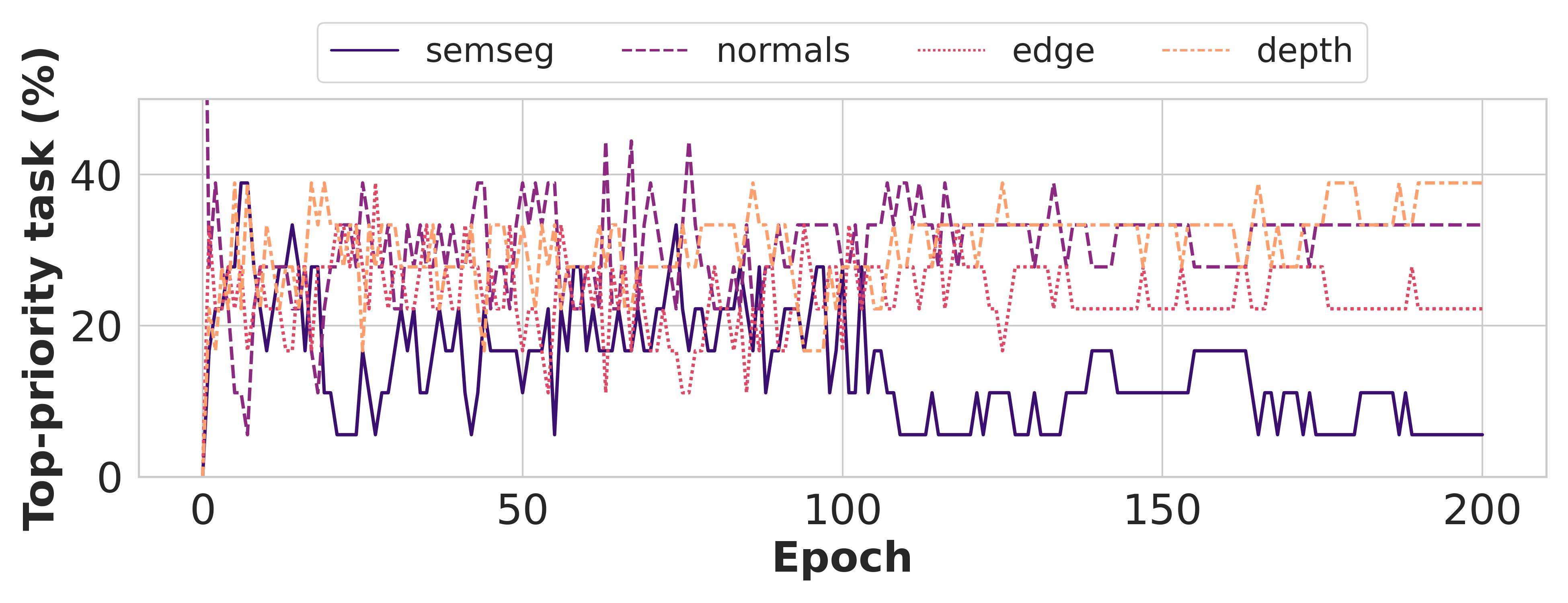}
    \caption{layer2-0-0-0}
\end{subfigure}
\begin{subfigure}{.45\textwidth}
    \includegraphics[width=.99\columnwidth]{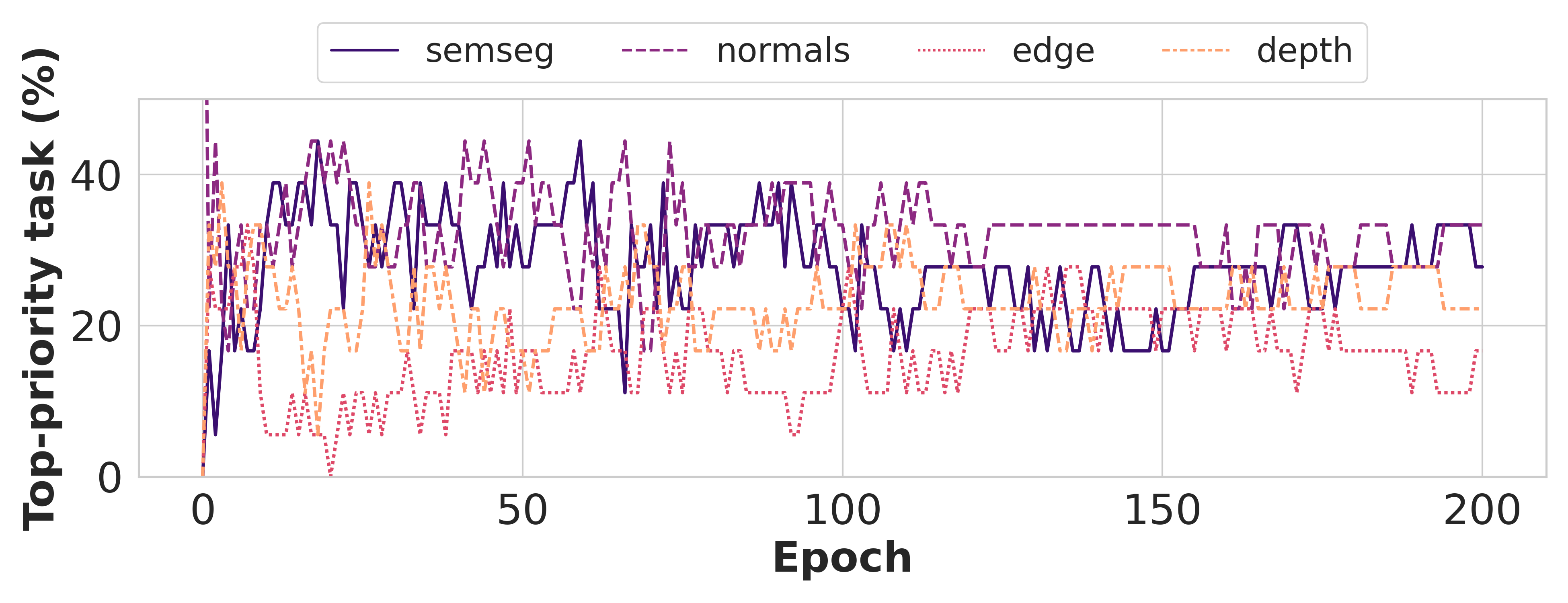}
    \caption{layer2-0-0-1}
\end{subfigure}

\begin{subfigure}{.45\textwidth}
    \includegraphics[width=.99\columnwidth]{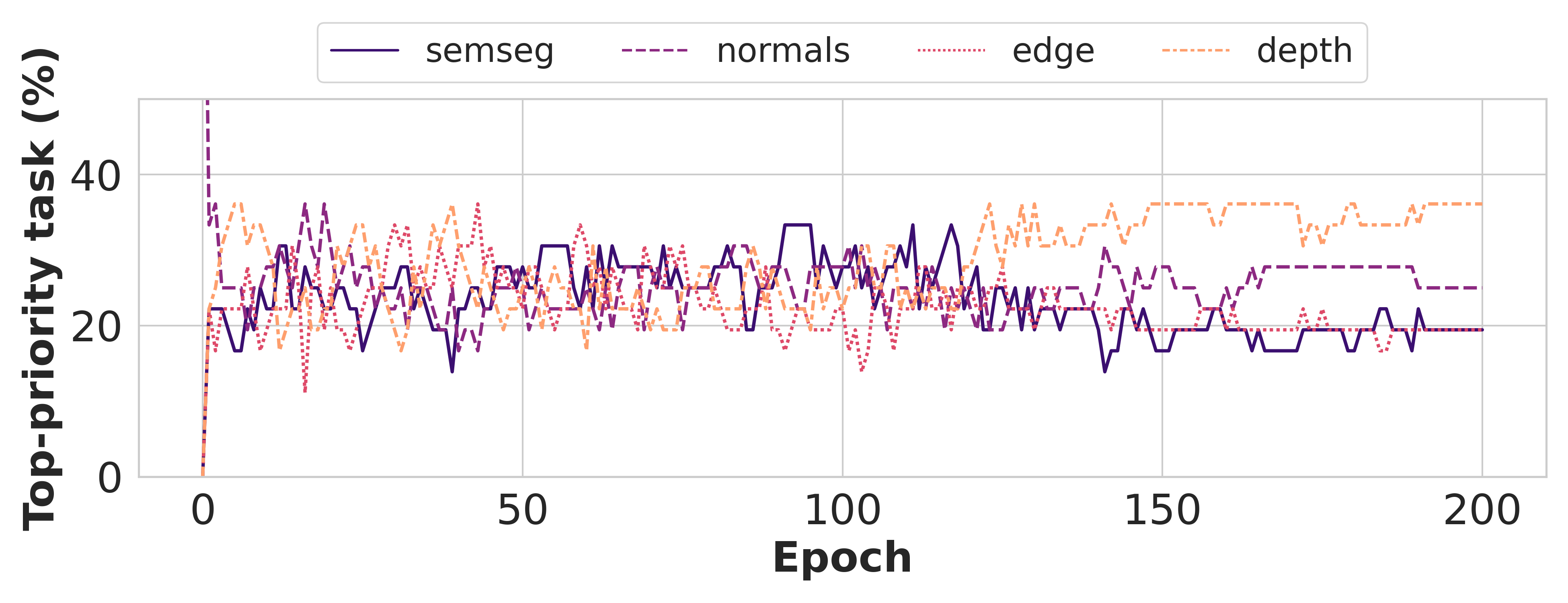}
    \caption{layer2-2-1-0}
\end{subfigure}
\begin{subfigure}{.45\textwidth}
    \includegraphics[width=.99\columnwidth]{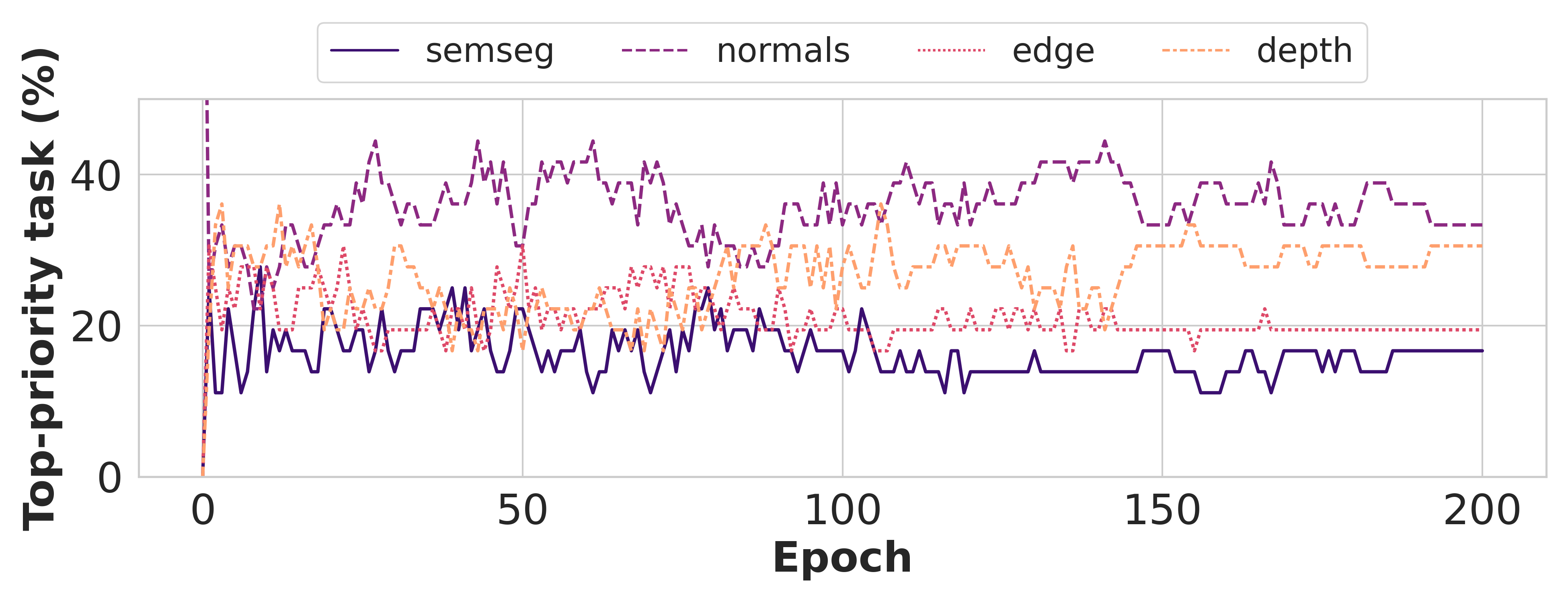}
    \caption{layer2-2-1-1}
\end{subfigure}

\begin{subfigure}{.45\textwidth}
    \includegraphics[width=.99\columnwidth]{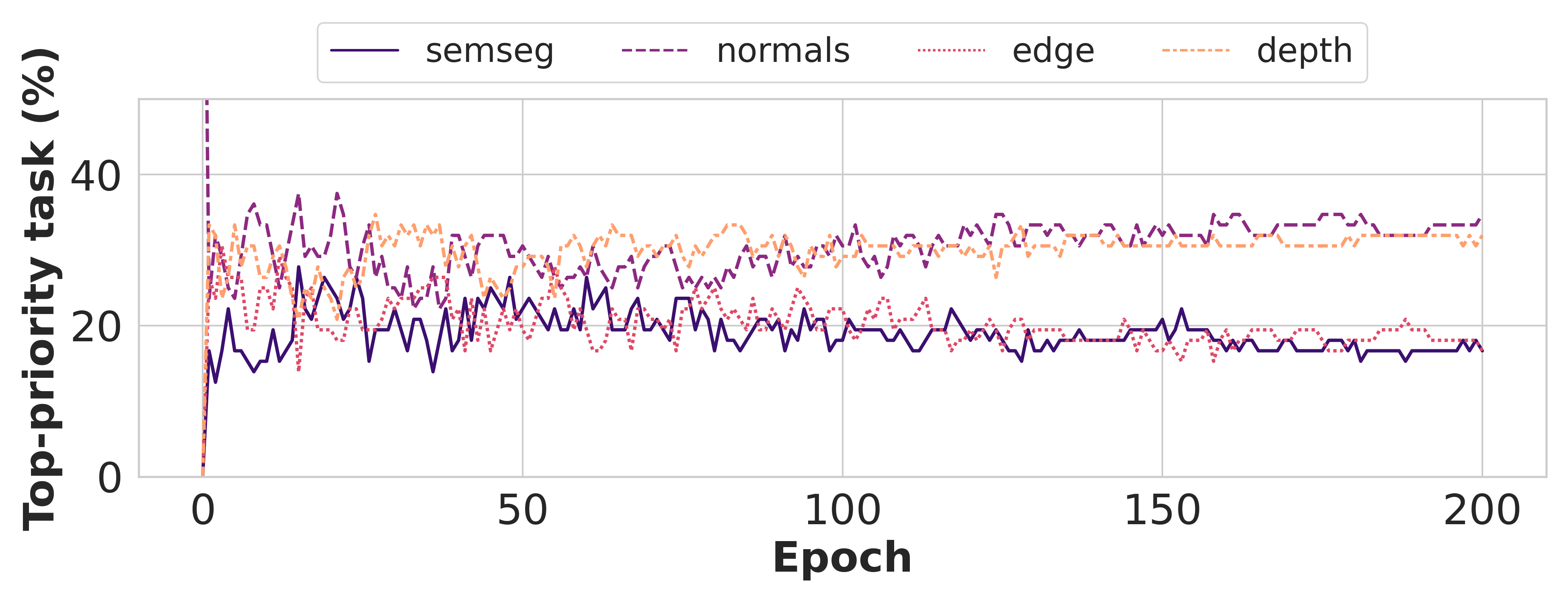}
    \caption{layer2-2-2-0}
\end{subfigure}
\begin{subfigure}{.45\textwidth}
    \includegraphics[width=.99\columnwidth]{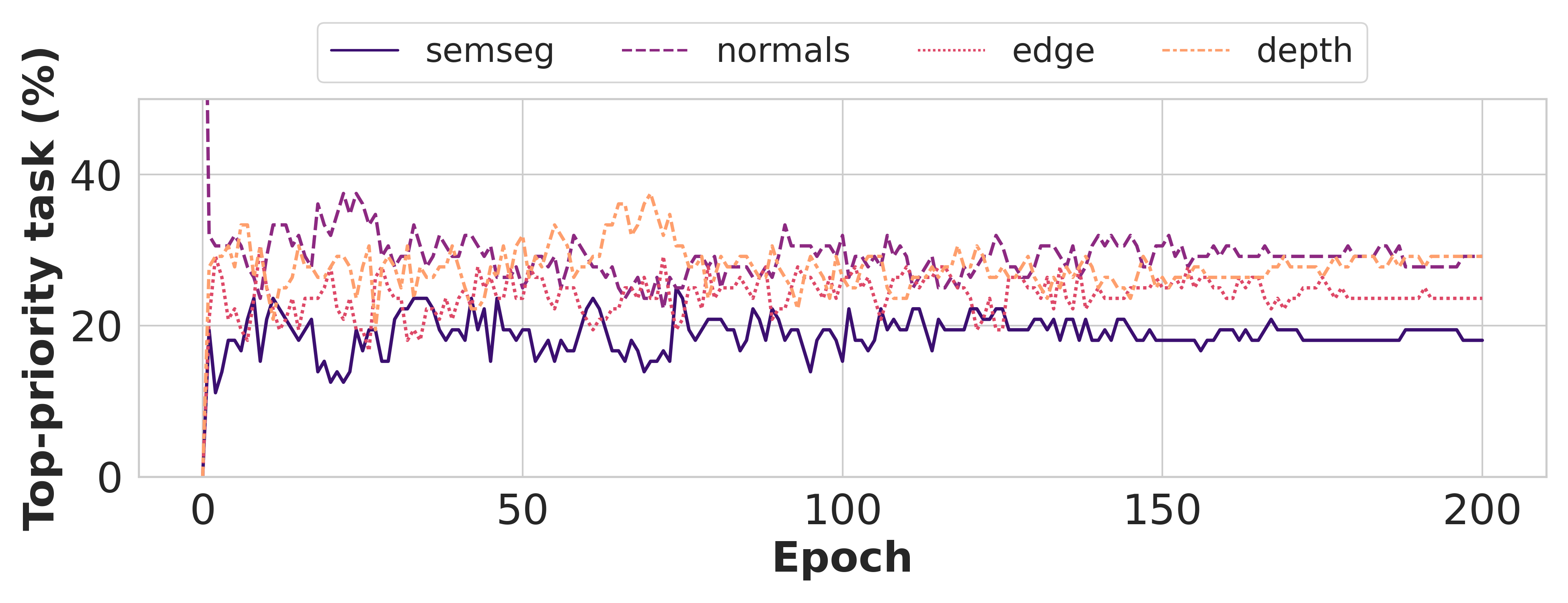}
    \caption{layer2-2-2-1}
\end{subfigure}

\caption{Visualization of the percentage of top-priority tasks over training epoch depending on the position in the network. We randomly selected several convolutional layers from the Network. The timing at which task priority stabilizes varies depending on the position of the convolutional layer. \label{fig:conn_pos}}
\end{figure}


\begin{figure}[h] \ContinuedFloat
\centering
\begin{subfigure}{.45\textwidth}
    \includegraphics[width=.99\columnwidth]{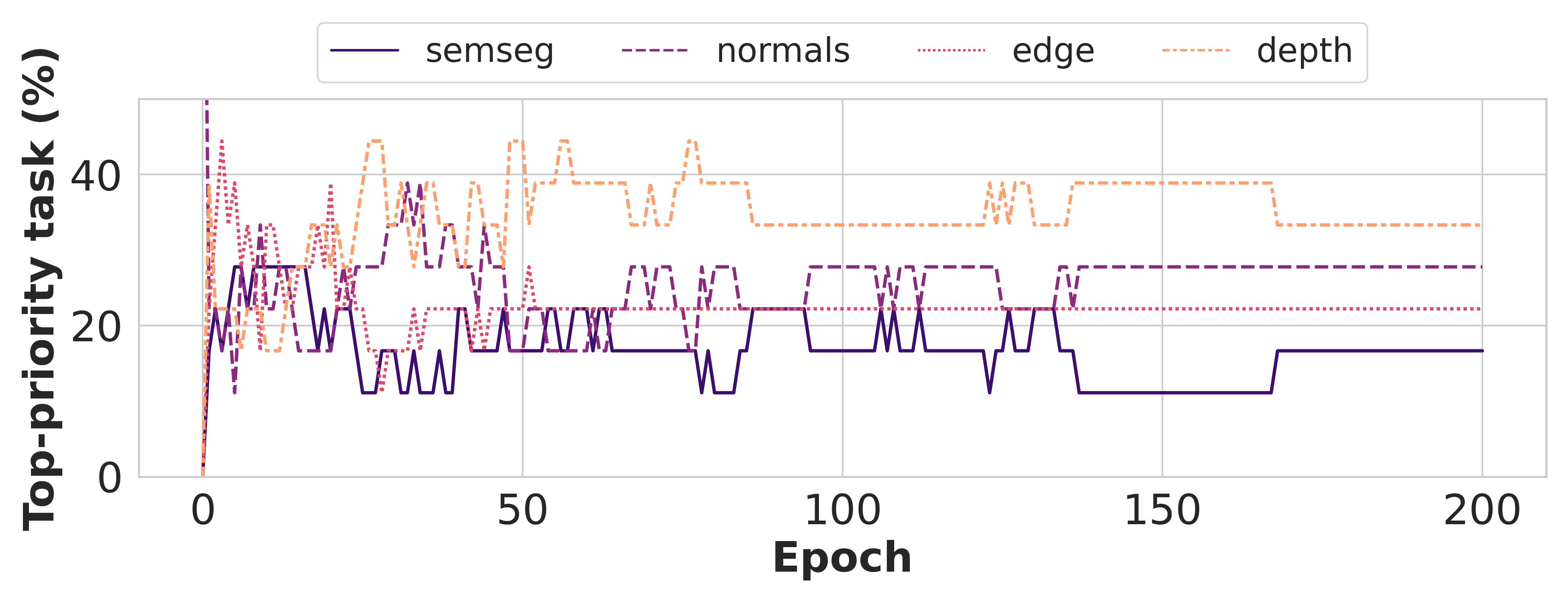}
    \caption{layer3-0-0-0}
\end{subfigure}
\begin{subfigure}{.45\textwidth}
    \includegraphics[width=.99\columnwidth]{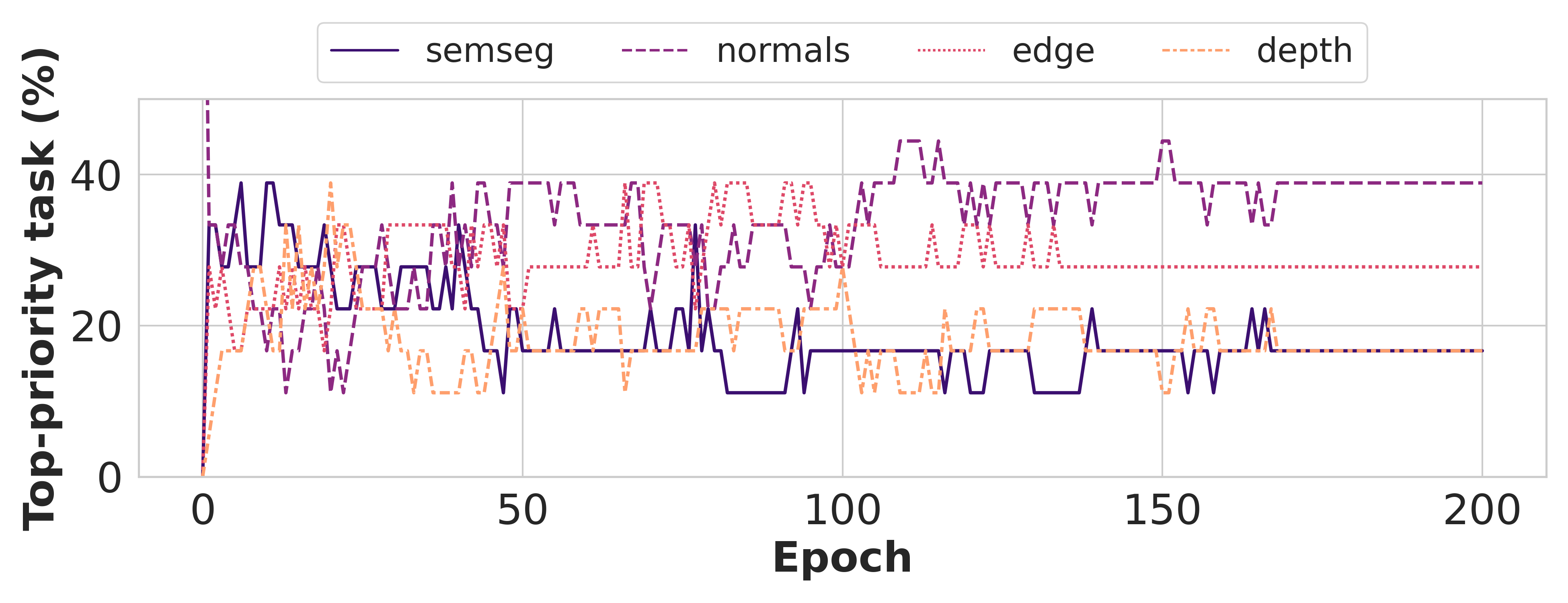}
    \caption{layer3-0-0-1}
\end{subfigure}

\begin{subfigure}{.45\textwidth}
    \includegraphics[width=.99\columnwidth]{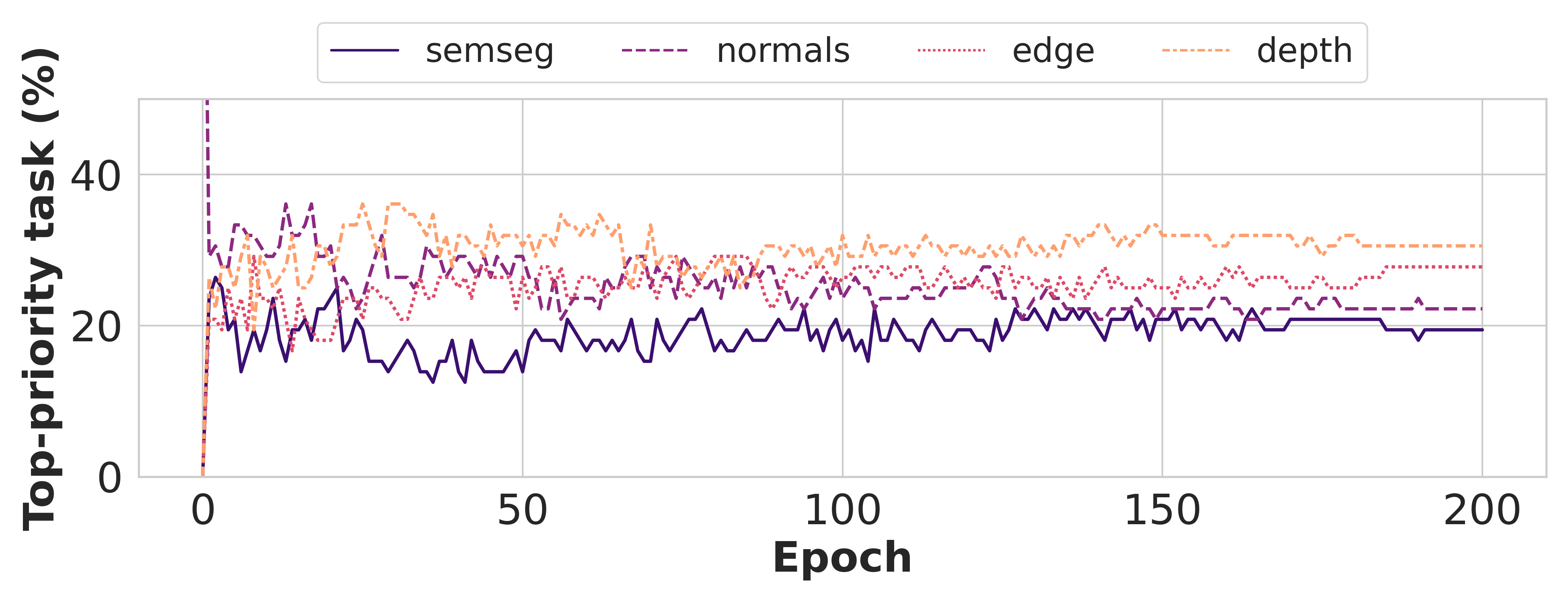}
    \caption{layer3-0-2-0}
\end{subfigure}
\begin{subfigure}{.45\textwidth}
    \includegraphics[width=.99\columnwidth]{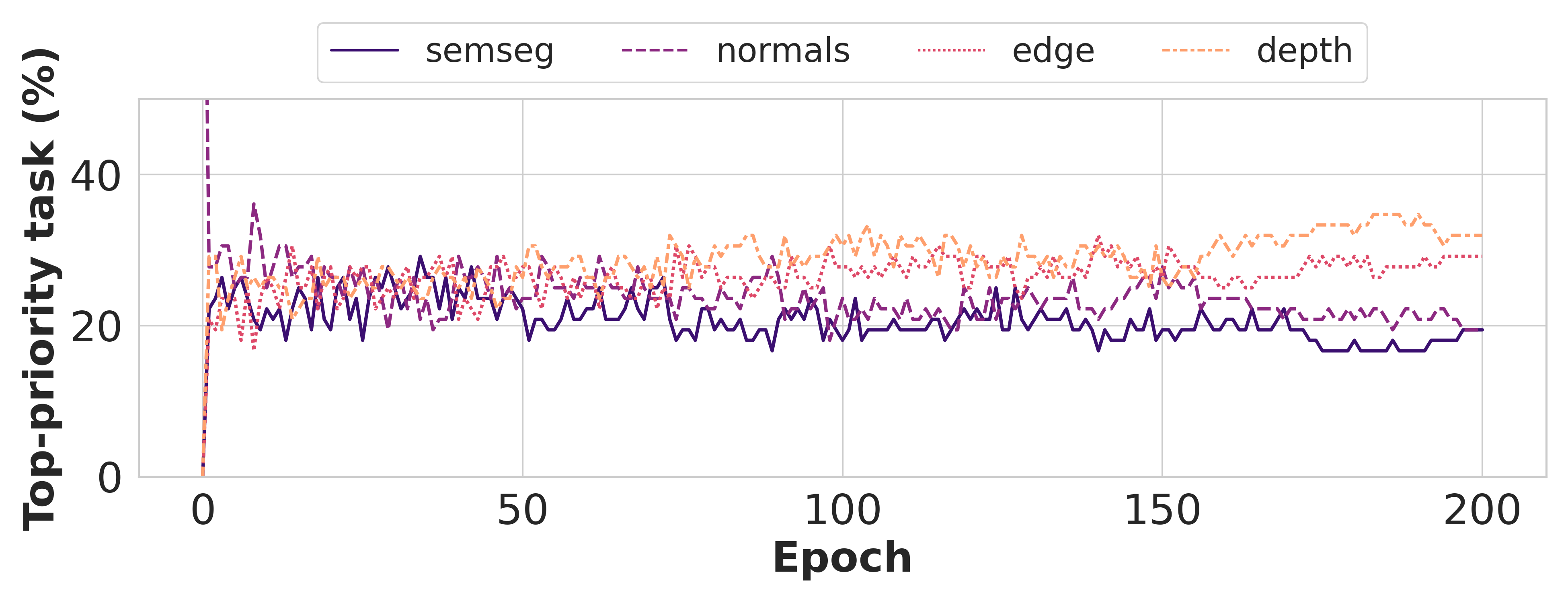}
    \caption{layer3-0-2-1}
\end{subfigure}

\begin{subfigure}{.45\textwidth}
    \includegraphics[width=.99\columnwidth]{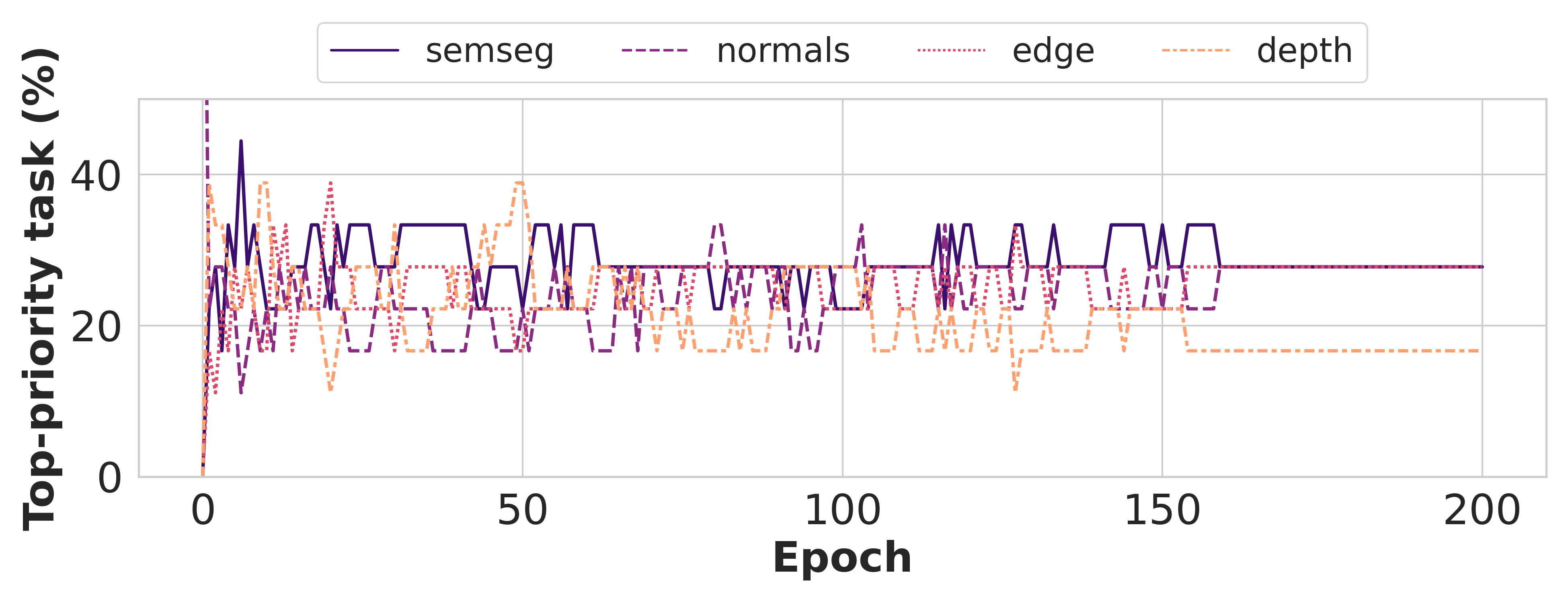}
    \caption{layer3-1-0-0}
\end{subfigure}
\begin{subfigure}{.45\textwidth}
    \includegraphics[width=.99\columnwidth]{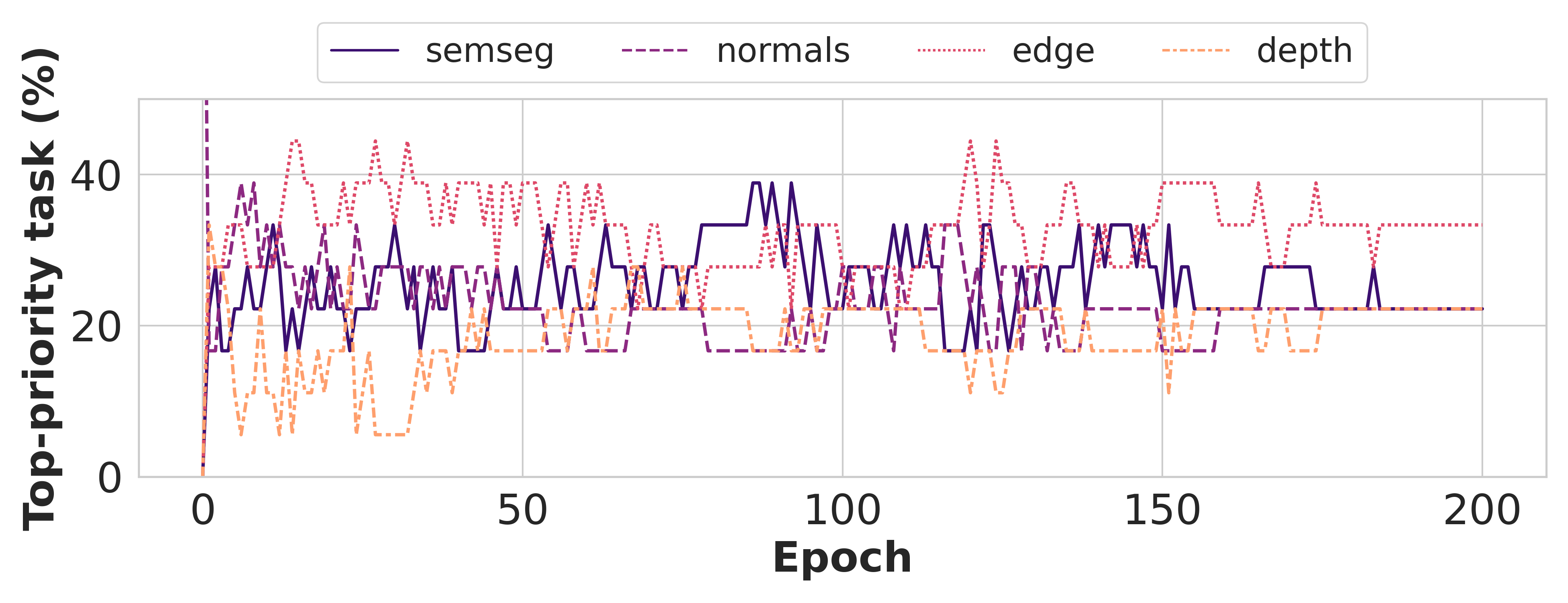}
    \caption{layer3-1-0-1}
\end{subfigure}

\begin{subfigure}{.45\textwidth}
    \includegraphics[width=.99\columnwidth]{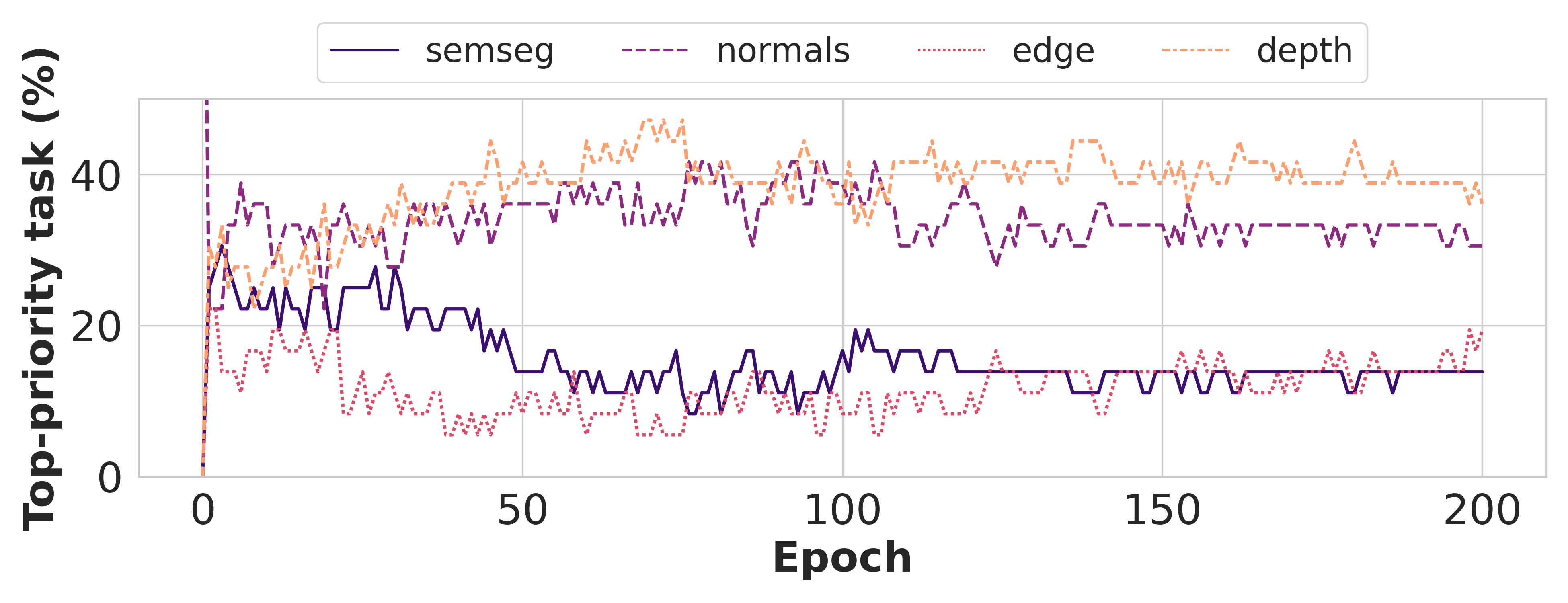}
    \caption{layer3-1-1-0}
\end{subfigure}
\begin{subfigure}{.45\textwidth}
    \includegraphics[width=.99\columnwidth]{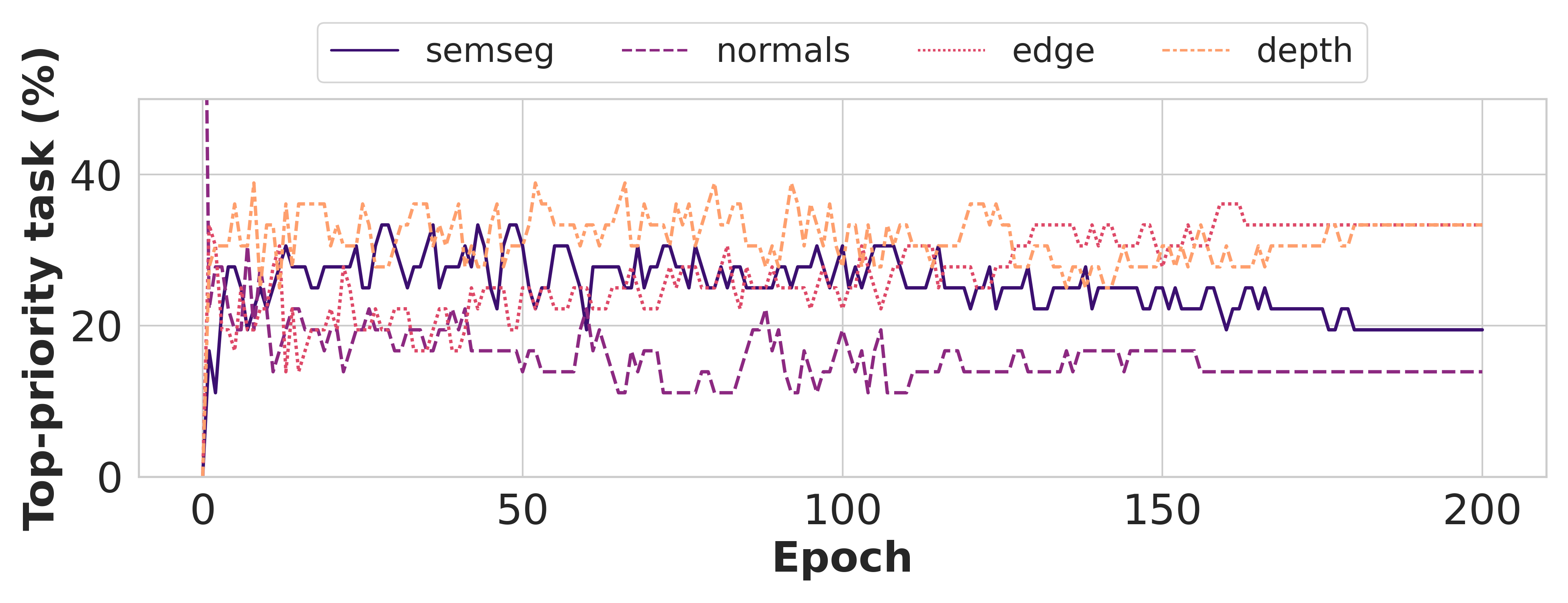}
    \caption{layer3-1-1-1}
\end{subfigure}

\begin{subfigure}{.45\textwidth}
    \includegraphics[width=.99\columnwidth]{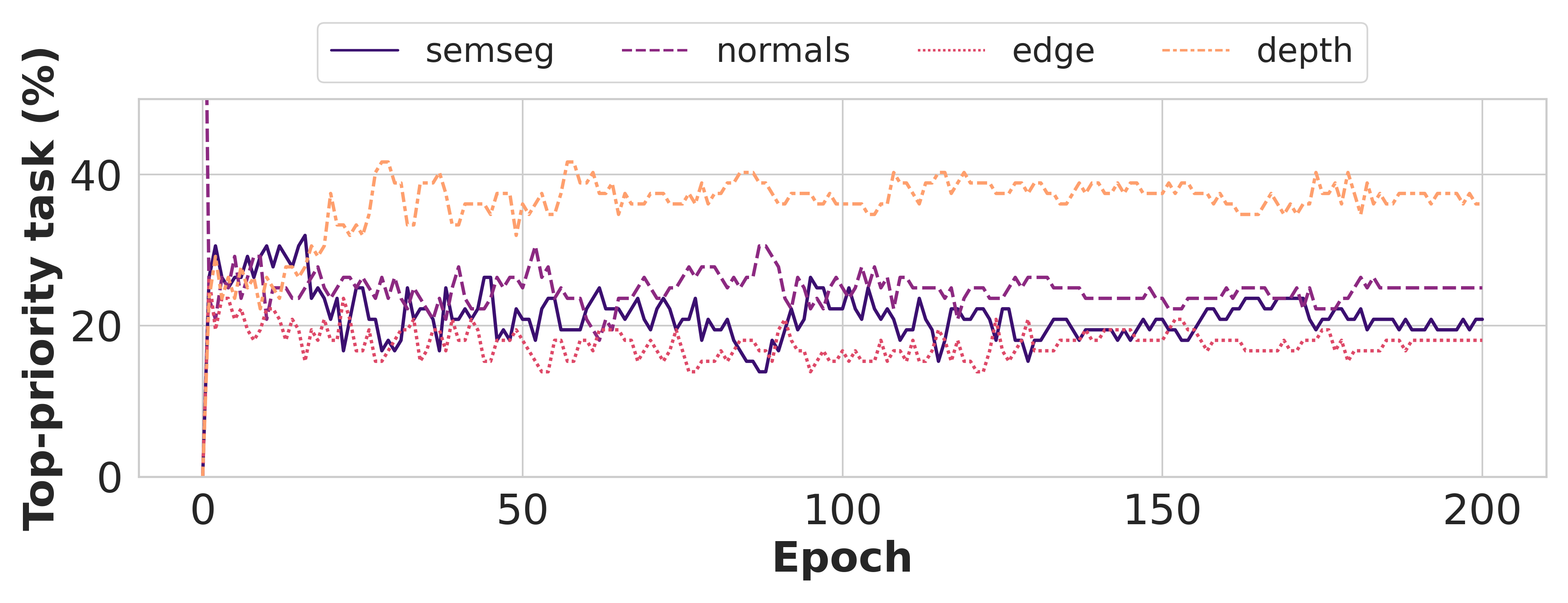}
    \caption{layer3-1-2-0}
\end{subfigure}
\begin{subfigure}{.45\textwidth}
    \includegraphics[width=.99\columnwidth]{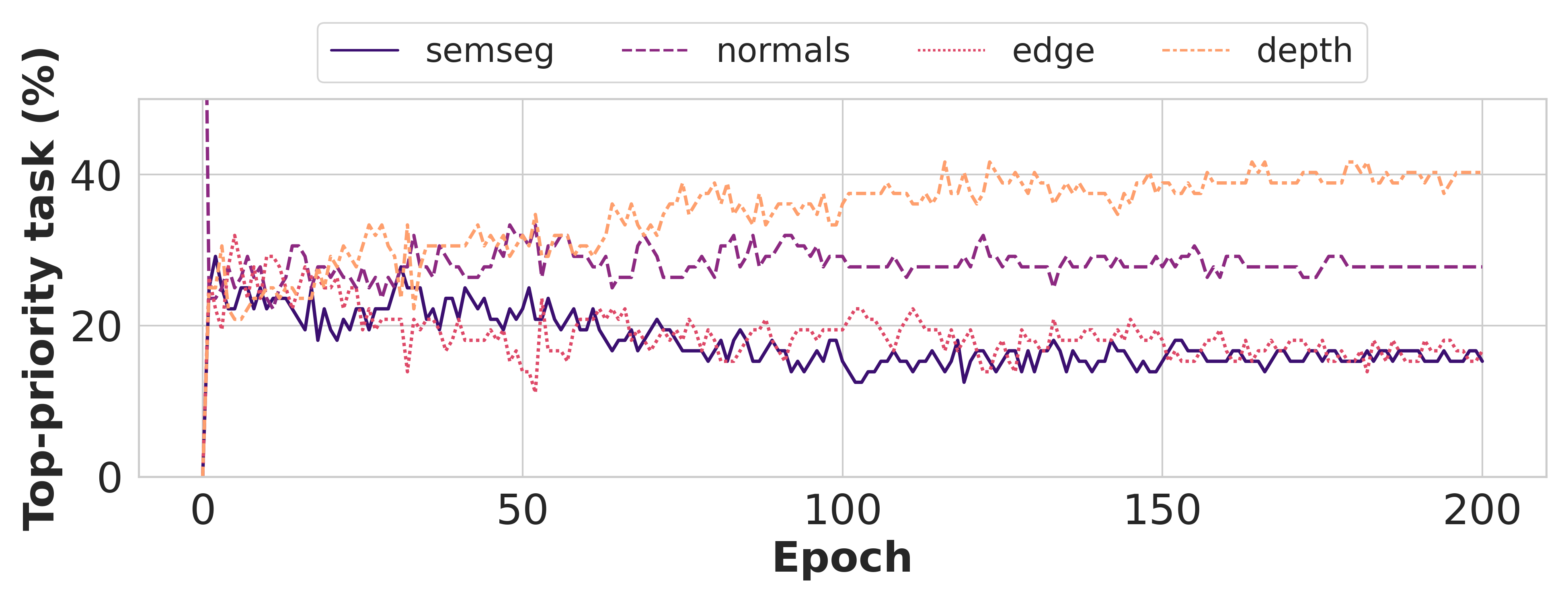}
    \caption{layer3-1-2-1}
\end{subfigure}

\begin{subfigure}{.45\textwidth}
    \includegraphics[width=.99\columnwidth]{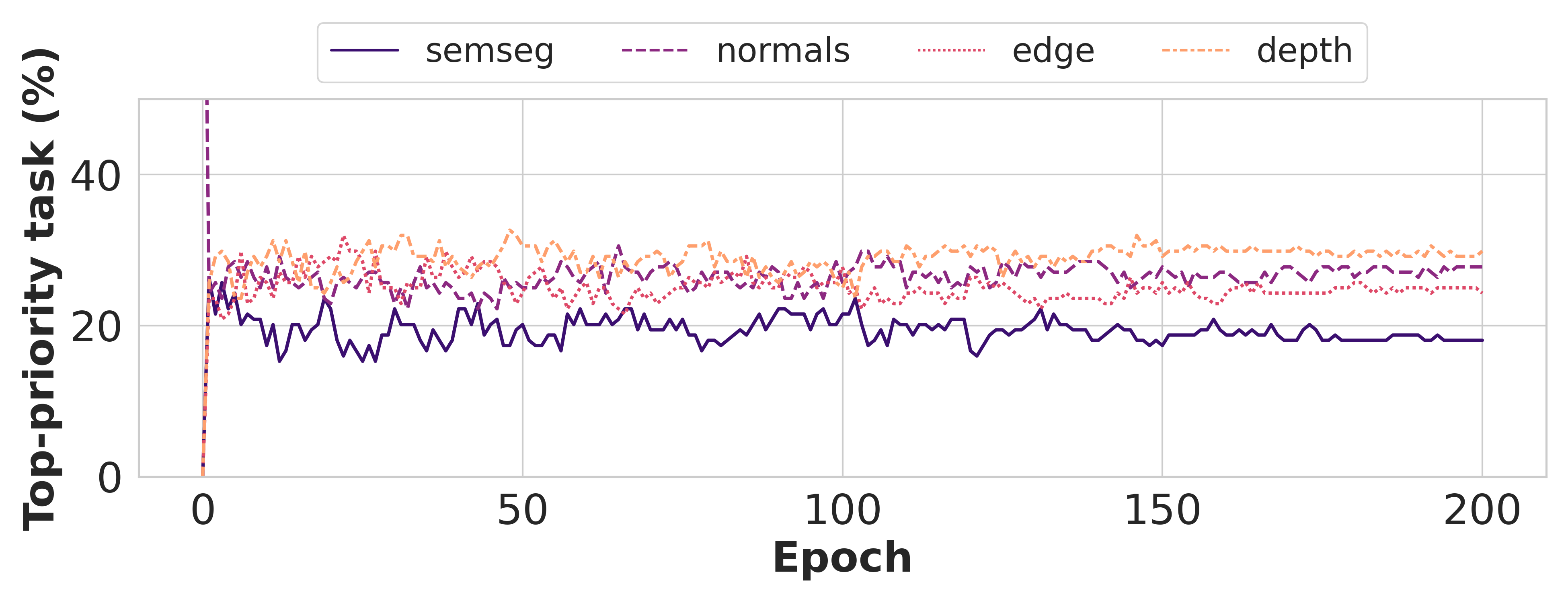}
    \caption{layer3-1-3-0}
\end{subfigure}
\begin{subfigure}{.45\textwidth}
    \includegraphics[width=.99\columnwidth]{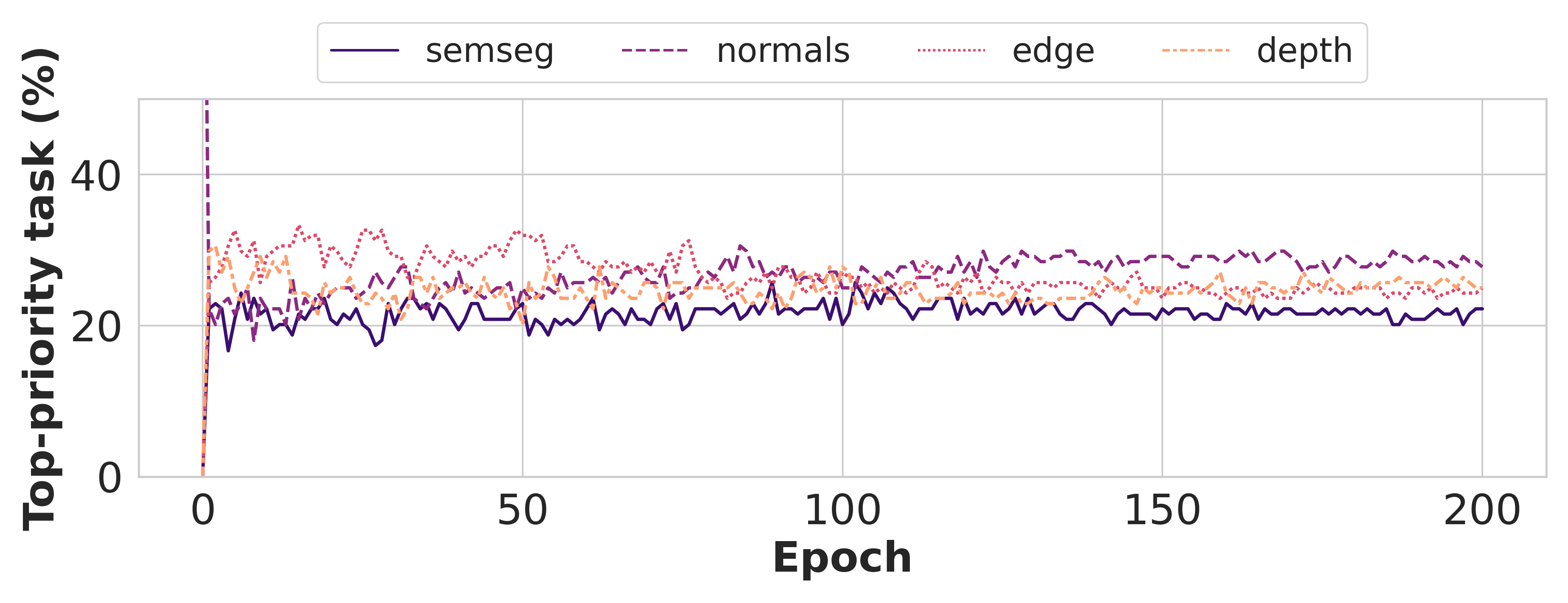}
    \caption{layer3-1-3-1}
\end{subfigure}

\caption{Visualization of the percentage of top-priority tasks over training epoch depending on the position in the network. We randomly selected several convolutional layers from the Network. The timing at which task priority stabilizes varies depending on the position of the convolutional layer.}
\end{figure}


\end{document}